\pgfplotsset{compat=1.11}
\newlength\Origarrayrulewidth
\theoremstyle{plain}
\newtheorem{theorem}{Theorem}
\newtheorem{claim}[theorem]{Claim}
\newtheorem{lemma}[theorem]{Lemma}
\newtheorem{proposition}[theorem]{Proposition}
\theoremstyle{definition}
\newenvironment{remark}
  {\pushQED{\qed}\remenv}
  {\popQED\endremenv}
\newenvironment{definition}
  {\pushQED{\qed}\defenv}
  {\popQED\enddefenv}
\newenvironment{example}
  {\pushQED{\qed}\exenv}
  {\popQED\endexenv}
\newcommand{\bbB}{\mathbb{B}}
\newcommand{\bbE}{\mathbb{E}}
\newcommand{\bbN}{\mathbb{N}}
\newcommand{\bbR}{\mathbb{R}}
\newcommand{\bbS}{\mathbb{S}}
\newcommand{\bbZ}{\mathbb{Z}}
\newcommand{\bbone}{\mathds{1}}
\newcommand{\calA}{\mathcal{A}}
\newcommand{\calB}{\mathcal{B}}
\newcommand{\calD}{\mathcal{D}}
\newcommand{\calE}{\mathcal{E}}
\newcommand{\calM}{\mathcal{M}}
\newcommand{\calN}{\mathcal{N}}
\newcommand{\calS}{\mathcal{S}}
\newcommand{\calU}{\mathcal{U}}
\newcommand{\calV}{\mathcal{V}}
\newcommand{\calW}{\mathcal{W}}
\newcommand{\calX}{\mathcal{X}}
\providecommand{\bfb}{\boldsymbol{b}}
\providecommand{\bfe}{\boldsymbol{e}}
\providecommand{\bfm}{\boldsymbol{m}}
\providecommand{\bfu}{\boldsymbol{u}}
\providecommand{\bfv}{\boldsymbol{v}}
\providecommand{\bfw}{\boldsymbol{w}}
\providecommand{\bfx}{\boldsymbol{x}}
\providecommand{\bfy}{\boldsymbol{y}}
\providecommand{\bfz}{\boldsymbol{z}}
\providecommand{\bfA}{\boldsymbol{A}}
\providecommand{\bfB}{\boldsymbol{B}}
\providecommand{\bfD}{\boldsymbol{D}}
\providecommand{\bfI}{\boldsymbol{I}}
\providecommand{\bfP}{\boldsymbol{P}}
\providecommand{\bfS}{\boldsymbol{S}}
\providecommand{\bfU}{\boldsymbol{U}}
\providecommand{\bfV}{\boldsymbol{V}}
\providecommand{\bfW}{\boldsymbol{W}}
\providecommand{\bfX}{\boldsymbol{X}}
\providecommand{\bfZ}{\boldsymbol{Z}}
\providecommand{\bfbeta}{\boldsymbol{\beta}}
\providecommand{\bfvarepsilon}{\boldsymbol{\varepsilon}}
\providecommand{\bftheta}{\boldsymbol{\theta}}
\providecommand{\bfmu}{\boldsymbol{\mu}}
\providecommand{\bftau}{\boldsymbol{\tau}}
\providecommand{\bfSigma}{\boldsymbol{\Sigma}}
\providecommand{\bfone}{\boldsymbol{1}}
\providecommand{\bfzero}{\boldsymbol{0}}
\newcommand{\diff}{\,\mathrm{d}}
\newcommand{\quot}[1]{\enquote{#1}}
\newcommand{\equalDef}{\coloneqq}
\newcommand{\defEqual}{\eqqcolon}
\DeclareMathOperator{\id}{id}
\DeclareMathOperator{\sgn}{sgn}
\DeclareMathOperator{\tr}{tr}
\DeclareMathOperator{\Span}{Span}
\DeclareMathOperator{\Var}{Var}
\DeclareMathOperator{\Cov}{Cov}
\DeclareMathOperator{\rank}{rank}
\DeclareMathOperator{\dist}{dist}
\DeclareMathOperator{\sigmoid}{sigmoid}
\DeclareMathOperator{\softplus}{softplus}
\DeclareMathOperator{\diag}{diag}
\newcommand{\ovl}{\overline}
\newcommand{\matleq}{\preceq}
\newcommand{\matgeq}{\succeq}
\newcommand{\matless}{\prec}
\newcommand{\matgr}{\succ}
\newcommand{\Enoise}{\calE_{\mathrm{Noise}}}
\newcommand{\Xex}{\widetilde{\bfX}}
\newcommand{\thetaex}{\widetilde{\bftheta}}
\setlist[enumerate]{nosep}
\setlist[itemize]{nosep}
\newcommand\ackname{Acknowledgements}
   \newenvironment{acknowledgements}{%
       \titlepage
       \null\vfil
       \@beginparpenalty\@lowpenalty
       \begin{center}%
         \bfseries \ackname
         \@endparpenalty\@M
       \end{center}}%
      {\par\vfil\null\endtitlepage}
\newcommand{\acks}[1]{\subsubsection*{Acknowledgments}
#1}
\newenvironment{appendixenv}{%
\clearpage
\appendix
\numberwithin{theorem}{section}
\numberwithin{lemma}{section}
\numberwithin{proposition}{section}
\numberwithin{exenv}{section}
\numberwithin{remenv}{section}
\numberwithin{defenv}{section}

\counterwithin{figure}{section}
\counterwithin{table}{section}}{}
\newcommand{\ifnotinthesis}[1]{#1}
\newcommand{\ifinthesis}[1]{}
\title{On the Universality of the Double Descent Peak in Ridgeless Regression}
\author{David Holzmüller \\
University of Stuttgart \\
Faculty of Mathematics and Physics \\
Institute for Stochastics and Applications\\
\texttt{david.holzmueller@mathematik.uni-stuttgart.de}
}
\begin{document}

\maketitle

\begin{abstract}
We prove a non-asymptotic distribution-independent lower bound for the expected mean squared generalization error caused by label noise in ridgeless linear regression. Our lower bound generalizes a similar known result to the overparameterized (interpolating) regime. In contrast to most previous works, our analysis applies to a broad class of input distributions with almost surely full-rank feature matrices, which allows us to cover various types of deterministic or random feature maps. Our lower bound is asymptotically sharp and implies that in the presence of label noise, ridgeless linear regression does not perform well around the interpolation threshold for any of these feature maps. We analyze the imposed assumptions in detail and provide a theory for analytic (random) feature maps. Using this theory, we can show that our assumptions are satisfied for input distributions with a (Lebesgue) density and feature maps given by random deep neural networks with analytic activation functions like sigmoid, tanh, softplus, or GELU. As further examples, we show that feature maps from random Fourier features and polynomial kernels also satisfy our assumptions. We complement our theory with further experimental and analytic results.
\end{abstract}

\section{Introduction} \label{sec:introduction}

Seeking a better understanding of the successes of deep learning, \cite{zhang_understanding_2017} pointed out that deep neural networks can achieve very good performance despite being able to fit random noise, which sparked the interest of many researchers in studying the performance of interpolating learning methods. %
\cite{belkin_understand_2018} made a similar observation for kernel methods and showed that classical generalization bounds are unable to explain this phenomenon. \cite{belkin_reconciling_2019} observed a \quot{double descent} phenomenon in various learning models, where the test error first decreases with increasing model complexity, then increases towards the \quot{interpolation threshold} where the model is first able to fit the training data perfectly, and then decreases again in the \quot{overparameterized} regime where the model capacity is larger than the training set. This phenomenon has also been discovered in several other works \citep{bos_dynamics_1997, advani_high-dimensional_2020, neal_modern_2019, spigler_jamming_2019}. \cite{nakkiran_deep_2021} performed a large empirical study on deep neural networks and found that double descent can not only occur as a function of model capacity, but also as a function of the number of training epochs or as a function of the number of training samples.

Theoretical investigations of the double descent phenomenon have mostly focused on specific unregularized (\quot{ridgeless}) or weakly regularized linear regression models. These linear models can be described via i.i.d.\ samples $(\bfx_1, y_1), \hdots, (\bfx_n, y_n) \in \bbR^d$, where the covariates $\bfx_i$ are mapped to feature representations $\bfz_i = \phi(\bfx_i) \in \bbR^p$ via a (potentially random) feature map $\phi$, and (ridgeless) linear regression is then performed on the transformed samples $(\bfz_i, y_i)$. While linear regression with random features can be understood as a simplified model of fully trained neural networks, it is also interesting in its own right: For example, random Fourier features \citep{rahimi_random_2008} and random neural network features \citep[see e.g.][]{cao_review_2018, scardapane_randomness_2017} have gained a notable amount of attention. %

Unfortunately, existing theoretical investigations of double descent are usually limited in one or more of the following ways:
\begin{enumerate}[(1)]
\item They assume that the $\bfz_i$ (or a linear transformation thereof) have (centered) i.i.d.\ components. This assumption is made by \cite{hastie_surprises_2022}, while \cite{advani_high-dimensional_2020} and \cite{belkin_two_2020} even assume that the $\bfz_i$ follow a Gaussian distribution. While the assumption of i.i.d.\ components facilitates the application of some random matrix theory results, it excludes most feature maps: For feature maps $\phi$ with $d < p$, the $\bfz_i$ will usually be concentrated on a $d$-dimensional submanifold of $\bbR^p$, and will therefore usually \emph{not} have i.i.d.\ components.
\item They assume a (shallow) random feature model with a fixed distribution of the $\bfx_i$, e.g.\ an isotropic Gaussian distribution or a uniform distribution on a sphere. Examples of this are the single-layer random neural network feature models by \cite{hastie_surprises_2022} in the unregularized case and by \cite{mei_generalization_2022, dascoli_double_2020} in the regularized case. A simple Fourier model with $d=1$ has been studied by \cite{belkin_two_2020}. While these analyses provide insights for some practically relevant random feature models, the assumptions on the input distribution prevent them from applying to real-world data.
\item Their analysis only applies in a high-dimensional limit where $n, p \to \infty$ and $n/p \to \gamma$, where $\gamma \in (0, \infty)$ is a constant. This applies to all works mentioned in (1) and (2) except the model by \cite{belkin_two_2020} where the $\bfz_i$ follow a standard Gaussian distribution. %
\end{enumerate}

In this paper, we provide an analysis under significantly weaker assumptions. We introduce the basic setting of our paper in \Cref{sec:setting} and \Cref{sec:lin_reg}. Our main contributions are:
\begin{itemize}
\item In \Cref{sec:double_descent}, we show a non-asymptotic distribution-independent lower bound for the expected excess risk of ridgeless linear regression with (random) features. While the underparameterized bound is adapted from a minimax lower bound in \cite{mourtada_exact_2022}, the overparameterized bound is new and perfectly complements the underparameterized version. The obtained general lower bound relies on significantly weaker assumptions than most previous works and shows that there is only limited potential to reduce the sensitivity of unregularized linear models to label noise via engineering better feature maps.
\item In \Cref{sec:assumptions}, we show that our lower bound applies to a large class of input distributions and feature maps including random deep neural networks, random Fourier features, and polynomial kernels. This analysis is also relevant for related work where similar assumptions are not investigated \citep[e.g.][]{mourtada_exact_2022, muthukumar_harmless_2020}. For random deep neural networks, our result requires weaker assumptions than a related result by \cite{nguyen_loss_2017}.
\item In \Cref{sec:lower_bound_quality} and \Cref{sec:experiments}, we compare our lower bound to new theoretical and experimental results for specific examples, including random neural network feature maps as well as finite-width Neural Tangent Kernels \citep{jacot_neural_2018}. We also show that our lower bound is asymptotically sharp in the limit $n, p \to \infty$.
\end{itemize}

Similar to this paper, \cite{muthukumar_harmless_2020} study the \quot{fundamental price of interpolation} in the overparameterized regime, providing a probabilistic lower bound for the generalization error under the assumption of subgaussian features or (suitably) bounded features. We explain the difference to our lower bound in detail in \Cref{sec:muthukumar}, showing that our overparameterized lower bound for the expected generalization error requires significantly weaker assumptions, that it is uniform across feature maps, and that it yields a more extreme interpolation peak. 

Our lower bound also applies to a large class of kernels if they can be represented using a feature map with finite-dimensional feature space, i.e.\ $p < \infty$. For ridgeless regression with certain classes of kernels, lower or upper bounds have been derived \citep{liang_just_2020, rakhlin_consistency_2019, liang_multiple_2020}. However, as explained in more detail in \Cref{sec:ridgeless_kernel_regression}, these analyses impose restrictions on the kernels that allow them to ignore \quot{double descent} type phenomena in the feature space dimension $p$.

Beyond Double Descent, a series of papers have studied \quot{Multiple Descent} phenomena theoretically and empirically, both with respect to the number of parameters $p$ and the input dimension $d$. \cite{adlam_neural_2020} and \cite{dascoli_triple_2020} theoretically investigate Triple Descent phenomena. \cite{nakkiran_optimal_2021} argue that Double Descent can be mitigated by optimal regularization. They also empirically observe a form of Triple Descent in an unregularized model. \cite{liang_multiple_2020} prove an upper bound exhibiting infinitely many peaks and empirically observe Multiple Descent.  \cite{chen_multiple_2021} show that in ridgeless linear regression, the feature distributions can be designed to control the locations of ascents and descents in the double descent curve for a \quot{dimension-normalized} noise-induced generalization error. Our lower bound provides a fundamental limit to this \quot{designability} of the generalization curve for methods that can interpolate with probability one in the overparameterized regime.

Proofs of our statements can be found in the appendix. We provide code to reproduce all of our experimental results at 
\begin{IEEEeqnarray*}{+rCl+x*}
\text{\url{https://github.com/dholzmueller/universal_double_descent}}
\end{IEEEeqnarray*}
and, together with the computed data, at \url{https://doi.org/10.18419/darus-1771}.

\section{Basic Setting and Notation} \label{sec:setting}

Following \cite{gyorfi_distribution-free_2002}, we consider the scenario where the samples $(\bfx_i, y_i)$ of a data set $D = ((\bfx_1, y_1), \hdots, (\bfx_n, y_n)) \in (\bbR^d \times \bbR)^n$ are sampled independently from a probability distribution $P$ on $\bbR^d \times \bbR$, i.e.\ $D \sim P^n$.\footnote{Although many of our theorems apply to general domains $\bfx_i \in \calX$ and not just $\calX = \bbR^d$, we set $\calX = \bbR^d$ for notational simplicity. We require $\calX = \bbR^d$ whenever we assume that the distribution of the $\bfx_i$ has a Lebesgue density or work with analytic feature maps.} We define
\begin{IEEEeqnarray*}{+rCl+x*}
\bfX \equalDef \begin{pmatrix}
\bfx_1^\top \\
\vdots \\
\bfx_n^\top
\end{pmatrix} \in \bbR^{n \times d}, \qquad \bfy \equalDef \begin{pmatrix}
y_1 \\ \vdots \\ y_n
\end{pmatrix} \in \bbR^n~.
\end{IEEEeqnarray*}
We also consider random variables $(\bfx, y) \sim P$ that are independent of $D$ and denote the distribution of $\bfx$ by $P_X$. The (least squares) \emph{population risk} of a function $f: \bbR^d \to \bbR$ is defined as
\begin{IEEEeqnarray*}{+rCl+x*}
R_P(f) \equalDef \bbE_{\bfx, y} (y - f(\bfx))^2~.
\end{IEEEeqnarray*}
We assume $\bbE y^2 < \infty$. Then, $R_P$ is minimized by the target function $f_P^*$ given by
\begin{IEEEeqnarray*}{+rCl+x*}
f_P^*(\bfx) = \bbE(y|\bfx)~,
\end{IEEEeqnarray*}
we have $R_P(f_P^*) < \infty$, and the \emph{excess risk} (a.k.a.\ generalization error) of a function $f$ is
\begin{IEEEeqnarray*}{+rCl+x*}
R_P(f) - R_P(f_P^*) = \bbE_{\bfx} (f(\bfx) - f_P^*(\bfx))^2~.
\end{IEEEeqnarray*}

\paragraph{Notation} For two symmetric matrices, we write $\bfA \matgeq \bfB$ if $\bfA - \bfB$ is positive semidefinite and $\bfA \matgr \bfB$ if $\bfA - \bfB$ is positive definite. For a symmetric matrix $\bfS \in \bbR^{n \times n}$, we let $\lambda_1(\bfS) \geq \hdots \geq \lambda_n(\bfS)$ be its eigenvalues in descending order. We denote the trace of $\bfA$ by $\tr(\bfA)$ and the Moore-Penrose pseudoinverse of $\bfA$ by $\bfA^+$. For $\phi: \bbR^d \to \bbR^p$ and $\bfX \in \bbR^{n \times d}$, we let $\phi(\bfX) \in \bbR^{n \times p}$ be the matrix with $\phi$ applied to each of the \emph{rows} of $\bfX$ individually. For a set $\calA$, we denote its indicator function by $\bbone_{\calA}$. For a random variable $\bfx$, we say that $\bfx$ has a Lebesgue density if $P_X$ can be represented by a probability density function (w.r.t.\ the Lebesgue measure). We say that $\bfx$ is nonatomic if for all possible values $\widetilde{\bfx}$, $P(\bfx = \widetilde{\bfx}) = 0$.\footnote{For Borel measures as considered here, this is equivalent to the usual definition of non-atomic measures, see section IV in \cite{knowles_existence_1967}.} We denote the uniform distribution on a set $\calA$, e.g.\ the unit sphere $\bbS^{p-1} \subseteq \bbR^p$, by $\calU(\calA)$. We denote the normal (Gaussian) distribution with mean $\bfmu$ and covariance $\bfSigma$ by $\calN(\bfmu, \bfSigma)$. For $n \in \bbN$, we define $[n] \equalDef \{1, \hdots, n\}$. 

We review relevant matrix facts, e.g.\ concerning the Moore-Penrose pseudoinverse, in \Cref{sec:matrix_algebra}.

\section{Linear Regression With (Random) Features} \label{sec:lin_reg}

The most general setting that we will consider in this paper is ridgeless linear regression in random features: Given a random variable $\bftheta$ that is independent of the data set $D$ and an associated random feature map $\phi_{\bftheta}: \bbR^d \to \bbR^p$, we define the estimator
\begin{IEEEeqnarray*}{+rCl+x*}
f_{\bfX, \bfy, \bftheta}(\bfx) & \equalDef & \phi_{\bftheta}(\bfx)^\top \phi_{\bftheta}(\bfX)^+ \bfy~,
\end{IEEEeqnarray*}
which simply performs unregularized linear regression with random features. As a special case, the feature map $\phi_{\bftheta}$ may be deterministic, in which case we drop the index $\bftheta$. An even more specialized case is ordinary linear regression, where $d=p$ and $\phi_{\bftheta} = \id$, yielding $f_{\bfX, \bfy}(\bfx) = \bfx^\top \bfX^+ \bfy$.

As described in \cite{hastie_surprises_2022}, the ridgeless linear regression parameter $\widehat{\bfbeta} \equalDef \phi_{\bftheta}(\bfX)^+ \bfy$ 
\begin{itemize}
\item has minimal Euclidean norm among all parameters $\bfbeta$ minimizing $\|\phi_{\bftheta}(\bfX) \bfbeta - \bfy\|_2^2$, 
\item is the limit of gradient descent with sufficiently small step size on $L(\bfbeta) \equalDef \|\phi_{\bftheta}(\bfX) \bfbeta - \bfy\|_2^2$ with initialization $\bfbeta^{(0)} \equalDef \bfzero$, and
\item is the limit of ridge regression with regularization $\lambda > 0$ for $\lambda \searrow 0$: 
\begin{IEEEeqnarray}{+rCl+x*}
\widehat{\bfbeta} = \lim_{\lambda \searrow 0} \phi_{\bftheta}(\bfX)^\top (\phi_{\bftheta}(\bfX) \phi_{\bftheta}(\bfX)^\top + \lambda \bfI_n)^{-1} \bfy~. \label{eq:ridge_limit}
\end{IEEEeqnarray}
\end{itemize}

For a fixed feature map $\phi$, the kernel trick provides a correspondence between ridgeless linear regression with $\phi$ and ridgeless kernel regression with the kernel
$k(\bfx, \tilde{\bfx}) \equalDef \phi(\bfx)^\top \phi(\tilde{\bfx})$
via
\begin{IEEEeqnarray}{+rCl+x*}
f_{\bfX, \bfy}(\bfx) = \phi(\bfx)^\top \phi(\bfX)^+ \bfy & = & \phi(\bfx)^\top \phi(\bfX)^\top (\phi(\bfX) \phi(\bfX)^\top)^+ \bfy = k(\bfx, \bfX) k(\bfX, \bfX)^+ \bfy~, \qquad \label{eq:kernel_trick}
\end{IEEEeqnarray}
where
\begin{IEEEeqnarray*}{+rCl+x*}
k(\bfx, \bfX) & \equalDef & \begin{pmatrix}
k(\bfx, \bfx_1) \\ \vdots \\ k(\bfx, \bfx_n)
\end{pmatrix}, \qquad k(\bfX, \bfX) \equalDef \begin{pmatrix}
k(\bfx_1, \bfx_1) & \hdots & k(\bfx_1, \bfx_n) \\
\vdots & \ddots & \vdots \\
k(\bfx_n, \bfx_1) & \hdots & k(\bfx_n, \bfx_n)
\end{pmatrix}~.
\end{IEEEeqnarray*}

\section{A Lower Bound} \label{sec:double_descent}

In this section, we state our main theorem, which provides a non-asymptotic distribution-independent lower bound on the expected excess risk.

The expected excess risk $\bbE_{\bfX, \bfy, \bftheta} R_P(f_{\bfX, \bfy, \bftheta}) - R_P(f_P^*)$ can be decomposed into several different contributions \citep[see e.g.][]{dascoli_double_2020}. In the following, we will focus on the contribution of label noise to the expected excess risk for the estimators $f_{\bfX, \bfy, \bftheta}$ considered in \Cref{sec:lin_reg}. Using a bias-variance decomposition with respect to $\bfy$, it is not hard to show that the label-noise-induced error provides a lower bound for the expected excess risk:
\begin{IEEEeqnarray*}{+rCl+x*}
\Enoise & \equalDef & \bbE_{\bfX, \bfy, \bftheta, \bfx} \left(f_{\bfX, \bfy, \bftheta}(\bfx) - \bbE_{\bfy|\bfX} f_{\bfX, \bfy, \bftheta}(\bfx)\right)^2 \\
& \leq & \bbE_{\bfX, \bftheta, \bfx} \left[\bbE_{\bfy|\bfX} \left(f_{\bfX, \bfy, \bftheta}(\bfx) - \bbE_{\bfy|\bfX} f_{\bfX, \bfy, \bftheta}(\bfx)\right)^2 + \left(\bbE_{\bfy|\bfX} f_{\bfX, \bfy, \bftheta}(\bfx) - f_P^*(\bfx)\right)^2\right] \\
& = & \bbE_{\bfX, \bfy, \bftheta, \bfx} \left(f_{\bfX, \bfy, \bftheta}(\bfx) - f_P^*(\bfx)\right)^2 \\
& = & \bbE_{\bfX, \bfy, \bftheta} (R_P(f_{\bfX, \bfy, \bftheta}) - R_P(f_P^*))~. \IEEEyesnumber \label{eq:dd:bias_variance_decomp}
\end{IEEEeqnarray*}
For linear models as considered here, it is not hard to see that $\Enoise$ does not depend on $f_P^*$ and is equal to the expected excess risk in the special case $f_P^* \equiv 0$. 

In the following, we first consider the setting where the feature map $\phi$ is deterministic. We will consider linear regression on $\bfz \equalDef \phi(\bfx)$ and $\bfZ \equalDef \phi(\bfX)$ and formulate our assumptions directly w.r.t.\ the distribution $P_Z$ of $\bfz$, hiding the dependence on the feature map $\phi$. While the distribution $P_X$ of $\bfx$ is usually fixed and determined by the problem, the distribution $P_Z$ can be actively influenced by choosing a suitable feature map $\phi$. We will analyze in \Cref{sec:assumptions} how the assumptions on $P_Z$ can be translated back to assumptions on $P_X$ and assumptions on $\phi$. 

\begin{remark} \label{rem:fm_distributions}
For typical feature maps, we have $p > d$, $P_Z$ is concentrated on a $d$-dimensional submanifold of $\bbR^p$ and the components of $\bfz$ are not independent. A simple example (cf. \Cref{prop:poly_kernel}) is a polynomial feature map $\phi: \bbR^1 \to \bbR^p, x \mapsto (1, x, x^2, \hdots, x^{p-1})$. The imposed assumptions on $P_Z$ should hence allow for such distributions on submanifolds and \emph{not} require independent components.
\end{remark}

\begin{definition} \label{def:cov}
Assuming that $\bbE \|\bfz\|_2^2 < \infty$, i.e.\ (MOM) in \Cref{thm:double_descent} holds, we can define the (positive semidefinite) second moment matrix
\begin{IEEEeqnarray*}{+rCl+x*}
\bfSigma & \equalDef & \bbE_{\bfz \sim P_Z} \left[\bfz \bfz^\top\right] \in \bbR^{p \times p}~.
\end{IEEEeqnarray*}
If $\bbE \bfz = 0$, $\bfSigma$ is also the covariance matrix of $\bfz$. If $\bfSigma$ is invertible, i.e.\ (COV) in \Cref{thm:double_descent} holds, the rows $\bfw_i \equalDef \bfSigma^{-1/2} \bfx_i$ of the \quot{whitened} data matrix $\bfW \equalDef \bfZ \bfSigma^{-1/2}$ satisfy $\bbE \bfw_i \bfw_i^\top = \bfI_p$.
\end{definition}

With these preparations, we can now state our main theorem. Its assumptions and the obtained lower bound will be discussed in \Cref{sec:assumptions} and \Cref{sec:lower_bound_quality}, respectively.

\begin{restatable}[Main result]{theorem}{mainthm} \label{thm:double_descent}
Let $n, p \geq 1$. Assume that $P$ and $\phi$ satisfy:
\begin{enumerate}[(MOM), wide=0pt]
\item[(INT)] $\bbE y^2 < \infty$ and hence $R_P(f_P^*) < \infty$,
\item[(NOI)] $\Var(y|\bfz) \geq \sigma^2$ almost surely over $\bfz$,
\item[(MOM)] $\bbE \|\bfz\|_2^2 < \infty$, i.e. $\bfSigma$ exists and is finite,
\item[(COV)] $\bfSigma$ is invertible,
\item[(FRK)] $\bfZ \in \bbR^{n \times p}$ almost surely has full rank, i.e.\ $\rank \bfZ = \min \{n, p\}$.
\end{enumerate}
Then, for the ridgeless linear regression estimator $f_{\bfZ, \bfy}(\bfz) = \bfz^\top \bfZ^+ \bfy$, the following holds:
\begin{IEEEeqnarray*}{+lrClClCl+x*}
\text{If $p \geq n$,} \qquad & \Enoise & \stackrel{(\mathrm{I})}{\geq} & \sigma^2 \bbE_{\bfZ} \tr((\bfZ^+)^\top \bfSigma \bfZ^+) & \stackrel{(\mathrm{II})}{\geq} & \sigma^2 \bbE_{\bfZ} \tr((\bfW\bfW^\top)^{-1}) & \stackrel{(\mathrm{IV})}{\geq} & \sigma^2 \frac{n}{p+1-n}~. \\
\text{If $p \leq n$,} \qquad & \Enoise & \stackrel{(\mathrm{I})}{\geq} & \sigma^2 \bbE_{\bfZ} \tr((\bfZ^+)^\top \bfSigma \bfZ^+) & \stackrel{(\mathrm{III})}{=} & \sigma^2 \bbE_{\bfZ} \tr((\bfW^\top \bfW)^{-1}) & \stackrel{(\mathrm{V})}{\geq} & \sigma^2 \frac{p}{n+1-p}~.
\end{IEEEeqnarray*}
Here, the matrix inverses exist almost surely in the considered cases. Moreover, we have:
\begin{itemize}%
\item If (NOI) holds with equality, then $(\mathrm{I})$ holds with equality.
\item If $n=p$ or $\bfSigma = \lambda \bfI_p$ for some $\lambda > 0$, then $(\mathrm{II})$ holds with equality.
\end{itemize}
\end{restatable}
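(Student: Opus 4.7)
The proof breaks into four ingredients that I would tackle in the listed order: the bias-variance inequality (I), the trace identity/inequality (III)/(II), and the expected-inverse-trace lower bounds (IV)/(V).

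First, (I) follows from a bias-variance decomposition in $\bfy$. Because $f_{\bfZ,\bfy}(\bfx)=\bfz^\top\bfZ^+\bfy$ is linear in $\bfy$, its conditional variance equals $\bfz^\top\bfZ^+\Cov(\bfy\mid\bfZ)(\bfZ^+)^\top\bfz$. Sample independence together with (NOI) gives $\Cov(\bfy\mid\bfZ)\succeq\sigma^2\bfI_n$, and integrating over the test point using $\bbE\bfz\bfz^\top=\bfSigma$ (from (MOM)) collapses $\Enoise$ to $\sigma^2\bbE\tr((\bfZ^+)^\top\bfSigma\bfZ^+)$, with equality whenever (NOI) is tight.

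Next I would handle (III) and (II) using (FRK), which provides the closed forms $\bfZ^+=(\bfZ^\top\bfZ)^{-1}\bfZ^\top$ when $p\leq n$ and $\bfZ^+=\bfZ^\top(\bfZ\bfZ^\top)^{-1}$ when $p\geq n$. In the underparameterized case, cyclic invariance of the trace immediately collapses both of the expressions in (III) to $\tr((\bfZ^\top\bfZ)^{-1}\bfSigma)$, so (III) is a pure algebraic equality. In the overparameterized case, I would take an LQ-style factorisation $\bfZ=\bfL\bfQ$ obtained from the SVD, with $\bfL\in\bbR^{n\times n}$ invertible and $\bfQ\in\bbR^{n\times p}$ satisfying $\bfQ\bfQ^\top=\bfI_n$. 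Both expressions in (II) then become traces against the common positive factor $(\bfL\bfL^\top)^{-1}$, and (II) reduces to the operator inequality $\bfQ\bfSigma\bfQ^\top\succeq(\bfQ\bfSigma^{-1}\bfQ^\top)^{-1}$. I would establish this by conjugating the PSD $2\times 2$ block matrix with diagonal blocks $\bfSigma,\bfSigma^{-1}$ and off-diagonal blocks $\bfI_p$ (PSD because its Schur complement vanishes) by $\diag(\bfQ,\bfQ)$ and reading off the corresponding Schur complement. The two stated equality cases then drop out: if $n=p$ then $\bfQ$ is square orthogonal and $\bfQ\bfSigma\bfQ^\top$ and $\bfQ\bfSigma^{-1}\bfQ^\top$ are mutual inverses; if $\bfSigma=\lambda\bfI_p$ both sides of the operator inequality collapse to $\lambda\bfI_n$.

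The main obstacle is (IV)/(V), which are distribution-free lower bounds on $\bbE\tr(\bfM^{-1})$ for the whitened Gram matrix $\bfM=\bfW\bfW^\top$ (resp.\ $\bfW^\top\bfW$) in the overparameterized (resp.\ underparameterized) regime, where the rows $\bfw_i$ of $\bfW$ are i.i.d.\ and satisfy $\bbE\bfw_i\bfw_i^\top=\bfI_p$. Since naive Jensen's inequality only yields the weaker bound $\min(n,p)/\max(n,p)$, a sharper argument is needed. For (V) I would follow Mourtada (2019): writing $\bfS=\bfW^\top\bfW$, $\bfS_{-1}=\bfS-\bfw_1\bfw_1^\top$, $\beta_1=\bfw_1^\top\bfS_{-1}^{-1}\bfw_1$, the identity $\sum_i\bfw_i^\top\bfS^{-1}\bfw_i=p$ combined with exchangeability yields $\bbE\,\bfw_1^\top\bfS^{-1}\bfw_1=p/n$; Sherman--Morrison rewrites this as $\bbE\,\beta_1/(1+\beta_1)=p/n$; and concavity of $x\mapsto x/(1+x)$ together with independence of $\bfw_1$ from $\bfS_{-1}$ yields $\bbE\tr(\bfS_{-1}^{-1})=\bbE\beta_1\geq p/(n-p)$, which after relabeling $n-1\to n$ is (V). For the new overparameterized bound (IV), I would run a parallel argument after noting the pseudoinverse identity $\sum_i\bfw_i^\top(\bfW^\top\bfW)^+\bfw_i=n$ and using $\tr((\bfW\bfW^\top)^{-1})=\tr((\bfW^\top\bfW)^+)$; the technical difficulty is that $\bfS_{-1}$ is now singular in $\bbR^{p\times p}$, so the Sherman--Morrison step has to be replaced by a Moore--Penrose leave-one-out update, with (FRK) supplying the almost-sure rank guarantees along the way. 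This pseudoinverse bookkeeping in the overparameterized regime is where the analysis is genuinely new and where I expect the main technical subtleties to lie.
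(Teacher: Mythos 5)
Your plan for (I) and (III) matches the paper exactly. For (II), you take a genuinely different route: the paper writes $\bfS^{-1}=\bfW\bfA(\bfA^\top\bfA)^{-1}\bfA^\top\bfW^\top$ with $\bfA=\bfSigma^{1/2}\bfZ^\top$ and observes that the inner factor is an orthogonal projection, giving $\bfS^{-1}\preceq\bfW\bfW^\top$ and hence the trace inequality via Courant--Fischer; your conjugation of the PSD block matrix $\begin{pmatrix}\bfSigma&\bfI_p\\ \bfI_p&\bfSigma^{-1}\end{pmatrix}$ by $\diag(\bfQ,\bfQ)$ and reading off the Schur complement is a clean alternative that also correctly recovers both equality cases. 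For (V) you invert Mourtada's bookkeeping (remove a row and use concavity of $x/(1+x)$, rather than add a row and use convexity of $x/(1-x)$), which after relabeling $n-1\mapsto n$ is equivalent; note that you implicitly need $p\leq n-1$ for $\bfS_{-1}$ to be invertible, so you should either handle $p=n$ via the overparameterized bound or, as the paper does, add an independent $\bfw_{n+1}$ instead of deleting $\bfw_1$.

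The genuine gap is (IV). Your proposed starting point, the identity $\sum_i\bfw_i^\top(\bfW^\top\bfW)^+\bfw_i=n$, is trivially uninformative in the overparameterized regime: when $p\geq n$ and $\bfW$ has full row rank, every summand equals $1$ (since $\bfW(\bfW^\top\bfW)^+\bfW^\top=\bfI_n$), so exchangeability gives nothing new. Moreover, a Moore--Penrose rank-one downdate of the singular $p\times p$ matrix $\bfW^\top\bfW$ does not produce a Sherman--Morrison-type scalar identity that a Jensen argument can latch onto the way it does in the underparameterized case. The paper instead works directly with the invertible $n\times n$ Gram matrix $\bfW\bfW^\top$: block-inverting with the split $\bfW=(\bfw_1^\top;\bfW_2)$ gives
\begin{IEEEeqnarray*}{+rCl+x*}
\bigl((\bfW\bfW^\top)^{-1}\bigr)_{11} & = & \bigl(\bfw_1^\top(\bfI_p-\bfP_2)\bfw_1\bigr)^{-1}~,
\end{IEEEeqnarray*}
where $\bfP_2$ is the orthogonal projection onto the row space of $\bfW_2$. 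Because $\bfw_1$ is independent of $\bfP_2$, $\bbE\,\bfw_1^\top(\bfI_p-\bfP_2)\bfw_1=\tr(\bfI_p-\bfP_2)=p-(n-1)$, and a direct Jensen on $x\mapsto 1/x$ plus exchangeability over the $n$ diagonal entries gives $(\mathrm{IV})$. So the key idea you are missing is to stay on the $n\times n$ side where the Gram matrix is invertible, use block inversion (Schur complement) rather than a pseudoinverse update, and exploit the independence of the held-out row from the projection onto the others --- the paper even remarks that this independence structure is exactly what is \emph{not} available in the underparameterized case, which is why the two regimes require different arguments. You would also want to handle the edge cases $n=1$ (and $p=1$) separately, as the paper does.
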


For a discussion on how $\bfSigma$ influences $\Enoise$, we refer to \Cref{rem:flat_spectrum}. We can extend \Cref{thm:double_descent} to random features if it holds for almost all of the random feature maps:

\begin{restatable}[Random features]{corollary}{maincor} \label{cor:double_descent}
Let $\bftheta \sim P_{\Theta}$ be a random variable such that $\phi_{\bftheta}: \bbR^d \to \bbR^p$ is a random feature map. Consider the random features regression estimator $f_{\bfX, \bfy, \bftheta}(\bfx) = \bfz_{\bftheta}^\top \bfZ_{\bftheta}^+ \bfy$ with $\bfz_{\bftheta} \equalDef \phi_{\bftheta}(\bfx)$ and $\bfZ_{\bftheta} \equalDef \phi_{\bftheta}(\bfX)$. If for $P_{\Theta}$-almost all $\thetaex$, the assumptions of \Cref{thm:double_descent} are satisfied for $\bfz = \bfz_{\thetaex}$ and $\bfZ = \bfZ_{\thetaex}$ (with the corresponding matrix $\bfSigma = \bfSigma_{\thetaex}$), then
\begin{IEEEeqnarray*}{+rCl+x*}
\Enoise & \geq & \begin{cases}
\sigma^2 \frac{n}{p+1-n} &\text{if $p \geq n$,} \\
\sigma^2 \frac{p}{n+1-p} &\text{if $p \leq n$.}
\end{cases}
\end{IEEEeqnarray*}
\end{restatable}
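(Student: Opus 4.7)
The plan is to condition on $\bftheta$, reduce each slice to the deterministic-feature setting of \Cref{thm:double_descent}, and then average over $\bftheta$. Since $\bftheta$ is independent of the data $(D, \bfx, y)$, we have the identity of conditional expectations $\bbE_{\bfy|\bfX} = \bbE_{\bfy|\bfX,\bftheta}$, so the tower property lets me rewrite
\begin{IEEEeqnarray*}{+rCl+x*}
\Enoise & = & \bbE_{\bftheta}\left[\bbE_{\bfX, \bfy, \bfx}\left(f_{\bfX, \bfy, \bftheta}(\bfx) - \bbE_{\bfy|\bfX, \bftheta} f_{\bfX, \bfy, \bftheta}(\bfx)\right)^2\right]~.
\end{IEEEeqnarray*}
For each $\thetaex$ in the $P_\Theta$-full-measure set singled out by the hypothesis, the inner expression is precisely the noise-induced error of \Cref{thm:double_descent} for the deterministic feature map $\phi_{\thetaex}$, applied with $\bfz = \bfz_{\thetaex}$, $\bfZ = \bfZ_{\thetaex}$, and $\bfSigma = \bfSigma_{\thetaex}$.

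Since the five hypotheses (INT), (NOI), (MOM), (COV), (FRK) are assumed to hold for $P_\Theta$-almost every $\thetaex$, a pointwise application of \Cref{thm:double_descent} yields
\begin{IEEEeqnarray*}{+rCl+x*}
\bbE_{\bfX, \bfy, \bfx}\left(f_{\bfX, \bfy, \thetaex}(\bfx) - \bbE_{\bfy|\bfX} f_{\bfX, \bfy, \thetaex}(\bfx)\right)^2 & \geq & \sigma^2 \cdot \begin{cases} n/(p+1-n), & p \geq n, \\ p/(n+1-p), & p \leq n, \end{cases}
\end{IEEEeqnarray*}
for $P_\Theta$-almost every $\thetaex$. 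The right-hand side is a deterministic constant, independent of $\thetaex$ — the bound in \Cref{thm:double_descent} is uniform across all feature maps satisfying its assumptions — so integrating the pointwise inequality against $P_\Theta$ immediately gives the claimed lower bound on $\Enoise$.

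I do not foresee any substantial obstacle: the corollary is essentially a bookkeeping lift of \Cref{thm:double_descent}, relying only on (i) the independence of $\bftheta$ from $(D, \bfx, y)$ to split the outer expectation via Fubini, and (ii) the fact that the deterministic-feature lower bound is a pure function of $n$ and $p$ with no residual dependence on the feature map. Both points are already baked into the setup and statement of \Cref{thm:double_descent}; in particular, no further regularity of the law $P_\Theta$ is needed.
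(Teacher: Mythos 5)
Your proof is correct and follows exactly the same route as the paper's: both condition on $\bftheta$ (justified by its independence from the data), apply \Cref{thm:double_descent} pointwise to each fixed feature map $\phi_{\thetaex}$, and integrate the constant lower bound against $P_\Theta$. The observation that $\bbE_{\bfy|\bfX} = \bbE_{\bfy|\bfX,\bftheta}$ is a small piece of bookkeeping that the paper leaves implicit but which you correctly identify as the reason the inner expectation slices cleanly.
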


The main novelty in \Cref{thm:double_descent} is the explicit uniform lower bound $(\mathrm{IV})$ for $p \geq n$: The lower bound $(\mathrm{V})$ for $p \leq n$ follows by adapting Corollary 1 in \cite{mourtada_exact_2022}. Statements similar to $(\mathrm{I}), (\mathrm{II})$ and $(\mathrm{III})$ have also been proven, see e.g.\ \cite{hastie_surprises_2022} and Theorem 1 in \cite{muthukumar_harmless_2020}. However, as discussed in \Cref{sec:introduction}, \cite{hastie_surprises_2022} make significantly stronger assumptions for computing the expectation. In \Cref{sec:muthukumar}, we explain in more detail that the probabilistic overparameterized lower bound of \cite{muthukumar_harmless_2020} is not distribution-independent and only applies to a smaller class of distributions than our lower bound. For a discussion on how \Cref{thm:double_descent} applies to kernels with finite-dimensional feature space, we refer to \Cref{sec:ridgeless_kernel_regression}.

\section{When are the Assumptions Satisfied?} \label{sec:assumptions}

In this section, we want to discuss the assumptions of \Cref{thm:double_descent} and provide different results helping to verify these assumptions for various input distributions and feature maps. The theory will be particularly nice for analytic feature maps, which we define now: 
\begin{definition}[Analytic function]
A function $f: \bbR^d \to \bbR$ is called (real) analytic if for all $\bfz \in \bbR^d$, the Taylor series of $f$ around $\bfz$ converges to $f$ in a neighborhood of $\bfz$. A function $f: \bbR^d \to \bbR^p, \bfz \mapsto (f_1(\bfz), \hdots, f_p(\bfz))$ is called (real) analytic if $f_1, \hdots, f_p$ are analytic.
\end{definition}

Sums, products, and compositions of analytic functions are analytic, cf.\ e.g.\ Section 2.2 in \cite{krantz_primer_2002}. We will discuss examples of analytic functions later in this section.

\begin{restatable}[Characterization of (COV) and (FRK)]{proposition}{propcovfrk} \label{prop:cov_frk}
Consider the setting of \Cref{thm:double_descent} and let $\mathrm{FRK}(n)$ be the statement that (FRK) holds for $n$. Then,
\begin{enumerate}[(i)]
\item Let $n \geq 1$. Then, $\mathrm{FRK}(n)$ iff $P(\bfz \in U) = 0$ for all linear subspaces $U \subseteq \bbR^p$ of dimension $\min\{n, p\}-1$.
\item Let (MOM) hold. Then, (COV) holds iff $P(\bfz \in U) < 1$ for all linear subspaces $U \subseteq \bbR^p$ of dimension $p-1$.
\end{enumerate}
Assuming that (MOM) holds such that (COV) is well-defined, consider the following statements:
\begin{enumerate}[(a)]
\item $\mathrm{FRK}(p)$ holds.
\item $\mathrm{FRK}(n)$ holds for all $n \geq 1$.
\item (COV) holds.
\item There exists a fixed deterministic matrix $\Xex \in \bbR^{p \times d}$ such that $\det(\phi(\Xex)) \neq 0$.
\end{enumerate}
We have (a) $\Leftrightarrow$ (b) $\Rightarrow$ (c) $\Rightarrow$ (d). Furthermore, if $\bfx \in \bbR^d$ has a Lebesgue density and $\phi$ is analytic, then (a) -- (d) are equivalent.
\end{restatable}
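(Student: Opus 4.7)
Parts (i) and (ii) reduce each hypothesis to a single-vector statement about subspaces of $\bbR^p$. These characterisations then power the chain $(a)\Leftrightarrow(b)\Rightarrow(c)\Rightarrow(d)$ by direct manipulation. The only nontrivial converse is $(d)\Rightarrow(a)$, which requires the analyticity plus Lebesgue density assumption and is the main obstacle; I would handle it using the fact that zero sets of nonzero real-analytic functions have Lebesgue measure zero.

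\textbf{Proof of (i) and (ii).} For (i), I would split on $n\leq p$ versus $n\geq p$: in the first case $\rank\bfZ<n$ iff the rows are linearly dependent, iff they all lie in a common subspace of dimension at most $n-1$; in the second, $\rank\bfZ<p$ iff they lie in a common hyperplane of $\bbR^p$. The direction $(\Leftarrow)$ is a row-by-row induction: conditioning on the previous $k$ rows (whose span has dimension at most $\min\{n,p\}-1$), the next row avoids that span almost surely by hypothesis. For $(\Rightarrow)$ I take the contrapositive: if some $U$ of dimension $\min\{n,p\}-1$ has $P(\bfz\in U)>0$, then by independence $P(\bfz_1,\ldots,\bfz_n\in U)>0$, which forces $\rank\bfZ\leq\dim U<\min\{n,p\}$ on a positive-probability event. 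For (ii), I use the identity $\bfv^\top\bfSigma\bfv=\bbE[(\bfv^\top\bfz)^2]$: this vanishes iff $\bfv^\top\bfz=0$ almost surely, i.e.\ iff $\bfz$ lies almost surely in the hyperplane $\bfv^\perp$.

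\textbf{Chain (a)$\Leftrightarrow$(b)$\Rightarrow$(c)$\Rightarrow$(d).} Direction (b)$\Rightarrow$(a) is trivial. For (a)$\Rightarrow$(b): by (i) with $n=p$, $P(\bfz\in U)=0$ for every hyperplane $U\subseteq\bbR^p$; since every subspace of dimension $\min\{n,p\}-1\leq p-1$ is contained in such a hyperplane, (i) read in reverse yields $\mathrm{FRK}(n)$ for every $n$. Direction (a)$\Rightarrow$(c) is immediate: (a) gives $P(\bfz\in U)=0<1$ for every hyperplane, so (ii) yields (COV). For (c)$\Rightarrow$(d): if $\phi(\bbR^d)$ were contained in a proper subspace $V\subsetneq\bbR^p$, then $\bfz\in V$ almost surely, contradicting invertibility of $\bfSigma$; hence $\phi(\bbR^d)$ spans $\bbR^p$, and I greedily select $\bfx_1,\ldots,\bfx_p\in\bbR^d$ with $\phi(\bfx_1),\ldots,\phi(\bfx_p)$ linearly independent, stacking them as rows of $\Xex$ to obtain $\det\phi(\Xex)\neq 0$.

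\textbf{Main obstacle: (d)$\Rightarrow$(a) under analyticity.} I define $F\colon\bbR^{pd}\to\bbR$ by $F(\bfX)=\det\phi(\bfX)$, viewing $\bfX$ as a $p\times d$ matrix. Since $\phi$ is analytic, each entry of $\phi(\bfX)$ is analytic in $\bfX$, and the determinant is a polynomial in those entries, so $F$ is analytic on $\bbR^{pd}$. Assumption (d) gives $F(\Xex)\neq 0$, hence $F\not\equiv 0$. The crucial input is the classical lemma that the zero set of a nonzero real-analytic function on $\bbR^N$ has Lebesgue measure zero, proved by induction on $N$: the one-dimensional case follows because zeros of a nonzero analytic function are isolated, and the higher-dimensional case reduces via Fubini after controlling the slices on which $F$ vanishes identically by a separate induction on the order of vanishing. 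Since the $\bfx_i$ are independent with Lebesgue densities, $\bfX$ has a product Lebesgue density on $\bbR^{pd}$, so $P(F(\bfX)=0)=0$ and $\bfZ=\phi(\bfX)\in\bbR^{p\times p}$ is invertible almost surely, which is exactly $\mathrm{FRK}(p)$.
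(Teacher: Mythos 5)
Your argument matches the paper's proof step for step: (i) and (ii) via the subspace characterizations (contrapositive using independence of the rows, row-by-row induction, and the identity $\bfv^\top\bfSigma\bfv=\bbE[(\bfv^\top\bfz)^2]$), the chain (a)$\Leftrightarrow$(b)$\Rightarrow$(c)$\Rightarrow$(d) by direct manipulation of those characterizations, and (d)$\Rightarrow$(a) by applying the multivariate identity theorem for real analytic functions to $\bfX\mapsto\det\phi(\bfX)$ and using that $\bfX$ has a Lebesgue density. The only cosmetic difference is in (c)$\Rightarrow$(d), where you first establish that $\phi(\bbR^d)$ spans $\bbR^p$ and then extract a basis, while the paper builds the linearly independent set greedily by invoking (ii) at each inductive step---the same idea, packaged slightly differently.
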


With this in mind, we can characterize the assumptions now:

\begin{itemize}%
\item The assumption (INT) is standard \citep[see e.g.\ Section 1.6 in][]{gyorfi_distribution-free_2002} and guarantees $R_P(f_P^*) < \infty$, such that the excess risk is well-defined.
\item The assumption (NOI) is required to ensure the existence of sufficient label noise. Importantly, \Cref{lemma:conditional_variances} shows that (NOI), i.e.\ $\Var(y|\bfz) \geq \sigma^2$ almost surely over $\bfz$, holds if $\Var(y|\bfx) \geq \sigma^2$ almost surely over $\bfx$. All Double Descent papers from \Cref{sec:introduction} make the stronger assumption that the distribution of $y - \bbE(y|\bfx)$ is independent of $\bfx$ or even a fixed Gaussian.%
\item The assumption (MOM) can be reformulated as $\bbE \|\bfz\|_2^2 = \bbE \|\phi(\bfx)\|_2^2 = \bbE k(\bfx, \bfx) < \infty$. For example, if $k$ or equivalently $\phi$ are bounded, or if $\phi$ is continuous and $\|\bfx\|_2$ is bounded, then (MOM) is satisfied. Such assumptions are frequently imposed \citep[see e.g.\ Chapters 6, 7, 8, 10 in][]{gyorfi_distribution-free_2002}. In this sense, (MOM) is a standard assumption.
\item The assumptions (COV) and $\mathrm{FRK}(n)$ are implied by $\mathrm{FRK}(p)$ and are even equivalent to $\mathrm{FRK}(p)$ in the underparameterized case $p \leq n$. In the following, we will therefore focus on proving $\mathrm{FRK}(p)$ for various $\phi$ and $P_X$. In the case $p=n$, $\mathrm{FRK}(p)$ ensures that $f_{\bfX, \bfy}$ almost surely interpolates the data, or equivalently that the kernel matrix $k(\bfX, \bfX)$ almost surely has full rank. Importantly, assuming $\mathrm{FRK}(p)$ is weaker than assuming a strictly positive definite kernel since strictly positive definite kernels require $p=\infty$. \Cref{ex:discrete_uniform} shows that the assumption (FRK) in \Cref{thm:double_descent} cannot be removed.
\end{itemize}

For the analytic function $\phi = \id$ with $d=p$, \Cref{prop:cov_frk} yields a simple sufficient criterion: If $\bfz$ has a Lebesgue density, then (FRK) holds for all $n$. This assumption is already more realistic than assuming i.i.d.\ components. However, \Cref{prop:cov_frk} is also very useful for other analytic feature maps, as we will see in the remainder of this section.

\begin{remark} \label{rem:fm_dimension_reduction}
Suppose that $\phi \not \equiv 0$ is analytic, $\bfx$ has a Lebesgue density, and that (INT), (MOM), and (NOI) are satisfied. If (d) in \Cref{prop:cov_frk} does not hold, there exists $\tilde{p} < p$ such that the lower bound from \Cref{thm:double_descent} holds with $p$ replaced by $\tilde{p}$: Let $U \equalDef \Span \{\phi(\bfx) \mid \bfx \in \bbR^d\}$. Since $\phi \not \equiv 0$, $\tilde{p} \equalDef \dim U \geq 1$. Moreover, (d) holds iff $\dim U = p$. Take any isometric isomorphism $\psi: U \to \bbR^{\tilde{p}}$ and define the feature map $\tilde{\phi}: \bbR^d \to \bbR^{\tilde{p}}, \bfx \mapsto \psi(\phi(\bfx))$. Then, $\tilde{\phi}$ is analytic since $\psi$ is linear, and $\tilde{\phi}$ satisfies (d), hence \Cref{thm:double_descent} can be applied to $\tilde{\phi}$. However, $\phi$ and $\tilde{\phi}$ lead to the same kernel $k$ since $\psi$ is isometric, hence to the same estimator $f_{\bfX, \bfy}$ by Eq.~\eqref{eq:kernel_trick} and hence to the same $\Enoise$.
\end{remark}

\begin{restatable}[Polynomial kernel]{proposition}{proppolykernel} \label{prop:poly_kernel}
Let $m, d \geq 1$ and $c > 0$. For $\bfx, \tilde{\bfx} \in \bbR^d$, define the polynomial kernel $k(\bfx, \tilde{\bfx}) \equalDef (\bfx^\top \tilde{\bfx} + c)^m$. Then, there exists a feature map $\phi: \bbR^d \to \bbR^p$, $p \equalDef \binom{m+d}{m}$, such that:
\begin{enumerate}[(a)]
\item $k(\bfx, \tilde{\bfx}) = \phi(\bfx)^\top \phi(\tilde{\bfx})$ for all $\bfx, \tilde{\bfx} \in \bbR^d$, and
\item if $\bfx \in \bbR^d$ has a Lebesgue density and we use $\bfz = \phi(\bfx)$, then (FRK) is satisfied for all $n$.
\end{enumerate}
\end{restatable}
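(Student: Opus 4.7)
The plan has two parts: first, I read off $\phi$ from the multinomial expansion of $k$ and verify (a); second, I reduce (b) to condition (d) of \Cref{prop:cov_frk} via analyticity of $\phi$, and then verify (d) from linear independence of monomials.

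For part (a), applying the binomial theorem and then the multinomial theorem gives
\begin{IEEEeqnarray*}{+rCl+x*}
(\bfx^\top \tilde{\bfx} + c)^m & = & \sum_{k=0}^m \binom{m}{k} c^{m-k} \sum_{\substack{\alpha \in \bbN^d \\ |\alpha| = k}} \binom{k}{\alpha} \bfx^\alpha \tilde{\bfx}^\alpha~,
\end{IEEEeqnarray*}
where $\binom{k}{\alpha}$ denotes the multinomial coefficient and $\bfx^\alpha \equalDef \prod_{j=1}^d x_j^{\alpha_j}$. This suggests defining $\phi$ with components indexed by multi-indices $\alpha \in \bbN^d$ with $|\alpha| \leq m$ via $\phi_\alpha(\bfx) \equalDef \sqrt{\binom{m}{|\alpha|} \binom{|\alpha|}{\alpha}}\, c^{(m-|\alpha|)/2}\, \bfx^\alpha$. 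The hockey-stick identity $\sum_{k=0}^m \binom{k+d-1}{d-1} = \binom{m+d}{m}$ gives the claimed value of $p$, and a direct check recovers $\phi(\bfx)^\top \phi(\tilde{\bfx}) = k(\bfx, \tilde{\bfx})$.

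For part (b), $\phi$ is polynomial and therefore analytic. Since $\bfx$ has a Lebesgue density by hypothesis, \Cref{prop:cov_frk} reduces $\mathrm{FRK}(n)$ for all $n \geq 1$ to condition (d) of that proposition: the existence of a deterministic $\Xex \in \bbR^{p \times d}$ with $\det \phi(\Xex) \neq 0$. Because $c > 0$, each $\phi_\alpha$ is a nonzero scalar multiple of the monomial $\bfx^\alpha$, and the family $\{\bfx^\alpha : \alpha \in \bbN^d,\ |\alpha| \leq m\}$ is linearly independent as a set of functions $\bbR^d \to \bbR$.

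To build $\Xex$, I invoke a standard induction: any $p$ linearly independent continuous functions $\phi_1, \ldots, \phi_p$ on $\bbR^d$ admit points $\widetilde{\bfx}_1, \ldots, \widetilde{\bfx}_p$ such that $(\phi_j(\widetilde{\bfx}_i))_{i,j}$ is nonsingular. Given $\widetilde{\bfx}_1, \ldots, \widetilde{\bfx}_{p-1}$ realizing a nonzero $(p{-}1) \times (p{-}1)$ minor of the first $p-1$ columns, cofactor expansion along the last row of $\bfx \mapsto \det(\phi_j(\widetilde{\bfx}_i))_{i,j}$ with $\widetilde{\bfx}_p \equalDef \bfx$ is a nontrivial linear combination of $\phi_1, \ldots, \phi_p$ and hence cannot vanish identically; any $\widetilde{\bfx}_p$ at which it is nonzero completes the step. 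Stacking the $\widetilde{\bfx}_i$ as rows of $\Xex$ establishes (d), and hence (b). The only real obstacle is the bookkeeping of multinomial coefficients to pin down the correct normalization for $\phi$; once that is set up and \Cref{prop:cov_frk} is invoked, the rest is routine.
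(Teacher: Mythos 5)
Your proof is correct. Part (a) is the same construction as the paper's, just parameterized by multi-indices $\alpha \in \bbN_0^d$ with $|\alpha| \leq m$ rather than by $\bfm \in \bbN_0^{d+1}$ with $|\bfm| = m$; these are in bijection and yield the same feature map up to a coordinate permutation. For part (b), however, you take a genuinely different route through \Cref{prop:cov_frk}: the paper verifies $\mathrm{FRK}(p)$ directly via criterion (i), showing that for each $0 \neq \bfv \in \bbR^p$ the polynomial $\bfv^\top \phi(\bfx)$ is not identically zero (because the weighted monomials are distinct) and then invoking the multivariate identity theorem to conclude $P(\bfv^\top \phi(\bfx) = 0) = 0$; you instead verify condition (d), exhibiting a deterministic $\Xex$ with $\det \phi(\Xex) \neq 0$ via the classical cofactor-expansion induction for linearly independent functions, and then let the (d) $\Rightarrow$ (a) implication of \Cref{prop:cov_frk} do the analytic work. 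Your argument for (d) is purely linear-algebraic and does not itself invoke the identity theorem, which makes it slightly more self-contained modulo \Cref{prop:cov_frk}. It is also worth noting that your route is the one that scales to random feature maps: the paper explicitly remarks, immediately after this proof, that the per-$\bfv$ argument cannot be transferred to the random setting because the exceptional parameter set would be an uncountable union of null sets, whereas the single-determinant condition (d) is precisely what \Cref{prop:random_feature_map} builds on. So your approach is not just valid but arguably the more robust of the two.
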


\Cref{prop:poly_kernel} says that the lower bound from \Cref{thm:double_descent} holds for ridgeless kernel regression with the polynomial kernel with $p \equalDef \binom{m+d}{m}$ if $\bfx$ has a Lebesgue density and $\bbE \|\bfz\|_2^2 = \bbE k(\bfx, \bfx) = \bbE (\|\bfx\|_2^2 + c)^m < \infty$. The proof of \Cref{prop:poly_kernel} can be extended to the case $c = 0$, where one needs to choose $p = \binom{m+d-1}{m}$. In general, we discuss in \Cref{sec:ridgeless_kernel_regression} that \Cref{thm:double_descent} can often be applied to ridgeless kernel regression, where $p$ needs to be chosen as the minimal feature space dimension for which $k$ can still be represented.

We can also extend our theory to analytic random feature maps:
\begin{restatable}[Random feature maps]{proposition}{proprfm} \label{prop:random_feature_map}
Consider feature maps $\phi_{\bftheta}: \bbR^d \to \bbR^p$ with (random) parameter $\bftheta \in \bbR^q$. Suppose the map $(\bftheta, \bfx) \mapsto \phi_{\bftheta}(\bfx)$ is analytic and that $\bftheta$ and $\bfx$ are independent and have Lebesgue densities. If there exist fixed $\thetaex \in \bbR^q, \Xex \in \bbR^{p \times d}$ with $\det(\phi_{\thetaex}(\Xex)) \neq 0$, then almost surely over $\bftheta$, (FRK) holds for all $n$ for $\bfz = \phi_{\bftheta}(\bfx)$.
\end{restatable}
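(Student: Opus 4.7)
The plan is to reduce to the deterministic criterion supplied by \Cref{prop:cov_frk}. I would first define the scalar map
\begin{IEEEeqnarray*}{+rCl+x*}
g: \bbR^q \times \bbR^{p \times d} \to \bbR, \qquad g(\bftheta, \bfX) \equalDef \det(\phi_{\bftheta}(\bfX))~,
\end{IEEEeqnarray*}
and observe that it is real analytic on $\bbR^q \times \bbR^{p \times d}$: by the joint analyticity hypothesis, each entry of the $p \times p$ matrix $\phi_{\bftheta}(\bfX)$ is an analytic function of $(\bftheta, \bfX)$, and the determinant is a fixed polynomial in those entries. The hypothesis $\det(\phi_{\thetaex}(\Xex)) \neq 0$ says exactly that $g \not\equiv 0$.

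Next, I would invoke the classical fact that the zero set of a nonzero real analytic function on a Euclidean space has Lebesgue measure zero. Applied to $g$, this yields that $Z_g \equalDef g^{-1}(\{0\}) \subseteq \bbR^q \times \bbR^{p \times d}$ is Lebesgue null. By Fubini, for Lebesgue-almost-every $\bftheta \in \bbR^q$ the slice $\{\bfX : g(\bftheta, \bfX) = 0\}$ has Lebesgue measure zero in $\bbR^{p \times d}$ and, in particular, is a proper subset. Hence the exceptional set $A \equalDef \{\bftheta : g(\bftheta, \cdot) \equiv 0\}$ is Lebesgue null, and since $\bftheta$ has a Lebesgue density, $P_{\Theta}(A) = 0$.

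It then remains to convert this into the desired almost-sure full-rank statement. For each $\bftheta_0 \notin A$ there exists some $\Xex_{\bftheta_0} \in \bbR^{p \times d}$ with $\det(\phi_{\bftheta_0}(\Xex_{\bftheta_0})) \neq 0$, which is precisely condition (d) of \Cref{prop:cov_frk} applied to the deterministic feature map $\phi_{\bftheta_0}$. This map is analytic in $\bfx$ (as the restriction of the jointly analytic map to a fixed $\bftheta_0$), and $\bfx$ has a Lebesgue density, so \Cref{prop:cov_frk} hands us $\mathrm{FRK}(n)$ for all $n \geq 1$ with $\bfz = \phi_{\bftheta_0}(\bfx)$. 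Since this holds outside a $P_{\Theta}$-null set, the proposition follows.

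The main obstacle is invoking the measure-zero property of zero sets of nonzero real analytic functions on $\bbR^N$; this is standard (a result going back to Mityagin) and can be derived inductively from the one-dimensional identity theorem combined with Fubini, but it must be cited or proven separately because the basic complex-analytic identity theorem only forces the zero set to be discrete in dimension one. Everything else is routine bookkeeping: analyticity is preserved under the polynomial operation $\det$ and under fixing coordinates, and Fubini passes from a measure-zero product set to measure-zero slices.
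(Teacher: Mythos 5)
Your proof is correct and follows essentially the same route as the paper's: form the jointly analytic determinant map $g(\bftheta,\bfX)=\det(\phi_{\bftheta}(\bfX))$, apply the multivariate identity theorem to conclude its zero set is Lebesgue null, pass via Fubini to almost-every $\bftheta$, and finish with the implication (d) $\Rightarrow$ (b) of \Cref{prop:cov_frk}. The only point the paper flags that you leave implicit is that the chain (d) $\Rightarrow$ (a) $\Leftrightarrow$ (b) in \Cref{prop:cov_frk} does not actually need the standing hypothesis (MOM), which is what licenses invoking it here without first verifying second moments.
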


In \Cref{sec:experiments}, we demonstrate that \Cref{prop:random_feature_map} can be used to computationally verify (FRK) for analytic random feature maps. 

Up until now, we have assumed that $\bfx$ has a Lebesgue density. It is desirable to weaken this assumption, such that $\bfx$ can, for example, be concentrated on a submanifold of $\bbR^d$. It is necessary for $\mathrm{FRK}(p)$ with $p \geq 2$ that $\bfx$ is nonatomic, such that the $\bfx_i$ are distinct with probability one. In general, this is not sufficient: For example, if $\phi = \id$ and $\bfx$ lives on a proper linear subspace of $\bbR^d$, $\mathrm{FRK}(p)$ is not satisfied. Perhaps surprisingly, we will show next that for random neural network feature maps, it is indeed sufficient that $\bfx$ is nonatomic.\footnote{This appears to be a convenient consequence of randomizing the feature map: For each fixed feature map $\phi_{\thetaex}$, there may be an exceptional set $E_{\thetaex}$ of nonatomic input distributions $P_X$ for which $\mathrm{FRK}(p)$ is not satisfied. However, \Cref{prop:random_network} shows that for each nonatomic input distribution $P_X$, the set $\{\thetaex \mid P_X \in E_{\thetaex}\}$ is a Lebesgue null set. For (deterministic) feature maps where it is not possible to prove $\mathrm{FRK}(p)$ for all nonatomic $P_X$, there is a trick that works in certain cases: If $\bfx$ lives on a submanifold (e.g. a sphere), we might be able to write $\bfx = \psi(\bfv)$, where $\bfv$ has a Lebesgue density and $\psi$ is analytic (e.g.\ the stereographic projection for the sphere). Then, we can apply our theory to the analytic feature map $\phi_{\bftheta} \circ \psi$.}
Especially, our lower bound in \Cref{cor:double_descent} thus applies to a large class of feedforward neural networks where only the last layer is trained (and initialized to zero, such that gradient descent converges to the Moore-Penrose pseudoinverse).  %

\begin{restatable}[Random neural networks]{theorem}{thmrandnet} \label{prop:random_network}
Let $d, p, L \geq 1$, let $\sigma: \bbR \to \bbR$ be analytic and let the layer sizes be $d_0 = d$, $d_1, \hdots, d_{L-1} \geq 1$ and $d_L = p$.
Let $\bfW^{(l)} \in \bbR^{d_{l+1} \times d_l}$ for $l \in \{0, \hdots, L-1\}$ be random variables and consider the two cases where
\begin{enumerate}[(a)]
\item $\sigma$ is not a polynomial with less than $p$ nonzero coefficients, $\bftheta \equalDef (\bfW^{(0)}, \hdots, \bfW^{(L-1)})$ and the random feature map $\phi_{\bftheta}: \bbR^d \to \bbR^p$ is recursively defined by
\begin{IEEEeqnarray*}{+rCl+x*}
\phi(\bfx^{(0)}) \equalDef \bfx^{(L)}, \quad \bfx^{(l+1)} \equalDef \sigma(\bfW^{(l)} \bfx^{(l)})~.
\end{IEEEeqnarray*}
\item $\sigma$ is not a polynomial of degree $< p-1$, $\bftheta \equalDef (\bfW^{(0)}, \hdots, \bfW^{(L-1)}, \bfb^{(0)}, \hdots, \bfb^{(L-1)})$ with random variables $\bfb^{(l)} \in \bbR^{d_{l+1}}$ for $l \in \{0, \hdots, L-1\}$, and the random feature map $\phi_{\bftheta}: \bbR^d \to \bbR^p$ is recursively defined by
\begin{IEEEeqnarray*}{+rCl+x*}
\phi(\bfx^{(0)}) \equalDef \bfx^{(L)}, \quad \bfx^{(l+1)} \equalDef \sigma(\bfW^{(l)} \bfx^{(l)} + \bfb^{(l)})~.
\end{IEEEeqnarray*}
\end{enumerate}

In both cases, if $\bftheta$ has a Lebesgue density and $\bfx$ is nonatomic, then (FRK) holds for all $n$ and almost surely over $\bftheta$.
\end{restatable}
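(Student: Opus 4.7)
By Proposition \ref{prop:cov_frk}(a)$\Leftrightarrow$(b), which holds unconditionally, it suffices to show $\mathrm{FRK}(p)$, i.e.\ that $\det \phi_{\bftheta}(\bfX) \neq 0$ holds $P_{\bftheta} \otimes P_X^p$-almost surely for $\bfX = (\bfx_1, \ldots, \bfx_p)$. Because $\sigma$ is analytic and the linear pieces are polynomial, $F(\bftheta, \bfX) \equalDef \det \phi_{\bftheta}(\bfX)$ is real analytic on $\bbR^q \times \bbR^{pd}$. For any fixed $\bfX$, the real-analytic function $\bftheta \mapsto F(\bftheta, \bfX)$ either is identically zero or has Lebesgue-null (hence $P_{\bftheta}$-null) zero set, so by Fubini the problem reduces to exhibiting, for $P_X^p$-almost every $\bfX$, some $\thetaex = \thetaex(\bfX)$ with $F(\thetaex, \bfX) \neq 0$. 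This $\thetaex$ will be constructed by induction on $L$.

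For the base case $L = 1$, the $\bfx_i$ are distinct $P_X^p$-almost surely by nonatomicity; pick any $\bfu \in \bbR^d$ such that the $a_i \equalDef \bfu^\top \bfx_i$ are pairwise distinct (bad $\bfu$ form a finite union of hyperplanes). In case (b), set every row of $\bfW^{(0)}$ to $\bfu^\top$, leaving biases $t_1, \ldots, t_p$ free; the matrix $(\sigma(a_i + t_j))_{i,j}$ is nonsingular for some choice of $(t_j)$ iff the translates $\{t \mapsto \sigma(a_i + t)\}$ are linearly independent as functions, and a power-series expansion of $\sigma(a + t)$ turns any dependence into the Vandermonde-type system $\sum_i v_i a_i^k = 0$ for every $k \leq \deg \sigma$. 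The hypothesis $\deg \sigma \geq p - 1$ (or $\sigma$ non-polynomial) then yields the needed $p$ equations. Case (a) uses scalings $\bfW^{(0)}_{j,:} = c_j \bfu^\top$ in place of biases and expands $\sigma(c a_i)$ in $c$; now the equations $\sum_i v_i a_i^k = 0$ only arise at those $k$ with $s_k \neq 0$, and the hypothesis of at least $p$ such indices produces a generalized Vandermonde, nonsingular when the $a_i$ are additionally positive and distinct. For $L \geq 2$, the plan is to choose the first $L - 1$ layers so that distinctness of the sample representations is preserved along the network, giving pairwise distinct $\bfx^{(L-1)}_1, \ldots, \bfx^{(L-1)}_p$, and then to apply the $L = 1$ construction to the final layer; distinctness propagates because, given distinct $\bfx^{(l)}_i$, the layer $\bfv \mapsto \sigma(\bfW^{(l)} \bfv + \bfb^{(l)})$ is injective on the finite set $\{\bfx^{(l)}_i\}$ for Lebesgue-generic weights (e.g.\ via a small scaling around a bias $b$ with $\sigma'(b) \neq 0$, which exists since $\sigma$ is nonconstant).

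The main obstacle is case (a): without biases, the generalized Vandermonde argument requires the $a_i$ to be positive, which mere distinctness of the $\bfx_i$ does not guarantee; the canonical counterexample is $\bfx_2 = -\bfx_1$ with even $\sigma$, for which $F(\bftheta, \bfX) \equiv 0$ identically. This is where nonatomicity of $P_X$ (rather than mere almost-sure distinctness) is genuinely used: the algebraic exceptional configurations form subsets of $(\bbR^d)^p$ whose conditional fibers, after fixing $\bfx_1, \ldots, \bfx_{j-1}$, reduce to single-point fibers in $\bfx_j$ and are therefore $P_X$-null by nonatomicity; Fubini then shows the full exceptional set has $P_X^p$-measure zero. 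An analogous nonatomicity-plus-Fubini argument handles exceptional sets that may arise during the inductive propagation of distinctness in case (a).
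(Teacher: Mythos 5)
Your high-level reduction (exhibit $\thetaex$ for $P_X^p$-a.e. $\bfX$, then promote to a.e.\ $\bftheta$ via the multivariate identity theorem and Fubini, then invoke \Cref{prop:cov_frk}(a)$\Leftrightarrow$(b)) is sound and matches the paper's framing. The gaps are in the two constructions you give for the last layer.

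\textbf{Case (b), $L=1$.} The ``fix the weights to $\bfu^\top$, vary only the biases $t_j$'' construction is broken. You claim that a dependence $\sum_i v_i\,\sigma(a_i+t)\equiv 0$ forces $\sum_i v_i a_i^k=0$; but expanding $\sigma(a_i+t)$ in $t$ produces coefficients $\sigma^{(k)}(a_i)/k!$, \emph{not} $a_i^k$, and the resulting system is not a Vandermonde one. Concretely, for $\sigma=\exp$ (analytic, non-polynomial, so within the theorem's hypotheses) the translates satisfy $\sigma(a_i+t)=e^{a_i}\sigma(t)$, so they are linearly dependent whenever $p\ge 2$ and the matrix $(\sigma(a_i+t_j))_{i,j}$ has rank one for \emph{every} choice of $(t_j)$; your construction produces no valid $\thetaex$. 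The paper avoids this by holding a single bias $b$ fixed (chosen via \Cref{lemma:analytic_shift} so that $\sigma^{(0)}(b),\hdots,\sigma^{(p-1)}(b)\neq 0$) and varying the \emph{scalings} $w_j$: expanding $\sigma(w_j a_i+b)$ in $w_j$ then genuinely yields the coefficients $a_i^k$ and hence a Vandermonde system (\Cref{lemma:distinct_independence}).

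\textbf{Case (a), $L=1$.} The route through ``positive $a_i \Rightarrow$ invertible generalized Vandermonde'' cannot be salvaged the way you suggest, because positivity is structurally unavailable: if the chosen projection makes $a_1=\bfu^\top\bfx_1>0$ but $\bfx_2=-\bfx_1$, then $a_2<0$, and no choice of $\bfu$ simultaneously makes all $a_i$ positive and distinct. Your proposed repair — that the ``exceptional configurations'' have single-point conditional fibers — is both imprecise (the fibers are finite, not singletons) and unsubstantiated: you never explain \emph{why} the exceptional set should have finite fibers. That is exactly the content of the paper's \Cref{lemma:random_vandermonde}, which shows that for independent nonatomic $x_1,\hdots,x_p$ and any fixed distinct exponents $k_1<\hdots<k_p$, the generalized Vandermonde $\det(x_j^{k_i})$ is nonzero almost surely, by an induction in which the determinant is viewed as a polynomial in $x_p$ whose leading coefficient is the $(p-1)\times(p-1)$ sub-Vandermonde (nonzero a.s.\ by the inductive hypothesis). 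This inductive mechanism is what replaces positivity; without it your argument does not close. A secondary consequence: for $L\ge 2$ in case (a), propagating \emph{distinctness} is not sufficient — nothing you propagate prevents $\bfx^{(L-1)}_2=-\bfx^{(L-1)}_1$ — whereas the paper propagates \emph{nonatomicity} (\Cref{lemma:nonatomic_propagation}) precisely so that \Cref{lemma:random_vandermonde} applies at the final layer.
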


The assumptions of \Cref{prop:random_network} on $\bftheta$ are satisfied by the classical initialization methods of \cite{he_delving_2015} and \cite{glorot_understanding_2010}. Possible choices of $\sigma$ are presented in \Cref{table:analytic_activations}. A statement similar to \Cref{prop:random_network} has been proven in Lemma 4.4 in \cite{nguyen_loss_2017}. However, their statement only applies to networks with bias and to a more restricted class of activation functions: For example, the activation functions RBF, GELU, SiLU/Swish, Mish, $\sin$ and $\cos$ are not covered by their assumptions. %
In \Cref{sec:full_rank_history}, we explain that the proofs of many other theorems in the literature similar to \Cref{prop:random_network} for single-layer networks are incorrect.

\begin{table}[hbt]
\centering
\caption{Some examples of analytic activation functions that are not polynomials. The CDF of $\calN(0, 1)$ is denoted by $\Phi$. Other non-polynomial analytic activation functions are $\sin$, $\cos$ or $\mathrm{erf}$.} \label{table:analytic_activations}
\begin{tabular}{l|l}
Activation function & Equation \\
\hline
\hline
Sigmoid & $\mathrm{sigmoid}(x) = 1/(1 + e^{-x})$ \\
\hline
Hyperbolic Tangent & $\tanh(x) = (e^x - e^{-x})/(e^x + e^{-x})$ \\
\hline
Softplus & $\mathrm{softplus}(x) = \log(1 + e^x)$ \\
\hline
RBF & $\mathrm{RBF}(x) = \exp(-\beta x^2)$ \\
\hline
GELU \citep{hendrycks_gaussian_2016} & $\mathrm{GELU}(x) = x\Phi(x)$ \\
\hline
SiLU \citep{elfwing_sigmoid-weighted_2018} & $\mathrm{SiLU}(x) = x \sigmoid(x)$ \\
\hline
Swish \citep{ramachandran_searching_2017} & $\mathrm{Swish}(x) = x \sigmoid(\beta x)$ \\
\hline
Mish \citep{misra_mish_2019} & $\mathrm{Mish}(x) = x \tanh(\softplus(x))$
\end{tabular}
\end{table}

Under the assumptions of \Cref{prop:random_network}, if $\|\bfx\|_2$ is bounded, (MOM) holds for all $\bftheta$. This follows since any analytic function $\phi$ is also continuous, and continuous functions preserve boundedness. However, the activation functions from \Cref{table:analytic_activations} even satisfy $|\sigma(x)| \leq a|x| + b$ for some $a, b \geq 0$ and all $x \in \bbR$. In this case, it is not hard to see that (MOM) already holds for all $\bftheta$ if $\bbE \|\bfx\|_2^2 < \infty$.

\Cref{prop:random_network} does not hold for ReLU, ELU \citep{clevert_fast_2015}, SELU \citep{klambauer_self-normalizing_2017} or other activation functions with a perfectly linear part. To see this, observe that with nonzero probability, all weights and inputs are positive. In this case, the feature map acts as a linear map from $\bbR^d$ to $\bbR^p$, and if $d < p = n$, the output matrix $\phi(\bfX)$ cannot be invertible.

In \Cref{sec:rff}, we show that (FRK) is satisfied for random Fourier features if $\bfx$ is nonatomic and the frequency distribution (i.e. the Fourier transform of the shift-invariant kernel) has a Lebesgue density.

\section{Quality of the Lower Bound} \label{sec:lower_bound_quality}

In this section, we discuss how sharp the lower bound from \Cref{thm:double_descent} is. To this end, we assume that $\Var(y|\bfz) = \sigma^2$ almost surely over $\bfz$. %

In their Lemma 3, \cite{hastie_surprises_2022} consider the case where $\bfz$ has i.i.d. entries with zero mean, unit variance, and finite fourth moment. They then use the Marchenko-Pastur law to show in the limit $p, n \to \infty, p/n \to \gamma > 1$ that
$\Enoise \to \sigma^2 \frac{1}{\gamma - 1}$.
In this limit, our lower bound shows
\begin{IEEEeqnarray*}{+rCl+x*}
\Enoise & \geq & \sigma^2 \frac{n}{p+1-n} = \sigma^2 \frac{1}{p/n + 1/n - 1} \to \sigma^2 \frac{1}{\gamma - 1}~,
\end{IEEEeqnarray*}
hence, in this sense, our overparameterized bound is asymptotically sharp. An analogous argument shows that the underparameterized bound is also asymptotically sharp. 
To better understand to which extent our lower bound is non-asymptotically sharp in the over- and underparameterized regimes, we explicitly compute $\Enoise = \sigma^2 \bbE_{\bfZ} \tr((\bfZ^+)^\top \bfSigma \bfZ^+)$ (cf.\ \Cref{thm:double_descent}) for some distributions $P_Z$: %

\begin{restatable}{theorem}{thmexactsphere} \label{thm:exact_sphere}
Let $P_Z = \calU(\bbS^{p-1})$. Then, $P_Z$ satisfies the assumptions (MOM), (COV), and (FRK) for all $n$ with $\bfSigma = \frac{1}{p} \bfI_p$. Moreover, for $n \geq p = 1$ or $p \geq n \geq 1$, we can compute
\begin{IEEEeqnarray*}{+rCl+x*}
\bbE_{\bfZ} \tr((\bfZ^+)^\top \bfSigma \bfZ^+) & = & \begin{cases}
\frac{1}{n} &\text{if $n \geq p = 1$,} \\
\frac{1}{p} &\text{if $p \geq n = 1$,} \\
\infty &\text{if $2 \leq n \leq p \leq n+1$,} \\
\frac{n}{p-1-n} \cdot \frac{p-2}{p} &\text{if $2 \leq n \leq n+2 \leq p$.}
\end{cases}
\end{IEEEeqnarray*}
\end{restatable}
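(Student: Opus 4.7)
The plan is to split the problem along the three regimes in the conclusion and to exploit rotational invariance of $\calU(\bbS^{p-1})$ throughout. First I would dispatch the assumption checks quickly: $\|\bfz\|_2 = 1$ almost surely gives (MOM); rotational invariance forces $\bfSigma = \bbE[\bfz\bfz^\top]$ to commute with every orthogonal matrix, hence $\bfSigma = c\bfI_p$, and taking traces yields $c = 1/p$, proving (COV). For (FRK), I would argue by induction on $n$: conditional on $\bfz_1,\ldots,\bfz_{n-1}$, the event that $\bfz_n$ lies in $\Span\{\bfz_1,\ldots,\bfz_{n-1}\}$ is the event that a uniform point on $\bbS^{p-1}$ lands in the intersection of the sphere with a proper linear subspace, which has zero surface measure (alternatively, invoke \Cref{prop:cov_frk}(i) via the Gaussian-quotient representation $\bfz = \bfg/\|\bfg\|$ with $\bfg \sim \calN(0,\bfI_p)$).

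The two boundary cases are trivial: for $n \geq p = 1$, $\bfSigma = 1$ and $\bfZ^+ = \bfZ^\top/\|\bfZ\|^2 = \bfZ^\top/n$, whence a direct expansion yields $1/n$; for $p \geq n = 1$, $\|\bfZ\|_2 = 1$ gives $\bfZ^+ = \bfZ^\top$ and $\tr((\bfZ^+)^\top \bfSigma \bfZ^+) = \|\bfZ\|_2^2/p = 1/p$.

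For the main case $2 \leq n \leq p$, since $\bfZ$ has full row rank a.s.\ by (FRK), write $\bfZ^+ = \bfZ^\top(\bfZ\bfZ^\top)^{-1}$ and combine with $\bfSigma = p^{-1}\bfI_p$ to get $\tr((\bfZ^+)^\top\bfSigma\bfZ^+) = p^{-1}\tr((\bfZ\bfZ^\top)^{-1})$. Exchangeability of the rows gives $\bbE\tr((\bfZ\bfZ^\top)^{-1}) = n\,\bbE[(\bfZ\bfZ^\top)^{-1}]_{11}$. Applying the Schur complement formula with the first row separated, and using $\|\bfz_1\|_2^2 = 1$, yields
\begin{IEEEeqnarray*}{+rCl+x*}
[(\bfZ\bfZ^\top)^{-1}]_{11} & = & \frac{1}{1 - \bfz_1^\top\bfP\bfz_1} \;=\; \frac{1}{\bfz_1^\top(\bfI_p - \bfP)\bfz_1}~,
\end{IEEEeqnarray*}
where $\bfP$ is the orthogonal projection onto $\Span\{\bfz_2,\ldots,\bfz_n\}$, which a.s.\ has dimension $n-1$, so $\bfI_p - \bfP$ has rank $k \equalDef p-n+1$.

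The key probabilistic step is to identify the distribution of $X \equalDef \bfz_1^\top(\bfI_p - \bfP)\bfz_1$. Since $\bfz_1$ is independent of $\bfP$ and rotationally invariant on $\bbS^{p-1}$, write $\bfz_1 = \bfg/\|\bfg\|_2$ with $\bfg \sim \calN(0,\bfI_p)$; then conditionally on $\bfP$, $\|(\bfI_p - \bfP)\bfg\|_2^2 \sim \chi_k^2$ is independent of $\|\bfP\bfg\|_2^2 \sim \chi_{n-1}^2$, so $X$ follows the $\mathrm{Beta}(k/2,(n-1)/2)$ distribution, unconditionally. A direct computation with the Gamma recursion gives $\bbE[1/X] = (a+b-1)/(a-1) = (p-2)/(p-n-1)$ whenever the shape parameter $a = k/2$ exceeds $1$, i.e.\ $p \geq n+2$; for $p \in \{n, n+1\}$ (so $k \in \{1,2\}$), the density of $X$ behaves like $x^{k/2 - 1}$ near $0$ and $\int_0^{\varepsilon} x^{k/2 - 2}\,dx = \infty$, forcing $\bbE[1/X] = \infty$. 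Multiplying by $n/p$ produces the four cases in the theorem.

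The main obstacle is the Schur-complement step: one needs $\bfZ\bfZ^\top$ almost surely invertible (hence (FRK) first) and a clean argument that $\bfz_1$ is independent of $\bfP$, which is why I would establish the representation $\bfz = \bfg/\|\bfg\|_2$ early and use it both for (FRK) and for identifying the Beta law. Everything else is bookkeeping on the parameter ranges to match the four displayed cases.
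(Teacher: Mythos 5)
Your proposal is correct and, on inspection, follows essentially the same route as the paper's proof, differing only in parameterization. You verify (MOM), (COV), (FRK) by the same rotational-invariance and $\bfg/\|\bfg\|_2$ representation; you handle the boundary cases $p=1$ and $n=1$ by the same direct calculations; and for $2 \le n \le p$ you reduce, exactly as the paper does via its \Cref{lemma:inv_dists}, to the diagonal entry $[(\bfZ\bfZ^\top)^{-1}]_{11} = \bigl(\bfz_1^\top(\bfI_p-\bfP)\bfz_1\bigr)^{-1}$ through the Schur complement. The only genuine divergence is cosmetic: you identify $X = \bfz_1^\top(\bfI_p-\bfP)\bfz_1$ as $\mathrm{Beta}\bigl((p+1-n)/2,\,(n-1)/2\bigr)$ and use the inverse-moment formula $\bbE[X^{-1}] = (a+b-1)/(a-1)$, whereas the paper writes $\dist(\bfw_1,\calW_{-1})^2 = pA/(A+B)$ with $A \sim \chi^2_{p+1-n}$, $B \sim \chi^2_{n-1}$, and then applies the mean of the $F(n-1,\,p+1-n)$ distribution to $\bbE[B/A]$. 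Since $A/(A+B)$ being Beta and $(B/(n-1))/(A/(p+1-n))$ being $F$ are two faces of the same statement, the two finishes are arithmetic rephrasings of each other; both give $\bbE[1/X] = (p-2)/(p-1-n)$ for $p \ge n+2$ and divergence for $p \in \{n,\,n+1\}$ by the same integrability boundary at the shape parameter $a = (p+1-n)/2 = 1$. Your proof is sound as written.
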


The formulas in the next theorem have already been computed by \cite{breiman_how_1983} for $p \leq n-2$ and by \cite{belkin_two_2020} for general $p$. Our alternative proof circumvents a technical problem in their proof for $p \in \{n-1, n, n+1\}$, cf.\ \Cref{sec:exact_computations}.

\begin{restatable}{theorem}{thmexactgaussian} \label{thm:exact_gaussian}
Let $P_Z = \calN(0, \bfI_p)$. Then, $P_Z$ satisfies the assumptions (MOM), (COV), and (FRK) for all $n$ with $\bfSigma = \bfI_p$. Moreover, for $n, p \geq 1$,
\begin{IEEEeqnarray*}{+rCl+x*}
\bbE_{\bfZ} \tr((\bfZ^+)^\top \bfSigma \bfZ^+) & = & \begin{cases}
\frac{n}{p-1-n} &\text{if $p \geq n+2$,} \\
\infty &\text{if $p \in \{n-1, n, n+1\}$,} \\
\frac{p}{n-1-p} &\text{if $p \leq n-2$.}
\end{cases}
\end{IEEEeqnarray*}
\end{restatable}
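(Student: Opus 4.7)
The plan is to verify the three assumptions and then compute the trace expectation by reducing it to a standard inverse-Wishart mean. First, since $\bfz\sim\calN(0,\bfI_p)$ has an everywhere-positive Lebesgue density and the feature map is $\phi=\id$, all of (MOM), (COV) and (FRK) follow immediately: $\bfSigma=\bbE[\bfz\bfz^\top]=\bfI_p$ is finite and invertible, and (FRK) for all $n$ follows from part (d)$\Rightarrow$(b) of \Cref{prop:cov_frk} applied to the analytic map $\phi=\id$ with e.g.\ $\widetilde{\bfX}=\bfI_p$.

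Next I would reduce the trace to Wishart form. In the overparameterized case $p\ge n$, full rank gives $\bfZ^+=\bfZ^\top(\bfZ\bfZ^\top)^{-1}$ almost surely, so $(\bfZ^+)^\top\bfZ^+=(\bfZ\bfZ^\top)^{-1}$; symmetrically, for $p\le n$, $\tr((\bfZ^+)^\top\bfZ^+) = \tr((\bfZ^\top\bfZ)^{-1})$. In either case the relevant matrix is Wishart of dimension $k \equalDef \min\{n,p\}$ with $m \equalDef \max\{n,p\}$ degrees of freedom. For $|p-n|\ge 2$, i.e.\ $m\ge k+2$, the classical inverse-Wishart identity $\bbE[W^{-1}]=\bfI_k/(m-k-1)$ yields $k/(m-k-1)$, which after substitution gives $n/(p-1-n)$ in the overparameterized case and $p/(n-1-p)$ in the underparameterized case.

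For the three borderline cases $p\in\{n-1,n,n+1\}$, i.e.\ $|m-k|\le 1$, I would show divergence via the joint eigenvalue density of the relevant Wishart matrix. With $\lambda_1\ge\cdots\ge\lambda_k>0$, this density is proportional to $\prod_i\lambda_i^{(m-k-1)/2}\prod_{i<j}(\lambda_i-\lambda_j)\exp(-\tfrac{1}{2}\sum_i\lambda_i)$. Restricting to the event that $(\lambda_1,\ldots,\lambda_{k-1})$ lies in a fixed compact region bounded away from $0$ (an event of positive probability, by continuity of the density of those variables away from the coincidence set), the Vandermonde factor involving $\lambda_k$ and the exponential term are sandwiched between positive constants on a neighborhood of $\lambda_k=0$. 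Hence the conditional density of $\lambda_k$ behaves near $0$ like $\lambda_k^{(m-k-1)/2}$. Since $\tr(W^{-1})\ge 1/\lambda_k$ and the exponent $(m-k-1)/2-1$ is at most $-1$ when $m-k\le 1$, the one-dimensional integral $\int_0 \lambda_k^{(m-k-1)/2-1}\diff\lambda_k$ diverges, giving $\bbE[\tr(W^{-1})]=\infty$.

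The main obstacle is the divergence proof in the borderline regime, precisely the step where Belkin et al.\ need to be corrected according to the remark in the excerpt. Concretely, I expect the subtle issue to be justifying the Fubini-style integration of $\lambda_k$ first on the conditional density while controlling the Vandermonde product $\prod_{j<k}(\lambda_j-\lambda_k)$ uniformly as $\lambda_k\downarrow 0$: one must choose the conditioning set so that it has positive mass and simultaneously forces $\lambda_{k-1}$ to stay uniformly away from $0$. Everything else is bookkeeping around the well-known Wishart density and pseudoinverse identities.
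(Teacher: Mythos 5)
Your proposal is correct, and for the divergent borderline cases it takes a genuinely different route from the paper. The paper proves this theorem by reusing the machinery from Theorem \ref{thm:exact_sphere}: it applies Lemma \ref{lemma:inv_dists} (the distance-to-subspace representation $\tr((\bfW\bfW^\top)^{-1}) = \sum_i \dist(\bfw_i, \calW_{-i})^{-2}$), exchangeability, and rotational invariance to identify $\dist(\bfw_1, \calW_{-1})^2 \sim \chi^2_{p+1-n}$, and then evaluates $\bbE[1/A]$ via the mean of an $F$-distribution, which yields the finite formula and the divergence in one unified expression. The underparameterized case then follows by the transposition symmetry $\bfW \leftrightarrow \bfW^\top$ (which is baked into your $(k,m)=(\min\{n,p\},\max\{n,p\})$ parametrization). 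You instead go directly to Wishart theory: for $|p-n|\geq 2$ you invoke the inverse-Wishart mean $\bbE[\bfW^{-1}]=\bfI_k/(m-k-1)$ (which is precisely the formula the paper warns Belkin et al.\ misapplied in the borderline regime, but you correctly restrict to $m\geq k+2$ where it is valid), and for $|m-k|\leq 1$ you lower-bound $\bbE[1/\lambda_k]$ via the Wishart eigenvalue density. Both approaches are sound. The paper's route is more elementary (chi-squared/$F$ facts, no eigenvalue density) and unified; yours is more direct but needs the Wishart eigenvalue density and, as you flag, requires some care in choosing the conditioning set $K$ so that it has positive measure while bounding $\lambda_{k-1}$ uniformly away from $0$ — your outline of taking $K$ compact with $\lambda_{k-1}\geq\delta$ and $\lambda_k<\delta/2$ does close that gap. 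Your approach also differs from the paper's own suggested alternative (using that diagonal entries of an inverse Wishart are inverse Gamma), since you work with the eigenvalue density rather than marginal diagonal distributions.
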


For $P_Z = \calN(0, \bfI_p)$ and the lower bound from \Cref{thm:double_descent}, the formulas for the under- and overparameterized cases can be obtained from each other by switching the roles of $n$ and $p$. Numerical data strongly suggests that this symmetry does \emph{not} hold exactly for $P_Z = \calU(\bbS^{p-1})$.

\begin{figure}[hbt]
\centering
\includegraphics[width=\textwidth]{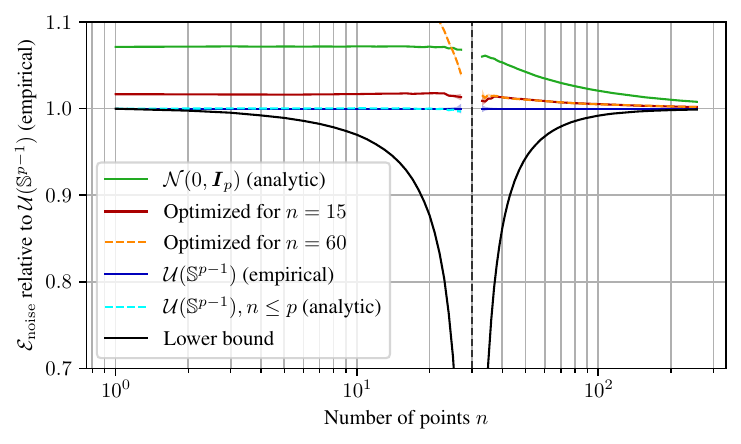}
\caption{Various estimates and bounds for $\Enoise$ relative to $\Enoise$ for $P_Z = \calU(\bbS^{p-1})$, using $\Var(y|\bfz) = 1$ and $p = 30$. The optimized curves correspond to multi-layer NN feature maps whose parameters were trained to minimize $\Enoise$ for $n=15$ or $n=60$. More experiments and details on the setup can be found in \Cref{sec:experiments}. We do not plot estimates for $n \in \{28, \hdots, 32\}$ since they have high estimation variances.} \label{fig:sphere_optimality}
\end{figure}

For $p \geq n+2$, we can relate our lower bound (\Cref{thm:double_descent}), the result for the sphere (\Cref{thm:exact_sphere}) and the result for the Gaussian distribution (\Cref{thm:exact_gaussian}) as follows:
\begin{IEEEeqnarray*}{+rCl+x*}
\frac{n}{p+1-n} & = & \frac{n}{p-1-n} \cdot \frac{(p+1-n)-2}{p+1-n} \leq \frac{n}{p-1-n} \cdot \frac{p-2}{p} < \frac{n}{p-1-n}~.
\end{IEEEeqnarray*}
These values are also shown in \Cref{fig:sphere_optimality} together with empirical values of NN feature maps specifically optimized to minimize $\Enoise$. Since $\bfSigma$ affects $\Enoise$ only for $p > n$ (cf.\ \Cref{rem:flat_spectrum}), it is not surprising that the feature map optimized for $n=60>30=p$ performs badly in the overparameterized regime. The results in \Cref{fig:sphere_optimality} support the hypothesis that among all $P_Z$ satisfying (MOM), (COV) and (FRK), $P_Z = \calU(\bbS^{p-1})$ minimizes $\bbE_{\bfZ} \tr((\bfZ^+)^\top \bfSigma \bfZ^+)$. The plausibility of this hypothesis is further discussed in \Cref{rem:dist_trace}. We can prove the hypothesis for $n=1$ or $p=1$ since in this case, the results from \Cref{thm:exact_sphere} are equal to the lower bound from \Cref{thm:double_descent}. 

When using a continuous feature map $\phi: \bbR^d \to \bbR^p$ with $d \leq p-2$, it seems to be unclear at first whether the low $\Enoise$ of $P_Z = \calU(\bbS^{p-1})$ can even be achieved. We show in \Cref{prop:sfc} that this is possible using space-filling curve feature maps. %

The results in this section and in \Cref{sec:experiments} show that while our lower bound presumably does not fully capture the height of the interpolation peak at $p \approx n$, it is quite sharp in the practically relevant regimes $p \gg n$ and $p \ll n$ irrespective of $d$.

\acks{The author would like to thank Ingo Steinwart for proofreading most of this paper and for providing helpful comments. Funded by Deutsche Forschungsgemeinschaft (DFG, German Research Foundation) under Germany's Excellence Strategy - EXC 2075 - 390740016. The author thanks the International Max Planck Research School for Intelligent Systems (IMPRS-IS)
for support.}

\ifnotinthesis{%
\bibliography{2020_ntk_noise}
\bibliographystyle{iclr2021_conference}}

\begin{appendixenv}
%\numberwithin{theorem}{section}
%\numberwithin{lemma}{section}
%\numberwithin{proposition}{section}
%\numberwithin{exenv}{section}
%\numberwithin{remenv}{section}
%\numberwithin{defenv}{section}
%
%\counterwithin{figure}{section}
%\counterwithin{table}{section} %
%
%\todo{reset numberwithin and counterwithin etc.}

\section{Overview}

The appendix is structured as follows: In \Cref{sec:matrix_algebra}, we provide an overview of some matrix identities used throughout the paper. We provide additional numerical experiments for various (random) feature maps in \Cref{sec:experiments}. The counterexample given in \Cref{sec:counterexample} shows that the assumption (FRK) cannot be omitted from \Cref{thm:double_descent}. In \Cref{sec:full_rank_history}, we provide an overview of full-rank theorems for random neural networks in the literature and explain why their proofs are incorrect. We then prove our main results in \Cref{sec:double_descent_proofs} before discussing consequences in \Cref{sec:double_descent_discussion}. In \Cref{sec:assumptions_proofs}, we give proofs for the theorems and propositions from \Cref{sec:assumptions}. As an addition, we prove (FRK) for random Fourier features in \Cref{sec:rff}. Proofs for the statements from \Cref{sec:lower_bound_quality} are given in \Cref{sec:exact_computations}. In \Cref{sec:ridgeless_kernel_regression}, we compare our results to some results for ridgeless kernel regression. Finally, in \Cref{sec:muthukumar}, we compare our result to the one by \cite{muthukumar_harmless_2020}.

Whenever we prove theorems or propositions from the main paper (like \Cref{thm:double_descent}) in the Appendix, we repeat their statement before the proof for convenience. In contrast, new theorems or propositions are numbered according to the section they are stated in, e.g.\ \Cref{prop:random_fourier_features}.

\section{Matrix Algebra} \label{sec:matrix_algebra}

In the following, we will present some facts about matrices that are relevant to this paper. For a general reference, we refer to textbooks on the subject \citep[e.g.][]{golub_matrix_1989, bhatia_matrix_2013}.

Let $n, p \geq 1$ and let $k \equalDef \min \{n, p\}$. The singular value decomposition (SVD) of a matrix $\bfA \in \bbR^{n \times p}$ is a decomposition $\bfA = \bfU \bfD \bfV^\top$ into orthogonal matrices $\bfU \in \bbR^{n \times k}, \bfV \in \bbR^{p \times k}$ with $\bfU^\top \bfU = \bfV^\top \bfV = \bfI_k$ and a diagonal matrix $\bfD \in \bbR^{k \times k}$ with non-negative diagonal elements $s_1(\bfA) \geq \hdots \geq s_k(\bfA)$ called \emph{singular values}. %

For a given symmetric square matrix $\bfA \in \bbR^{n \times n}$ with eigenvalues $\lambda_1(\bfA) \geq \hdots \geq \lambda_n(\bfA)$, the trace satisfies
\begin{IEEEeqnarray*}{+rCl+x*}
\tr(\bfA) = \sum_{i=1}^n A_{ii} = \sum_{i=1}^n \lambda_i~.
\end{IEEEeqnarray*}
The trace is linear and invariant under cyclical permutations. We use this multiple times in arguments of the following type: If $\bfv \in \bbR^p$ and $\bfA \in \bbR^{p \times p}$ are stochastically independent (e.g.\ because $\bfA$ is constant), we can write
\begin{IEEEeqnarray*}{+rCl+x*}
\bbE_{\bfv} \bfv^\top \bfA \bfv = \bbE_{\bfv} \tr(\bfv^\top \bfA \bfv) = \bbE_{\bfv} \tr(\bfA \bfv \bfv^\top) = \tr(\bfA \bbE_{\bfv} \bfv \bfv^\top)~.
\end{IEEEeqnarray*}
Moreover, if $\bfA \matgr 0$, then for all $i \in [n]$, $\lambda_i(\bfA) > 0$ and we have
\begin{IEEEeqnarray*}{+rCl+x*}
\lambda_{n+1-i}(\bfA^{-1}) = \frac{1}{\lambda_i(\bfA)}~.
\end{IEEEeqnarray*}

For a positive definite matrix $\bfSigma \in \bbR^{p \times p}$, the SVD and the eigendecomposition coincide as $\bfSigma = \bfU \diag(\lambda_1(\bfSigma), \hdots, \lambda_p(\bfSigma)) \bfU^\top$ and we can define the inverse square root as
\begin{IEEEeqnarray*}{+rCl+x*}
\bfSigma^{-1/2} & \equalDef & \bfU \diag(\lambda_1(\bfSigma)^{-1/2}, \hdots, \lambda_p(\bfSigma)^{-1/2}) \bfU^\top \matgr 0~,
\end{IEEEeqnarray*}
which is the unique s.p.d.\ matrix satisfying $(\bfSigma^{-1/2})^2 = \bfSigma^{-1}$.

By the Courant-Fischer-Weyl theorem, two symmetric matrices $\bfA, \bfB \in \bbR^{n \times n}$ with $\bfA \matleq \bfB$ satisfy
\begin{IEEEeqnarray*}{+rCl+x*}
\lambda_i(\bfA) = \max_{\substack{\calV \subseteq \bbR^d\text{ subspace} \\ \dim \calV = i}} \min_{\substack{\bfv \in \calV \\ \|\bfv\|_2 = 1}} \bfv^\top \bfA \bfv & \leq & \max_{\substack{\calV \subseteq \bbR^d\text{ subspace} \\ \dim \calV = i}} \min_{\substack{\bfv \in \calV \\ \|\bfv\|_2 = 1}} \bfv^\top \bfB \bfv = \lambda_i(\bfB)~.
\end{IEEEeqnarray*}

Let $\bfA \in \bbR^{n \times p}$. The Moore-Penrose pseudoinverse $\bfA^+$ of $\bfA$ satisfies the following relations \citep[see e.g.\ Section 1.1.1 in][]{wang_generalized_2018}:
\begin{itemize}
\item Suppose $\bfA$ has the SVD $\bfA = \bfU \bfD \bfV^\top$, where $\bfD = \diag(s_1, \hdots, s_k)$, $k \equalDef \min \{n, p\}$. Using the convention $1/0 \equalDef 0$, we can write $\bfA^+ = \bfV \bfD^+ \bfU^\top$, where $\bfD^+ \equalDef \diag(1/s_1, \hdots, 1/s_k)$.
\item $\bfA^+ = (\bfA^\top \bfA)^+ \bfA^\top = \bfA^\top (\bfA \bfA^\top)^+$.
\item If $\bfA$ is invertible, then $\bfA^+ = \bfA^{-1}$.
\item $\bfA^+ (\bfA^+)^\top = (\bfA^\top \bfA)^+$.
\end{itemize}

We will also use a basic fact on the Schur complement that, for example, is outlined in Appendix A.5.5 in \cite{boyd_convex_2004}: If
\begin{IEEEeqnarray*}{+rCl+x*}
0 \matless \bfA = \begin{pmatrix}
\bfA_{11} & \bfA_{12} \\
\bfA_{21} & \bfA_{22}
\end{pmatrix} \in \bbR^{(m_1 + m_2) \times (m_1 + m_2)}~,
\end{IEEEeqnarray*}
then $\bfA_{22} \matgr 0$ and $\bfA_{11} - \bfA_{12} \bfA_{22}^{-1} \bfA_{21} \matgr 0$ and the top-left $m_1 \times m_1$ block of $\bfA^{-1}$ is given by $(\bfA_{11} - \bfA_{12} \bfA_{22}^{-1} \bfA_{21})^{-1}$.

\section{Experiments} \label{sec:experiments}

In the following, we experimentally investigate $\Enoise$ for different (random) feature maps. We will first give an overview of the plots and then explain the details of how they were generated and some implications. More details can be found in the provided code. All empirical curves show one estimated standard deviation of the mean estimator as a shaded area around the estimated mean, but this standard deviation is sometimes too small to be visible. We assume $\Var(y|\bfz) = 1$ almost surely over $\bfz$ such that $(\mathrm{I})$ in \Cref{thm:double_descent} holds with equality with $\sigma^2 = 1$.

\begin{figure}[hbt]
\centering
\includegraphics[width=\textwidth]{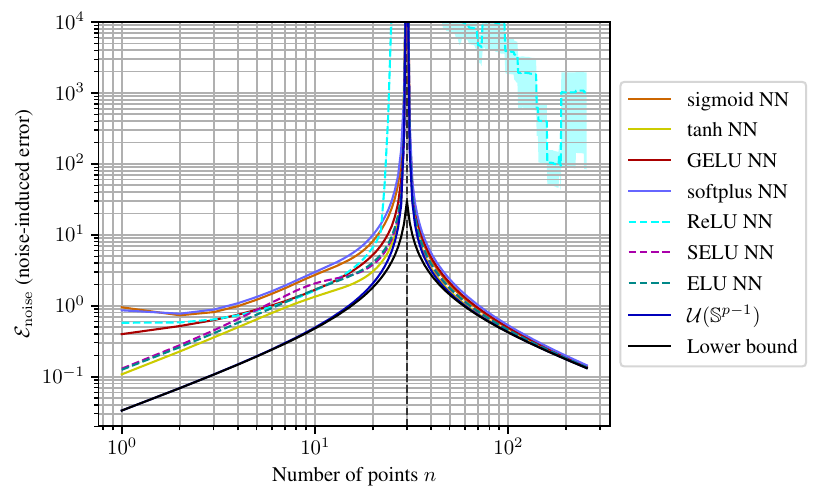}
\caption{Estimated $\Enoise$ for random neural network feature maps (cf. \Cref{prop:random_network}) with different activation functions and $d_0 = d = 10, d_1=d_2=256, d_3=p=30$. We include the results from $P_Z = \calU(\bbS^{d-1})$ as comparison, cf.\ \Cref{sec:lower_bound_quality}.} \label{fig:nn_results_n}
\end{figure}

\Cref{fig:nn_results_n} shows $\Enoise$ for random three-layer neural network feature maps with $p=30$, different activation functions and varying $n$. Note that all neural networks produce higher $\Enoise$ than $P_Z = \calU(\bbS^{p-1})$. The effect of non-isotropic covariance matrices in the overparameterized regime can be clearly seen when comparing \Cref{fig:nn_results_n} to \Cref{fig:nn_results_n_over}, where features have been whitened separately for each set of random parameters $\bftheta$, cf.\ \Cref{rem:flat_spectrum}. \Cref{fig:nn_results_p} then shows $\Enoise$ for $n=30$ and varying $p$. 

\begin{figure}[hbt]
\centering
\includegraphics[width=\textwidth]{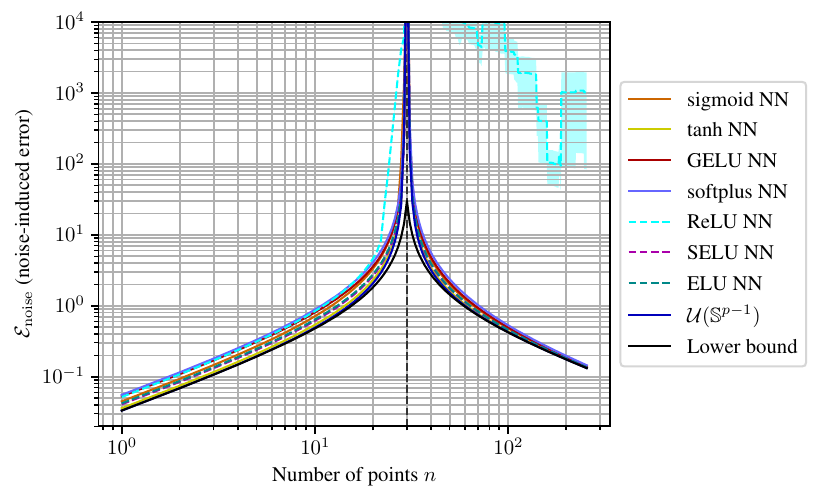}
\caption{As in \Cref{fig:nn_results_n} but with whitened features, i.e.\ using $\bbE \tr((\bfW \bfW^\top)^{-1})$ in the overparameterized case $n \leq p$, cf.\ \Cref{thm:double_descent} and \Cref{rem:flat_spectrum}.} \label{fig:nn_results_n_over}
\end{figure}

\begin{figure}[hbt]
\centering
\includegraphics[width=\textwidth]{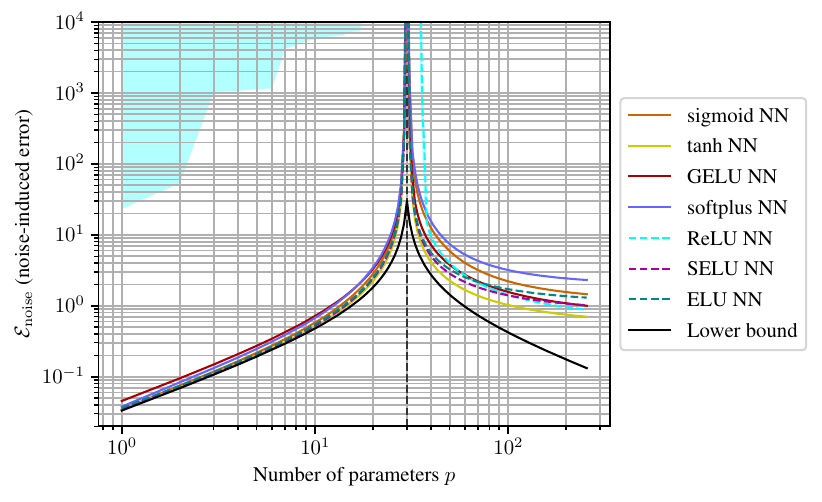}
\caption{As in \Cref{fig:nn_results_n} but with $n=30$ fixed and varying $d_3 = p$.} \label{fig:nn_results_p}
\end{figure}

Note that double descent is usually plotted as a function of the \quot{model complexity} $p$ as in \cite{belkin_reconciling_2019}, but varying $p$ is only possible when we have a (random) feature map $\phi_{\bftheta}^{(p)}: \bbR^d \to \bbR^p$ for each value of $p$. For the following NTK and polynomial kernels, there is no canonical way to define such a sequence of feature maps. For this reason, we will plot their results only with varying $n$. Double descent as a function of the number of samples $n$ has for example been pointed out by \cite{nakkiran_deep_2021} and \cite{nakkiran_more_2019}.

\Cref{fig:ntk_results_n} shows $\Enoise$ for various random finite-width Neural Tangent Kernels (NTKs), cf.\ \cite{jacot_neural_2018}. These results mostly exhibit larger $\Enoise$ than the random NN feature maps from \Cref{fig:nn_results_n}, perhaps because of correlations parameter gradients in different layers. However, this comparison is not really fair since the NTK feature map uses a much smaller underlying neural network and the input dimension $d$ is smaller.

\begin{figure}[hbt]
\centering
\includegraphics[width=\textwidth]{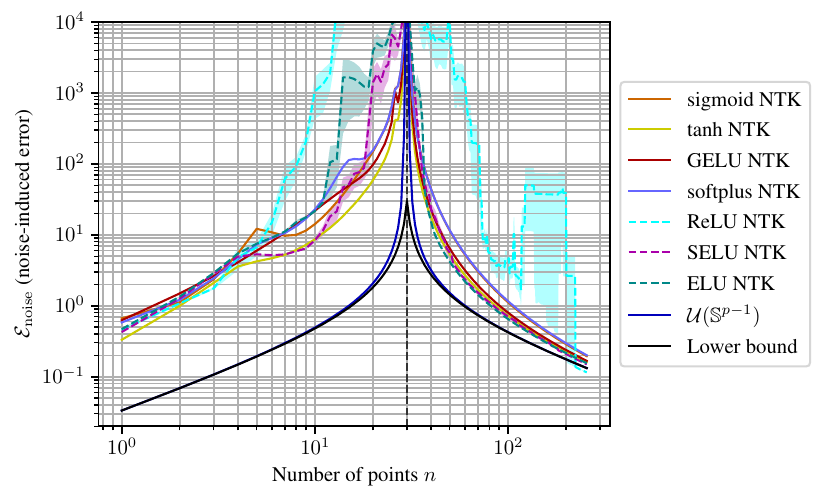}
\caption{Estimated $\Enoise$ for Neural Tangent Kernel (NTK) feature maps given by various random neural networks (cf. \Cref{prop:random_network}) with $d_0 = d = 4, d_1=6, d_3=1$, resulting in $p=4\cdot 6 + 6 \cdot 1 = 30$. We include the empirical results from $P_Z = \calU(\bbS^{d-1})$ as comparison, cf.\ \Cref{sec:lower_bound_quality}.} \label{fig:ntk_results_n}
\end{figure}

\Cref{fig:rff_results_n} and \Cref{fig:rff_vs_sphere} show $\Enoise$ for two variants of random Fourier features for two different scalings of the random parameters. \Cref{fig:rff_vs_sphere} shows that for random Gaussian parameters with large variance (corresponding to an approximated narrow Gaussian kernel), the values of $\Enoise$ for random Fourier features are very close to the values for $P_Z = \calU(\bbS^{p-1})$. We decided to plot these values relative to each other as in \Cref{fig:sphere_optimality} since the curves would overlap in a normal plot like \Cref{fig:rff_results_n}. Note that the the version of random Fourier features with $\sin$ and $\cos$ features automatically yields constant $\|\bfz\|_2$ like for $P_Z = \calU(\bbS^{p-1})$.

\begin{figure}[hbt]
\centering
\includegraphics[width=\textwidth]{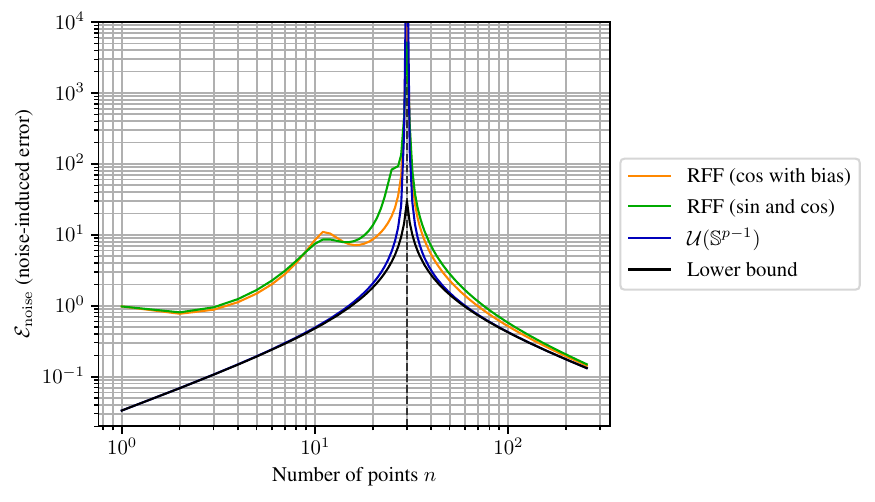}
\caption{Estimated $\Enoise$ for the two versions of random Fourier features described in \Cref{sec:rff}. We use $d = 10$, $P_X = \calN(0, \bfI_d)$, $p = 30$ and the weight vector distribution $P_k = \calN(0, \frac{1}{p}\bfI_d)$ (cf.\ \Cref{sec:rff}).} \label{fig:rff_results_n}
\end{figure}

\begin{figure}[hbt]
\centering
\includegraphics[width=\textwidth]{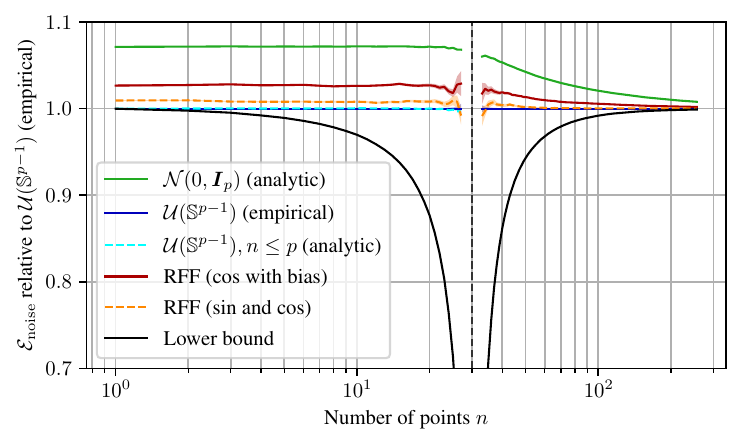}
\caption{Estimated $\Enoise$ for the two versions of random Fourier features described in \Cref{sec:rff} relative to $P_Z = \calU(\bbS^{p-1})$. We use $d = 10$, $P_X = \calN(0, \bfI_d)$, $p = 30$ and the weight vector distribution $P_k = \calN(0, \bfI_d)$ (cf.\ \Cref{sec:rff}).} \label{fig:rff_vs_sphere}
\end{figure}

Finally, \Cref{fig:poly_results_n} shows that linear regression with the polynomial kernel is quite sensitive to label noise. We use $p=35$ for the polynomial kernel since there are no particularly interesting polynomial kernels with $p=30$.

\begin{figure}[hbt]
\centering
\includegraphics[width=\textwidth]{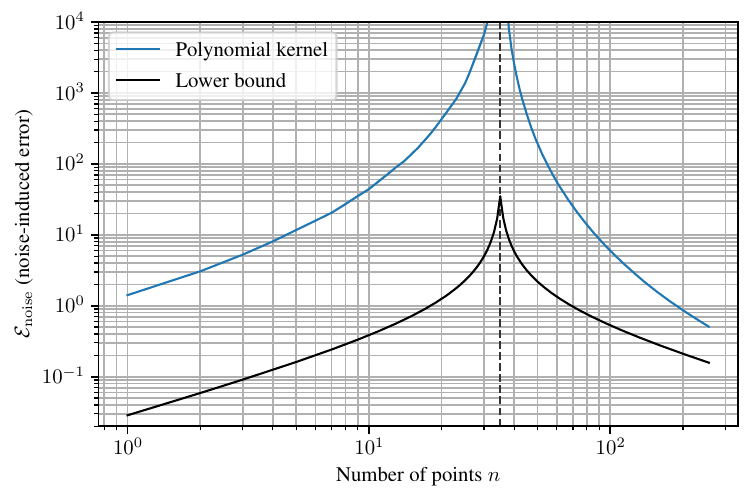}
\caption{Estimated $\Enoise$ for the polynomial kernel (cf. \Cref{prop:poly_kernel}) with $d=3$, $m=4$, $c=1$, $P_X = \calN(0, \bfI_d)$, resulting in $p = \binom{m+d}{m} = \binom{7}{4} = 35$.} \label{fig:poly_results_n}
\end{figure}

\paragraph{Neural Network feature maps} For Figures \ref{fig:nn_results_n}, \ref{fig:nn_results_n_over} and \ref{fig:nn_results_p}, we use random neural network feature maps without biases as in \Cref{prop:random_network} with $d_0 = d = 10, d_1 = d_2 = 256$ and $d_3 = p$. As the input distribution $P_X$, we use $\calN(0, \bfI_d)$. We initialize the NN weights independently as $W_{ij}^{(l)} \sim \calN(0, 1/V_l)$, where
\begin{IEEEeqnarray*}{+rCl+x*}
V_l & \equalDef & \begin{cases}
d_0  &\text{if $l = 0$,} \\
d_l \Var_{u \sim \calN(0, 1)}(\sigma(u))  &\text{if $l \geq 1$.}
\end{cases}
\end{IEEEeqnarray*}
Here, $\Var_{u \sim \calN(0, 1)}(\sigma(u))$ is approximated once by using $10^4$ samples for $u$. This initialization is similar (and for the ReLU activation essentially equivalent) to the initialization method by \cite{he_delving_2015}. The initialization variances are chosen such that for fixed input $\bfx$ with $\|\bfx\|_2 \approx 1$, the pre-activations in each layer are approximately $\calN(0, 1)$-distributed.

\paragraph{NTK feature maps} For \Cref{fig:ntk_results_n}, we use an NTK parameterization similar to the original one proposed by \cite{jacot_neural_2018}, but again with activation-dependent scaling. Our neural network is given by\footnote{For random NN feature maps as in \Cref{prop:random_network}, one can interpret the linear regression as being an extra layer on top of the neural network, and therefore the last layer of the feature map should contain an activation function. For NTK feature maps, one can instead interpret the linear regression as performing a \quot{linearized} training of the whole NN, and the whole NN usually does not contain an activation function in the last layer.}
\begin{IEEEeqnarray*}{+rCl+x*}
\tilde{\phi}_{\bftheta}: \bbR^d \to \bbR^1, \bfx & \mapsto & \frac{1}{\sqrt{V_1}}\bfW^{(1)} \sigma\left(\frac{1}{\sqrt{V_0}} \bfW^{(0)} \right)
\end{IEEEeqnarray*}
with $d_0 = d = 4, d_1 = 6, d_2 = 1$ and $\bfW^{(l)}_{ij} \sim \calN(0, 1)$ i.i.d. Our input distribution is again $P_X = \calN(0, \bfI_d)$. The NTK feature map is then defined as
\begin{IEEEeqnarray*}{+rCl+x*}
\phi_{\bftheta}: \bbR^d \to \bbR^p, \bfx & \mapsto & \frac{\partial}{\partial \bftheta} \tilde{\phi}_{\bftheta}(\bfx)~,
\end{IEEEeqnarray*}
where $p = 6 \cdot 4 + 1 \cdot 6 = 30$ is the number of parameters in $\bftheta$. Note that moving the variances $V_l$ outside of $\bfW^{(l)}$ does not affect the forward pass but only the backward pass (i.e.\ the derivatives). 

While we have not theoretically established the properties (FRK) and (COV) for such feature maps, we can do this experimentally for analytic activation functions like $\mathrm{sigmoid}$, $\tanh$, $\mathrm{softplus}$ and $\mathrm{GELU}$: Since the random NTK feature map is a derivative of the analytic random NN feature map, it is also analytic. By \Cref{prop:random_feature_map}, if (FRK) does not hold, then for every fixed $\thetaex$, the range of $\phi_{\thetaex}$ must be contained in a proper linear subspace of $\bbR^p$, and therefore $\bfZ = \phi_{\bftheta}(\bfX)$ never has full rank for $n \geq p$. In this case, the singular value $s_p(\bfZ)$ would be zero, and even accounting for numerical errors, the \quot{inverse condition number} $\frac{s_p(\bfZ)}{s_1(\bfZ)}$ should at most be of the order of 64-bit float machine precision, i.e.\ around $10^{-16}$. However, among $10^4$ samples of $\bfZ$ for $n \equalDef 90 \geq 30 = p$, the maximum observed \quot{inverse condition number} was greater than $10^{-3}$ for all of the activation functions $\mathrm{sigmoid}$, $\tanh$, $\mathrm{softplus}$ and $\mathrm{GELU}$.\footnote{We use $n > p$ since this usually improves the \quot{inverse condition number} of $\bfZ$.} Hence, by \Cref{prop:cov_frk} and \Cref{prop:random_feature_map}, we can confidently conclude that (COV) and (FRK) hold for all $n$ almost surely over $\bftheta$ for this network size and these activation functions.

\paragraph{Estimation of $\Enoise$} To estimate $\Enoise$, we proceed as follows: Recall from \Cref{sec:lin_reg} that ridgeless regression is the limit of ridge regression for $\lambda \searrow 0$. We use a small regularization of $\lambda = 10^{-12}$ to improve numerical stability. Also for numerical stability, we use a singular value decomposition (SVD) $\bfZ = \bfU \diag(s_1, \hdots, s_k) \bfV^\top$ with $k \equalDef \min\{n, p\}$ as in \Cref{sec:matrix_algebra}. The regularized approximation of $\bfZ^+$ is then
\begin{IEEEeqnarray*}{+rCl+x*}
\bfZ^+ \approx (\bfZ^\top \bfZ + \lambda \bfI_p)^{-1} \bfZ^\top = \bfV \diag\left(\frac{s_1}{s_1^2 + \lambda}, \hdots, \frac{s_k}{s_k^2 + \lambda}\right) \bfU^\top~. \IEEEyesnumber \label{eq:regularized_approximation}
\end{IEEEeqnarray*}
We can then estimate
\begin{IEEEeqnarray*}{+rCl+x*}
\tr((\bfZ^+)^\top \bfSigma \bfZ^+) & = & \tr(\bfZ^+ (\bfZ^+)^\top \bfSigma) \approx \tr\left(\bfV \diag\left(\frac{s_1^2}{(s_1^2 + \lambda)^2}, \hdots, \frac{s_k^2}{(s_k^2 + \lambda)^2}\right) \bfV^\top \bfSigma\right).\qquad \IEEEyesnumber \label{eq:svd_reformulation}
\end{IEEEeqnarray*}

We can then obtain $m = 10^4$ (non-ReLU NNs, polynomial kernel, high-variance random Fourier features) or $m = 10^5$ (all other empirical results) sampled estimates for $\tr((\bfZ^+)^\top \bfSigma \bfZ^+)$ to obtain a Monte-Carlo estimate of $\bbE \tr((\bfZ^+)^\top \bfSigma \bfZ^+)$ by repeating the following procedure $m$ times:
\begin{enumerate}[(1)]
\item Sample a random parameter $\bftheta$.
\item Sample random matrices $\bfX \in \bbR^{n \times d}$ and $\tilde{\bfX} \in \bbR^{l \times d}$, $l = 10^4$, with i.i.d.\ $P_X$-distributed rows.
\item Compute $\bfZ \equalDef \phi_{\bftheta}(\bfX)$ and $\tilde{\bfZ} \equalDef \phi_{\bftheta}(\tilde{\bfX})$.
\item Compute the estimate $\bfSigma \equalDef \frac{1}{l} \tilde{\bfZ}^\top \tilde{\bfZ}$.
\item Compute a regularized estimate of $\tr((\bfZ^+)^\top \bfSigma \bfZ^+)$ using the SVD of $\bfZ$ and Eq.~\eqref{eq:svd_reformulation}.
\end{enumerate}
For performance reasons, we make the following modification of steps (2) and (5): Since we perform the computation for all $n \in [256]$, we sample $\bfX \in \bbR^{256 \times d}$ and then, for all $n \in [256]$, perform the computation for $n$ using the first $n$ rows of $\bfZ$. Hence, the estimates for different $n$ are not independent. In \Cref{fig:nn_results_p}, we perform an analogous optimization for $p$ by taking the first $p \in [256]$ of the $d_3 = 256$ output features in $\bfZ$.

\paragraph{Curious ReLU results} The curves for the ReLU NNs and ReLU NTKs in the underparameterized regime $p \leq n$ may seem odd. The locally almost constant \quot{plateaus} are presumably an artifact of the non-independent estimates for different $n$ as explained in the last paragraph. As discussed in \Cref{sec:assumptions}, networks with ReLU, ELU, or SELU activations do not satisfy (FRK). It seems plausible that, since both \quot{halves} of the ReLU function are linear, ReLU networks have a significantly higher chance than SELU or ELU networks to be initialized with \quot{bad} parameters that are likely to generate \quot{degenerate} feature matrices $\bfZ$ that do not have full rank at $n = p$ and only become full rank for some $n > p$. 
When inspecting the data underlying the plots, the estimate of $\Enoise$ for ReLU networks in the underparameterized regime seems to be dominated by few outliers. It seems that the distribution of $\tr((\bfZ^+)^\top \bfSigma \bfZ^+)$ for ReLU networks is often so heavy-tailed that computing more Monte Carlo samples does not significantly reduce the estimation uncertainty.

\paragraph{Whitening} For computing $\bbE((\bfW \bfW^\top)^{-1}) = \bbE((\bfZ \bfSigma^{-1} \bfZ^\top)^{-1})$ in the overparameterized case in \Cref{fig:nn_results_n_over}, we regularize both matrix inversions on the right-hand side as above: For a symmetric matrix $\bfA \in \bbR^{m \times m}$, we use a symmetric eigendecomposition $\bfA = \bfU \diag(s_1, \hdots, s_m) \bfU^\top$ and approximate
\begin{IEEEeqnarray*}{+rCl+x*}
\bfA^{-1} & \approx & \bfU \diag\left(\frac{s_1}{s_1^2 + \lambda}, \hdots, \frac{s_m}{s_m^2 + \lambda}\right) \bfU^\top~.
\end{IEEEeqnarray*}

\paragraph{Optimization}
For our optimized feature maps in \Cref{fig:sphere_optimality} with $p=30$, we use a neural network feature map with $d_0 = d = p = 30, d_1 = d_2 = 256, d_3 = p = 30$ and $\tanh$ activation function. We use NTK parameterization and zero-initialized biases, leading to
\begin{IEEEeqnarray*}{+rCl+x*}
\phi_{\bftheta}(\bfx) & = & \sigma(\bfb^{(2)} + V_2^{-1/2} \bfW^{(2)} \sigma(\bfb^{(1)} + V_1^{-1/2} \bfW^{(1)} \sigma(\bfb^{(0)} + V_0^{-1/2} \bfW^{(0)} \bfx)))
\end{IEEEeqnarray*}
with independent initialization $W^{(l)}_{ij} \sim \calN(0, 1)$, $b^{(l)}_i = 0$. As the input distribution, we use $P_X = \calN(0, \bfI_d)$. For given $\bftheta$, let $\bfSigma_{\bftheta} \equalDef \bbE_{\bfx} \phi_{\bftheta}(\bfx) \phi_{\bftheta}(\bfx)^\top$, i.e.\ we define the second moment matrix $\bfSigma$ as depending on $\bftheta$. We then optimize the loss function
\begin{IEEEeqnarray*}{+rCl+x*}
L(\bftheta) \equalDef \bbE_{\bfX} \tr((\phi_{\bftheta}(\bfX)^+)^\top \bfSigma_{\bftheta} \phi_{\bftheta}(\bfX)^+)
\end{IEEEeqnarray*}
using AMSGrad \citep{reddi_convergence_2018} with a learning rate that linearly decays from $10^{-3}$ to $0$ over $1000$ iterations. To approximate $L(\bftheta)$ in each iteration, we approximate $\bfSigma_{\bftheta}$ using $1000$ Monte Carlo points and we draw $1024$ different realizations of $\bfX$ (this can be considered as using batch size $1024$). We also use a regularized version as in Eq.~\eqref{eq:regularized_approximation}, but we omit the SVD for reasons of differentiability.

\section{A Counterexample} \label{sec:counterexample}

\begin{example} \label{ex:discrete_uniform}
Let $p \geq 2$. Consider the uniform distribution $P_Z$ on an orthonormal basis $\{\bfe_1, \hdots, \bfe_p\} \subseteq \bbR^p$. Then, $\bfSigma = \frac{1}{p} \sum_{i=1}^p \bfe_i \bfe_i^\top = \frac{1}{p} \bfI_p$ and hence (COV) is satisfied. However, from \Cref{prop:cov_frk} it is easy to see that for any $n \geq 2$, (FRK) is not satisfied. Indeed, if the vector $\bfe_i$ occurs $m_i$ times among the samples $\bfz_1, \hdots, \bfz_n$, then $\bfZ^\top \bfZ = \diag(m_1, \hdots, m_p)$.
Assuming $\Var(y|\bfz) = \sigma^2 \equalDef 1$ for all $\bfz$, we then obtain from \Cref{thm:double_descent} with the convention $\frac{1}{0} \equalDef 0$:
\begin{IEEEeqnarray*}{+rCl+x*}
\Enoise & = & \bbE_{\bfZ} \tr((\bfZ^+)^\top \bfSigma \bfZ^+) = \frac{1}{p} \bbE_{\bfZ} \tr(\bfZ^+ (\bfZ^+)^\top) = \frac{1}{p} \bbE_{\bfZ} \tr((\bfZ^\top \bfZ)^+) \\
& = & \frac{1}{p} \sum_{i=1}^p \bbE \frac{1}{m_i} = \bbE \frac{1}{m_1}~,
\end{IEEEeqnarray*}
where $m_1$ follows a binomial distribution with parameters $n$ and $\frac{1}{p}$. Using $\bbE \frac{1}{m_1} \leq \bbE 1 = 1$, it is easy to see that the lower bound is violated for some values of $n$. For example, \Cref{fig:counterexample} shows that for $p=30$, the lower bound is violated in a large region, especially in the overparameterized regime.\footnote{For $n = 1$, (FRK) holds and it is easy to see that the lower bound holds exactly in this case.} The underlying reason is that the function $x \mapsto \frac{1}{x}$ is convex on $(0, \infty)$, but not on $[0, \infty)$. If $\bfZ$ has singular values $s_i$, the pseudo-inverse $\bfZ^+$ has singular values $\frac{1}{s_i}$. If $s_i = 0$ is possible, we cannot use a convexity-based argument anymore.
\end{example}

\begin{figure}[htb]
\centering
\includegraphics[width=\textwidth]{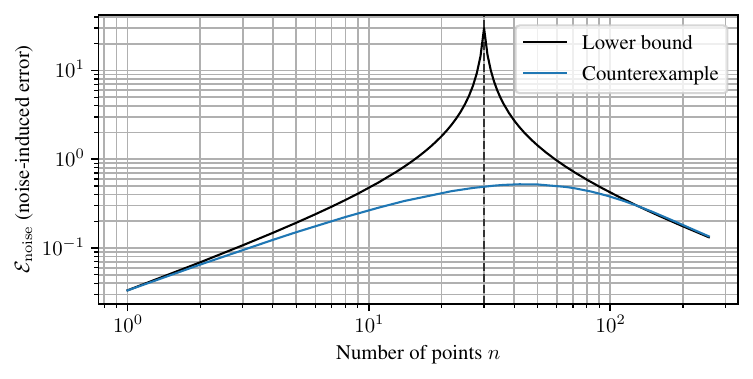}
\caption{The counterexample given in \Cref{ex:discrete_uniform} for $p = 30$ often has lower $\Enoise$ than the lower bound from \Cref{thm:double_descent}, but violates its assumption (FRK). For the counterexample, $\Enoise$ was approximated as explained in \Cref{ex:discrete_uniform} using $10^6$ Monte Carlo samples for each $n$. We assume $\Var(y|\bfz) = 1$ almost surely over $\bfz$.} \label{fig:counterexample}
\end{figure}

\begin{remark}[Histogram regression]
The distribution $P_Z$ in \Cref{ex:discrete_uniform} may seem contrived at first, but such a distribution can arise in histogram regression \citep[cf e.g.\ Chapter 4 in][]{gyorfi_distribution-free_2002}. For example, suppose that $P_X$ is supported on a domain $\calD \subseteq \bbR^d$ and this domain is partitioned into disjoint sets $\calA_1, \hdots, \calA_p$. Then, performing histogram regression on this partition is equivalent to performing ridgeless linear regression with the feature map $\phi: \bbR^d \to \bbR^p$ with $\phi(\bfx) \equalDef \bfe_i$ if $\bfx \in \calA_i$. If all partitions are equally likely, i.e.\ $P_X(\calA_i) = 1/p$ for all $i \in \{1, \hdots, p\}$, then $P_Z$ is the uniform distribution on $\{\bfe_1, \hdots, \bfe_p\}$ as in \Cref{ex:discrete_uniform}.
\end{remark}

\section{Full-rank Results for Random Weight Neural Networks} \label{sec:full_rank_history}

In the literature on neural networks with random weights \citep{schmidt_feed_1992} and the later virtually identical Extreme Learning Machine (ELM) \citep{huang_extreme_2006}, properties similar to (FRK) have been investigated for random neural network feature maps. In the following, we review the relevant approaches known to us and explain why most of them are flawed.

First, \cite{sartori_simple_1991} state a result where the assumptions are not clearly specified, but which could look as follows:

\begin{claim}[\cite{sartori_simple_1991}, informally discussed after Lemma 1] \label{claim:sartori}
For $n = p$, consider a single-layer random neural network with weights $\bfW^{(0)} \in \bbR^{(n-1) \times d}$ and signum activation $\sigma$ that appends a $1$ to its output: 
\begin{IEEEeqnarray*}{+rCl+x*}
\phi_{\bftheta}: \bbR^d \to \bbR^p, \bfx \mapsto (\sigma(\bfW^{(0)} \bfx), 1)~.
\end{IEEEeqnarray*}
If $\bfx_1, \hdots, \bfx_n \in \bbR^d$ are distinct, then for almost all $\bfW^{(0)}$, $\phi_{\bftheta}(\bfX)$ is invertible.
\end{claim}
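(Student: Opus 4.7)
The plan is to begin by recognising that, with the signum activation, the map $\bfW^{(0)} \mapsto \phi_{\bftheta}(\bfX)$ is piecewise constant: it depends only on the sign pattern $(\sgn \langle \bfw_j^{(0)}, \bfx_i \rangle)_{i \in [n], j \in [n-1]}$, where $\bfw_j^{(0)}$ denotes the $j$-th row of $\bfW^{(0)}$. Weight space $\bbR^{(n-1) d}$ thereby decomposes into finitely many open cells cut out by the hyperplane arrangement $\{\bfw_j : \langle \bfw_j, \bfx_i \rangle = 0\}_{i \in [n]}$ (for each fixed $j$), on the union of which the map is constant and whose complement is a finite union of codimension-one hyperplanes and hence Lebesgue-null. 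A natural strategy is then (i) to ignore the measure-zero boundary, and (ii) to argue that on every open cell the resulting $\{-1,+1\}^{n \times (n-1)}$-matrix, augmented with the all-ones last column, is invertible, using that the distinctness of $\bfx_1, \hdots, \bfx_n$ forces distinct rows.

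For step (ii) one would want to invoke a statement of the form: since $\bfx_i \ne \bfx_{i'}$, the hyperplane $\{\bfw : \langle \bfw, \bfx_i - \bfx_{i'}\rangle = 0\}$ has positive-measure complement on either side, so by choosing the rows $\bfw_j^{(0)}$ independently one can realise sufficiently many sign patterns to separate all inputs and produce an invertible matrix on at least one (hence a positive-measure family of) cells. Combined with a combinatorial counting of achievable sign patterns via the VC dimension of linear halfspaces through the origin, one might hope to upgrade this to ``all'' cells.

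The main obstacle is that this upgrade simply fails, because the achievable sign patterns are constrained by the geometry of $\bfx_1, \hdots, \bfx_n$ and in low dimensions this constraint is severe. Concretely, take $d = 1$, $n = 3$ and the distinct inputs $\bfx_i = i$. Then for every $\bfw_j^{(0)} \in \bbR$ one has $\sgn(\bfw_j^{(0)} \bfx_i) = \sgn(\bfw_j^{(0)})$ independently of $i$, so for every $\bfW^{(0)} \in \bbR^{2 \times 1}$ the three rows of $\phi_{\bftheta}(\bfX)$ coincide and the matrix has rank at most one. The set of good $\bfW^{(0)}$ is thus empty, not of full measure, and by \Cref{prop:cov_frk}(i) (FRK) also fails at $n=3$. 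This shows that the claim as stated is false, and that any rescue must add hypotheses excluding such degeneracies; for instance, an explicit bias term (placing the hyperplanes $\langle \bfw_j, \bfx_i - \bfx_{i'}\rangle = 0$ in genuinely general position) or an assumption that the $\bfx_i$ are not contained in a common open halfspace through the origin. The cleanest ``proof plan'' is therefore the counterexample above, complemented by a diagnosis of which implicit assumption of \cite{sartori_simple_1991} is missing.
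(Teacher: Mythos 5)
Your proposal correctly identifies that \Cref{claim:sartori} is false, which is exactly the point the paper makes (the paper does not prove the claim, it disproves it). But the route you take is genuinely different. You construct a single explicit counterexample: $d=1$, $n=3$, $\bfx_i = i > 0$, where $\sgn(w_j x_i) = \sgn(w_j)$ is independent of $i$, so every row of $\phi_{\bftheta}(\bfX)$ is identical and the set of good $\bfW^{(0)}$ is \emph{empty}. This is a clean, elementary refutation tailored to a particular data configuration. The paper instead gives a general measure-theoretic pigeonhole argument valid for \emph{any} $d$ and \emph{any} distinct $\bfx_1, \dots, \bfx_n$ with $n \geq 3$: since the cells $\calU_{\bftau} = \{\bfw \mid \sigma(\bfw^\top \bfx_j) = \tau_j \text{ for all } j\}$ for $\bftau \in \{-1,0,1\}^n$ partition $\bbR^d$ and there are only finitely many of them, at least one $\calU_{\bftau^*}$ has positive Lebesgue measure, and the event that two rows of $\bfW^{(0)}$ both fall in $\calU_{\bftau^*}$ therefore has positive probability; on that event two \emph{columns} of $\phi_{\bftheta}(\bfX)$ coincide and the matrix is singular. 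The two arguments buy different things: yours exhibits an extreme failure (the good set is empty, not merely non-full), while the paper's shows that the failure is generic and not an artifact of collinear or low-dimensional data. Your diagnostic remark about biases and half-space conditions is consistent with the paper's subsequent discussion of \Cref{claim:tamura}. The only small caveat is that your appeal to \Cref{prop:cov_frk}(i) is somewhat informal here, since that proposition is phrased for a distribution $P_Z$ rather than for fixed deterministic inputs; it holds in spirit if one takes $P_X$ to be (say) uniform on $\{1,2,3\}$, but it is not needed for the disproof.
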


This claim is evidently false for $n \geq 2$. For example, the following argument works for $n \geq 3$: For $\bftau \in \{-1, 0, 1\}^n$, let $\calU_{\bftau} \equalDef \{\bfw \in \bbR^n \mid \sigma(\bfw^\top \bfx_j) = \tau_j$ for all $j\}$. Then, $\bigcup_{\bftau \in \{-1, 0, 1\}^n} \calU_{\bftau} = \bbR^n$ and since $\{-1, 0, 1\}^n$ is finite, there exists $\bftau^*$ such that $\calU_{\bftau^*}$ is not a Lebesgue null set. Hence the set $\calW \equalDef \{\bfW^{(0)} \in \bbR^{(n-1) \times n} \mid $ at least two rows of $\bfW^{(0)}$ are in $\calU_{\bftau^*}\}$ is not a Lebesgue null set. But for all $\bfW^{(0)} \in \calW$, the matrix $\phi_{\bftheta}(\bfX) = (\sigma(\bfX (\bfW^{(0)})^\top) \mid \bfone_{n \times 1})$ has two columns equal to $\bftau^*$ and is therefore not invertible, contradicting \Cref{claim:sartori}. 

Second, \cite{tamura_capabilities_1997} state a result where the assumptions are not clearly formulated, but which could look as follows:

\begin{claim}[\cite{tamura_capabilities_1997}] \label{claim:tamura}
For $n = p$, consider a single-layer random neural network with weights $\bfW^{(0)} \in \bbR^{(n-1) \times d}$, biases $\bfb^{(0)} \in \bbR^{n-1}$ and sigmoid activation $\sigma$ that appends a $1$ to its output: 
\begin{IEEEeqnarray*}{+rCl+x*}
\phi_{\bftheta}: \bbR^d \to \bbR^p, \bfx \mapsto (\sigma(\bfW^{(0)} \bfx + \bfb^{(0)}), 1)~.
\end{IEEEeqnarray*}
If $\bfx_1, \hdots, \bfx_n \in \bbR^d$ are distinct, then for almost all $\bfW^{(0)}$ and all $a < b$, there exists $\bfb^{(0)} \in [a, b]^{n-1}$ such that $\phi_{\bftheta}(\bfX)$ is invertible.
\end{claim}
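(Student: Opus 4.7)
My plan is to prove the claim using the same analytic-function machinery employed in \Cref{prop:random_network} and the supporting propositions in \Cref{sec:assumptions}, since sigmoid is analytic (cf.\ \Cref{table:analytic_activations}). The strategy decomposes into (i) reducing the statement to a non-vanishing claim for an analytic function, (ii) exhibiting one parameter choice that witnesses invertibility, and (iii) passing from a measure-zero zero set to the stated existence in every open box.

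First, define $F: \bbR^{(n-1) \times d} \times \bbR^{n-1} \to \bbR$ by $F(\bfW^{(0)}, \bfb^{(0)}) \equalDef \det \phi_{\bftheta}(\bfX)$. Since $\sigma$ is analytic, each entry $\sigma(\bfw_i^\top \bfx_j + b_i)$ is analytic jointly in $(\bfW^{(0)}, \bfb^{(0)})$, the last column is the constant $1$, and the determinant is a polynomial in the entries; hence $F$ is real analytic. By a standard fact about real-analytic functions (essentially the one used in the proof of \Cref{prop:random_feature_map}), if $F \not\equiv 0$, then its zero set is a Lebesgue null set in $\bbR^{(n-1)d + (n-1)}$.

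Second, I construct a single $(\bfW^{(0)}_*, \bfb^{(0)}_*)$ with $F \neq 0$. Since $\bfx_1, \hdots, \bfx_n$ are distinct, pick $\bfw \in \bbR^d$ with the scalars $t_j \equalDef \bfw^\top \bfx_j$ all distinct; reindex so that $t_1 < t_2 < \hdots < t_n$ and choose thresholds $s_i \in (t_i, t_{i+1})$ for $i = 1, \hdots, n-1$. Set $\bfw_i \equalDef c\bfw$ and $b_i \equalDef -c s_i$ for a scalar $c > 0$. As $c \to \infty$, $\sigma(c(t_j - s_i)) \to \bbone_{j > i}$, so $\phi_{\bftheta}(\bfX)$ converges entrywise to the matrix $M$ with $M_{j,i} = \bbone_{j > i}$ for $i \leq n-1$ and $M_{j,n} = 1$. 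Reversing the column order of $M$ yields an upper-triangular matrix with $1$'s on the diagonal, so $\det M = \pm 1 \neq 0$; by continuity of the determinant, $F(\bfW^{(0)}_*, \bfb^{(0)}_*) \neq 0$ for sufficiently large $c$.

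Third, combine steps: $F \not\equiv 0$, hence $Z \equalDef \{(\bfW^{(0)}, \bfb^{(0)}) : F = 0\}$ is a null set. By Fubini, for Lebesgue-almost every $\bfW^{(0)}$ the slice $Z_{\bfW^{(0)}} \subseteq \bbR^{n-1}$ is a null set; for any $a < b$, the open box $(a,b)^{n-1}$ has positive measure and therefore cannot be contained in $Z_{\bfW^{(0)}}$, yielding some $\bfb^{(0)} \in [a,b]^{n-1}$ with $\phi_{\bftheta}(\bfX)$ invertible. The main obstacle is the explicit construction in step (ii): one must verify that the limit matrix is genuinely invertible (done above via column reversal) and that the analytic continuity argument cleanly bridges the sigmoid limit to actual sigmoid values at finite $c$. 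Everything else is routine application of the analytic-zero-set technology already used throughout \Cref{sec:assumptions}.
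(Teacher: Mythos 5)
The paper never proves \Cref{claim:tamura}: it reproduces the statement verbatim as a claim from the literature, and the surrounding text in \Cref{sec:full_rank_history} is a critique arguing that the original proof by \cite{tamura_capabilities_1997} has a gap (the unsubstantiated non-solvability of an overdetermined system of derivative equations). There is therefore no paper proof to compare against, and what you have written is a genuinely new argument. Your overall strategy---treat $F(\bfW^{(0)},\bfb^{(0)}) \equalDef \det\phi_{\bftheta}(\bfX)$ as a jointly analytic function, exhibit one non-vanishing evaluation, invoke \Cref{lemma:multivariate_identity_theorem} to conclude that the zero set is Lebesgue-null, and then use Fubini to deduce that for almost all $\bfW^{(0)}$ the $\bfb^{(0)}$-slice is null, hence no box $[a,b]^{n-1}$ with $a<b$ can be contained in it---is sound and in the spirit of \Cref{prop:cov_frk} and \Cref{prop:random_feature_map}. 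Your step-function-limit witness construction is also structurally different from the paper's technique for the closely related \Cref{lemma:distinct_independence}, which uses a Vandermonde determinant together with the Taylor expansion of $\sigma$; yours exploits the saturation of sigmoid at $\pm\infty$ rather than its Taylor coefficients, so it is shorter here but does not generalize to non-saturating analytic activations the way the Vandermonde argument does.

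One concrete step in your witness is wrong as written: reversing the columns of the limit matrix $M$, where $M_{j,i} = \bbone_{j>i}$ for $i \le n-1$ and $M_{j,n}=1$, does \emph{not} produce an upper-triangular matrix for $n \ge 3$. Already for $n=3$ the column-reversed matrix has a $0$ in position $(2,2)$ and a $1$ in position $(2,3)$, so it is neither upper-triangular nor has ones on the diagonal. The conclusion $\det M = \pm 1 \neq 0$ is nevertheless correct; the cleanest justification is via row operations. Keeping $R_1$ and replacing $R_j$ by $R_j - R_{j-1}$ (using the original rows) for $j = 2, \ldots, n$ preserves the determinant and sends $M$ to the matrix with rows $\bfe_n^\top, \bfe_1^\top, \bfe_2^\top, \ldots, \bfe_{n-1}^\top$, a cyclic permutation matrix with determinant $(-1)^{n-1}$. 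With that local repair, your proof goes through.
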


\cite{tamura_capabilities_1997} attempt to prove this claim via showing that the curve $c_i: [a, b] \to \bbR^n, b_i \mapsto \left(\sigma((\bfw_i^{(0)})^\top \bfx_j + b_i)\right)_{j \in [n]}$ is not contained in any $(n-1)$-dimensional subspace of $\bbR^n$, which would allow them to pick a suitable bias $b_i$ for each curve $c_i$ such that the $c_i(b_i)$ and $(1, \hdots, 1)^\top$ are linearly independent. Although this part is formulated confusingly, it should work out. The major problem is that under the counterassumption that $c_i$ lies in an $(n-1)$-dimensional subspace, they construct an infinite (overdetermined) system of equations involving derivatives of the sigmoid function which they claim has no solution. However, in general, overdetermined systems can have solutions and they do not prove why this would not be the case in their situation. Indeed, the only properties of the sigmoid function they use is that it is $C^\infty$ and not a polynomial, and it is not hard to see that the exponential function has these properties as well and leads to a solvable overdetermined system since its derivatives are all identical.

This leads us to the next claim:

\begin{claim}[\cite{huang_learning_2003}, Theorem 2.1] \label{claim:huang_1}
Consider the setting of \Cref{claim:tamura} but with $\bfW^{(0)} \in \bbR^{n \times d}$ instead of appending a $1$ to all feature vectors. If $\bfx_1, \hdots, \bfx_n \in \bbR^d$ are distinct and $\bftheta$ has a Lebesgue density, then $\phi_{\bftheta}(\bfX)$ is invertible almost surely over $\bftheta$.
\end{claim}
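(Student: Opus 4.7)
The plan is to prove the claim via the analytic-zero-set principle combined with an explicit construction. Define $F: \bbR^{n \times d} \times \bbR^n \to \bbR$ by
\begin{IEEEeqnarray*}{+rCl+x*}
F(\bfW, \bfb) & \equalDef & \det\!\left(\sigma(\bfX \bfW^\top + \bfone_n \bfb^\top)\right)~.
\end{IEEEeqnarray*}
Since sigmoid is real analytic on $\bbR$ and the determinant is a polynomial in the matrix entries, $F$ is real analytic on $\bbR^{nd + n}$. A nonzero real-analytic function has a zero set of Lebesgue measure zero, so the statement reduces to exhibiting one pair $(\bfW^*, \bfb^*)$ with $F(\bfW^*, \bfb^*) \neq 0$; combined with the hypothesis that $\bftheta$ has a Lebesgue density, this yields $F(\bfW^{(0)}, \bfb^{(0)}) \neq 0$ almost surely.

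To construct such a pair I would first reduce to a one-dimensional problem. Because the $\bfx_i$ are distinct, the union $\bigcup_{i \neq j} \{\bfv \in \bbR^d : \bfv^\top(\bfx_i - \bfx_j) = 0\}$ is a finite union of proper hyperplanes, so some $\bfv^* \in \bbR^d$ lies outside it; setting $t_i \equalDef (\bfv^*)^\top \bfx_i$ yields pairwise distinct scalars $t_1, \ldots, t_n$. It then suffices to find $\alpha_1, \ldots, \alpha_n, \beta_1, \ldots, \beta_n \in \bbR$ such that $M_{ij} \equalDef \sigma(\alpha_j t_i + \beta_j)$ is invertible, after which I take $\bfw_j^{(0)} \equalDef \alpha_j \bfv^*$ and $b_j^{(0)} \equalDef \beta_j$.

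The key lemma I would prove is that, for distinct $t_1, \ldots, t_n \in \bbR$, the family $\{(\sigma(\alpha t_i + \beta))_{i \in [n]} : (\alpha, \beta) \in \bbR^2\} \subseteq \bbR^n$ is not contained in any proper subspace. Supposing otherwise, there is $\bfc \neq 0$ with $\sum_i c_i \sigma(\alpha t_i + \beta) \equiv 0$; differentiating $k$ times in $\alpha$ at $\alpha = 0$ gives $\sigma^{(k)}(\beta) \sum_i c_i t_i^k = 0$ for every $k \geq 0$ and all $\beta$. Analyticity plus non-polynomiality of $\sigma$ imply that $\sigma^{(k)}$ is not identically zero for any $k$ (else $\sigma$ would be a polynomial of degree less than $k$), so $\sum_i c_i t_i^k = 0$ for every $k \geq 0$; restricting to $k = 0, 1, \ldots, n-1$ gives the classical Vandermonde system on distinct nodes $t_i$, forcing $\bfc = 0$---a contradiction. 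Granting the lemma, an inductive column-by-column construction picks $(\alpha_j, \beta_j)$ so that the $j$-th column of $M$ lies outside the span of the previous $j - 1$ columns, producing an invertible $M$ after $n$ steps.

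The main obstacle is the lemma above: one must give a clean nontriviality argument that uses only analyticity and non-polynomiality of $\sigma$. The flawed arguments surveyed in \Cref{sec:full_rank_history} miss precisely this step---either by working with a non-analytic activation or by asserting unsolvability of an overdetermined system without justification. The approach above avoids both pitfalls and moreover generalizes verbatim to arbitrary non-polynomial analytic activations, recovering for fixed distinct input vectors the conclusion of \Cref{prop:random_network}(b) in the single-layer case.
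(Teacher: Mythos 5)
Your proof is correct, and it takes a genuinely different route from the paper to the key step. Both proofs share the outer skeleton: reduce to scalar inputs via a generic projection, show that the determinant $\det\bigl(\sigma(\bfX (\bfW^{(0)})^\top + \bfone_n (\bfb^{(0)})^\top)\bigr)$ is analytic and not identically zero, and conclude via the multivariate identity theorem (\Cref{lemma:multivariate_identity_theorem}) and the Lebesgue-density hypothesis. Where you diverge is in certifying that a nonvanishing point exists. The paper's route (\Cref{lemma:analytic_shift} feeding into \Cref{lemma:distinct_independence}, specialized to $L=1$ in \Cref{thm:distinct_deep_networks}) first shifts the Taylor base point $b$ so that the coefficients $a_0, \hdots, a_{p-1}$ of $\sigma$ are all nonzero, then expands $\det(\sigma(w_j t_i + b))$ directly as a multivariate power series in $\bfw$ via the Leibniz formula and reads off the coefficient of $w_1^0 w_2^1 \cdots w_p^{p-1}$ as the product of those Taylor coefficients with a Vandermonde determinant in the $t_i$. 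You instead prove a self-contained ``non-containment'' lemma — the two-parameter curve $(\alpha, \beta) \mapsto (\sigma(\alpha t_i + \beta))_i$ is confined to no proper subspace — by differentiating the putative annihilating relation $k$ times in the weight scalar $\alpha$ at $\alpha = 0$, which produces the monomial factors $t_i^k$ and hence a Vandermonde system after choosing $\beta$ with $\sigma^{(k)}(\beta) \neq 0$; an inductive column-by-column construction then finishes. This is precisely a repaired version of the Tamura--Huang strategy that \Cref{sec:full_rank_history} criticizes: your insight of differentiating in $\alpha$ (not only in the bias, as in \cite{tamura_capabilities_1997}) is exactly what turns Tamura's unjustified claim about an overdetermined system into an actual Vandermonde argument. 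Your version is more modular and closer to the historical line of attack. The paper's power-series-of-the-determinant technique buys a cleaner generalization to the bias-free case (\Cref{lemma:sigma_non-degenerate}), where the base point is fixed at $0$, the nonzero Taylor coefficients may sit at non-consecutive indices, and one needs the generalized Vandermonde determinant of \Cref{lemma:random_vandermonde} at random nonatomic nodes; your argument leans on the freedom to choose $\beta$ after differentiating at $\alpha = 0$ and would not transfer directly to that setting.
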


\cite{huang_learning_2003} uses the \quot{proof} by \cite{tamura_capabilities_1997} to show that from any nontrivial intervals one can choose $\bfW^{(0)}, \bfb^{(0)}$ such that $\phi_{\bftheta}(\bfX)$ is invertible, setting all rows of $\bfW^{(0)}$ to be equal. He then concludes without further reasoning that $\phi_{\bftheta}(\bfX)$ must then be invertible almost surely over $\bftheta$. This major unexplained step cannot be fixed by a continuity-based argument since all $b^{(0)}_i$ were chosen from the same interval and similarly, all rows of $\bfW^{(0)}$ were chosen from the same interval. Since the sigmoid function is analytic, it would, however, be possible to prove this step using the multivariate identity theorem for analytic functions, \Cref{lemma:multivariate_identity_theorem}, in a fashion similar to the proof of (d) $\Rightarrow$ (a) in \Cref{prop:cov_frk}. The approach pursued in \cite{huang_learning_2003} thus introduces an additional problem on top of the problem of \cite{tamura_capabilities_1997} although this additional problem is in principle fixable.

A similar strategy is reused in the next claim issued in a particularly popular paper:

\begin{claim}[\cite{huang_extreme_2006}, Theorem 2.1] \label{claim:huang_2}
\Cref{claim:huang_1} holds for all $C^\infty$ activation functions $\sigma$.
\end{claim}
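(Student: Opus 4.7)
The statement cannot be proved as written — it is false. The hypothesis ``$C^\infty$'' is strictly weaker than the non-polynomial/analytic hypothesis of \Cref{prop:random_network}, and polynomial activation functions (which are $C^\infty$) provide immediate counterexamples. Concretely, take $\sigma(x) = x$. Then for any $\bftheta = (\bfW^{(0)}, \bfb^{(0)})$, every row of $\phi_{\bftheta}(\bfx) = (\bfW^{(0)} \bfx + \bfb^{(0)}, 1)^\top \in \bbR^p$ lies in an affine subspace of dimension at most $d+1$. Hence for $n = p > d+1$ and any data $\bfx_1, \ldots, \bfx_n$, the matrix $\phi_{\bftheta}(\bfX) \in \bbR^{n \times p}$ has rank at most $d+1 < p$ and is never invertible. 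This contradicts \Cref{claim:huang_2} for every $\bftheta$, not just on a null set, and the same obstruction rules out every polynomial $\sigma$ of degree $k$ with $\binom{d+k}{k} + 1 < p$.

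Given that the claim is false as stated, the natural plan is to identify the minimal strengthening of hypotheses that makes a correct theorem provable and to sketch the correct proof. The fix is the one adopted in \Cref{prop:random_network}: assume $\sigma$ is analytic and not a polynomial of degree $< p-1$ in the biased case. The proof then follows the three-step pattern used throughout \Cref{sec:assumptions}: first, exhibit a single deterministic pair $(\thetaex, \Xex)$ with $\det \phi_{\thetaex}(\Xex) \neq 0$, using the analyticity and non-polynomiality of $\sigma$ essentially to avoid the degeneracies above; second, observe that $(\bftheta, \bfX) \mapsto \det \phi_{\bftheta}(\bfX)$ is analytic in its arguments and, by the first step, not identically zero; third, apply the multivariate identity theorem to conclude that its zero set is a Lebesgue null set, and Fubini to pass to the almost-sure full-rank statement over $\bftheta$ for every nonatomic $P_X$.

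The hard step is the first one, and it is precisely the gap left open in \cite{tamura_capabilities_1997} and \cite{huang_learning_2003}. For a generic $C^\infty$ function with no analytic or combinatorial structure, there is no leverage to rule out the possibility that every configuration $(\bftheta, \bfX)$ yields a singular $\phi_{\bftheta}(\bfX)$. The overdetermined systems of derivative identities invoked in the Tamura strategy are not automatically inconsistent; the exponential function solves infinitely many such identities, and polynomials solve even more. So step one cannot be carried out without a hypothesis that genuinely excludes the polynomial counterexamples above, which is exactly why the analytic plus non-polynomial assumption of \Cref{prop:random_network} is needed and why any proof of \Cref{claim:huang_2} under only $C^\infty$ assumptions is bound to be wrong.
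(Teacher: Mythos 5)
Your disproof is correct and uses one of the paper's own counterexamples: $\sigma = \id$ (or any low-degree polynomial) makes $\phi_{\bftheta}(\bfX)$ singular for every $(\bftheta, \bfX)$ once $p > d+1$. A small slip in the setup: in \Cref{claim:huang_1} there is no appended $1$; the weight matrix satisfies $\bfW^{(0)} \in \bbR^{n \times d}$ and the feature map is $\phi_{\bftheta}(\bfx) = \sigma(\bfW^{(0)}\bfx + \bfb^{(0)}) \in \bbR^p$. For $\sigma = \id$ the rows of $\phi_{\bftheta}(\bfX) = \bfX(\bfW^{(0)})^\top + \bfone_n (\bfb^{(0)})^\top$ still span a subspace of dimension at most $d+1$, so your rank bound and conclusion stand.

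However, the paper makes a second, sharper point that you should not omit: even \emph{excluding} polynomials does not rescue the claim. The smooth bump function $\sigma(x) = e^{-1/x}$ for $x > 0$, $\sigma(x) = 0$ for $x \leq 0$ is $C^\infty$, non-polynomial, and yet $\phi_{\bftheta}(\bfX)$ is singular with positive probability, since with positive probability all pre-activations $\bfW^{(0)}\bfx_i + \bfb^{(0)}$ are negative and the corresponding rows become zero. This example also corrects the diagnosis in your last paragraph: you locate the obstruction at the first step (existence of a nonsingular $(\thetaex, \Xex)$), but for the bump function such a pair does exist (choose all pre-activations positive). The step that genuinely fails for non-analytic $C^\infty$ activations is the passage from ``$\det \phi_{\bftheta}(\bfX) \neq 0$ somewhere'' to ``nonzero almost everywhere'' --- the multivariate identity theorem is unavailable, and the zero set of a nontrivial $C^\infty$ function can have positive measure (indeed, the bump function vanishes on a half-line). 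That is the irreplaceable role of analyticity, and it is precisely what \Cref{prop:random_network} supplies in its hypotheses.
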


Not only does the \quot{proof} in \cite{huang_extreme_2006} inherit the problems from the previous \quot{proofs}, but now the result is obviously false as well: For $\sigma \equiv 0$, the matrix $\phi_{\bftheta}(\bfX)$ can never be invertible, and for $\sigma = \id$, it is also easy to find counterexamples. Moreover, it is not sufficient to exclude (low-order) polynomials $\sigma$: For example, the well-known construction \citep[e.g.\ Remark 3.4 (d) in Chapter V.3 in][]{amann_analysis_2005}
\begin{IEEEeqnarray*}{+rCl+x*}
\sigma(x) & \equalDef & \begin{cases}
0 &, x \leq 0 \\
e^{-1/x} &, x > 0
\end{cases}
\end{IEEEeqnarray*}
yields a $C^\infty$ function $\sigma$ that is zero on $(-\infty, 0]$ but not a polynomial. For this function, it is not hard to see that $\phi_{\bftheta}(\bfX)$ would be singular with nonzero probability since there is a chance that $\bfW^{(0)} \bfx + \bfb^{(0)}$ contains only negative components.

Despite these evident problems and the paper's popularity, \Cref{claim:huang_2} is restated years later as Theorem 1 in a survey paper by the same author \citep{huang_trends_2015}. In his Theorem 1, \cite{guo_vest_2018} attempts to disprove \Cref{claim:huang_2}. However, this \quot{disproof} is also invalid: For certain saturating activation functions like $\tanh$, \cite{guo_vest_2018} lets $\bftheta \to \infty$ in a certain fashion, which causes $\phi_{\bftheta}(\bfX)$ to converge to a singular matrix. The problem with this approach is that just because the limiting matrix is singular, the matrices for finite $\bftheta$ do not need to be singular. However, this limiting behavior might at least explain the empirical results of \cite{henriquez_empirical_2017}, which find in a certain setting with sigmoid activation that numerically, $\phi_{\bftheta}(\bfX)$ is usually not a full-rank matrix. In contrast, \cite{widrow_no-prop_2013} numerically reach the opposite conclusion. This is not a contradiction, considering that very small eigenvalues of $\phi_{\bftheta}(\bfX)$ may be numerically rounded to zero, and the probability of having such small eigenvalues depends on the chosen distributions of $\bfx$ and $\bftheta$.

The only previously published correct result known to us is the following one:

\begin{lemma}[Lemma 4.4 in \cite{nguyen_loss_2017}] \label{lemma:nguyen}
Let $\sigma: \bbR \to \bbR$ be analytic, strictly increasing and let one of the following two conditions be satisfied:
\begin{enumerate}[(a)]
\item $\sigma$ is bounded, or
\item there exist positive constants $\rho_1, \rho_2, \rho_3, \rho_4$ such that $|\sigma(t)| \leq \rho_1 e^{\rho_2 t}$ for $t < 0$ and $|\sigma(t)| \leq \rho_3 t + \rho_4$ for $t \geq 0$.
\end{enumerate}
Let $\phi_{\bftheta}$ be a random NN feature map with biases as in \Cref{prop:random_network} (b) and let $\bfx_1, \hdots, \bfx_n \in \bbR^d$ be distinct. If $p \geq n-1$, then the $n \times (p+1)$ feature matrix
\begin{IEEEeqnarray*}{+rCl+x*}
\begin{pmatrix}
\phi_{\bftheta}(\bfx_1)^\top & 1 \\
\vdots & \vdots \\
\phi_{\bftheta}(\bfx_n)^\top & 1
\end{pmatrix}
\end{IEEEeqnarray*}
has rank $n$ for (Lebesgue-) almost all $\bftheta$.
\end{lemma}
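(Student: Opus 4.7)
The plan is to reduce this almost-sure full-rank statement to the existence of a single concrete parameter vector $\bftheta^{\star}$ for which $\rank M(\bftheta^{\star}) = n$, where $M(\bftheta) \in \bbR^{n \times (p+1)}$ denotes the displayed matrix, and then invoke the multivariate identity theorem for real-analytic functions exactly as in the excerpt's proofs of \Cref{prop:cov_frk} and \Cref{prop:random_network}. Because the forward pass is a composition of affine maps with the analytic $\sigma$, each entry of $M(\bftheta)$ is real-analytic in $\bftheta$, and hence so is every $n \times n$ minor of $M(\bftheta)$. If one such minor evaluated at $\bftheta^{\star}$ is nonzero, then that minor is a nonzero analytic function of $\bftheta$, so the identity theorem forces $\{\bftheta : \rank M(\bftheta) < n\}$ to have Lebesgue measure zero.

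I would construct $\bftheta^{\star}$ by induction on the depth $L$. For the base case $L = 1$, the $\bfx_i$ are distinct, so I pick a direction $\bfv \in \bbR^d$ with $\bfv^\top \bfx_1 < \cdots < \bfv^\top \bfx_n$, thresholds $c_i \in (\bfv^\top \bfx_i, \bfv^\top \bfx_{i+1})$ for $i \in \{1, \ldots, n-1\}$, and set $\bfw_i^{(0)} \equalDef t \bfv$, $b_i^{(0)} \equalDef -t c_i$ for these $n-1$ neurons (the remaining $p - (n-1)$ neurons' parameters can be arbitrary). The $(j,i)$ entry of the feature block becomes $\sigma(t(\bfv^\top \bfx_j - c_i))$, which as $t \to \infty$ tends to $\sigma(-\infty)$ for $j \leq i$ and to $\sigma(+\infty)$ for $j > i$. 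Appending the all-ones column, the corresponding $n \times n$ submatrix converges to a matrix whose determinant equals $\pm(\sigma(+\infty) - \sigma(-\infty))^{n-1}$, which is nonzero by strict monotonicity provided both one-sided limits are finite. Under (a) this is automatic. Under (b) I would work at finite $t$: after a suitable column rescaling by a power of $t$, the growth bound keeps entries bounded, and strict monotonicity together with the ordering of the $\bfv^\top \bfx_j$ ensures the rescaled limit matrix still has a triangular-with-nonzero-diagonal structure, so its determinant is nonzero for sufficiently large $t$.

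For the inductive step $L \geq 2$, I would first choose $(\bfW^{(0)}, \bfb^{(0)})$ so that the first-layer outputs $\bfx_1^{(1)}, \ldots, \bfx_n^{(1)}$ are pairwise distinct; since $\sigma$ is strictly monotone, any single neuron with weights proportional to a separating direction $\bfv$ already achieves this. The induction hypothesis then applies to the depth-$(L-1)$ sub-network fed by the distinct inputs $\bfx_i^{(1)}$, producing parameters for layers $1, \ldots, L-1$ for which the associated feature matrix with appended ones column has rank $n$. Gluing with the chosen layer-$0$ parameters yields the desired $\bftheta^{\star}$, completing the inductive construction and hence the proof via the identity-theorem step above.

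The main obstacle is the base-case construction under condition (b), where $\sigma$ can be unbounded on the positive side and $\sigma(-\infty)$ need not be finite either: one has to pick the correct column rescaling and, if necessary, rerun the staircase argument with $-\bfv$ in place of $\bfv$ to exploit the side on which $\sigma$ is better behaved, or replace the limiting argument with a finite-$t$ Taylor-expansion style linear-independence argument that uses the fact that an analytic non-polynomial $\sigma$ has infinitely many nonvanishing derivatives at some point. A subsidiary issue is verifying that the appended all-ones column contributes a genuinely new direction at $\bftheta^{\star}$, but the triangular pattern produced by the staircase makes this evident, since the top row of the limiting feature block is constant while the lower rows differ.
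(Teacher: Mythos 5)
The paper does not actually prove this lemma: it is stated and cited as Lemma 4.4 of \cite{nguyen_loss_2017}, and the paper's own contribution is the strictly more general \Cref{thm:distinct_deep_networks} (no appended ones column, arbitrary analytic $\sigma$ that is merely not a too-small polynomial, no monotonicity or growth assumption). Your high-level reduction --- construct one witness $\bftheta^{\star}$ at which a single $n\times n$ minor is nonzero, observe that all minors are analytic in $\bftheta$, and invoke the multivariate identity theorem (\Cref{lemma:multivariate_identity_theorem}) --- is exactly the structure the paper uses for its generalization, and the inductive step (pick layer-$0$ parameters that preserve distinctness, recurse on the depth-$(L-1)$ subnetwork) is also sound and parallels \Cref{lemma:distinct_propagation} together with \Cref{thm:distinct_deep_networks}.

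The genuine gap is your base case under condition (b). Your staircase construction needs the limits $\sigma(-\infty)$ and $\sigma(+\infty)$ to exist and be finite so that the saturated $n\times n$ block converges to a lower-triangular matrix with diagonal $(1,\Delta,\ldots,\Delta)$ and $\Delta=\sigma(+\infty)-\sigma(-\infty)\neq 0$; this is automatic under (a) but false in general under (b), where $\sigma(+\infty)$ (and, depending on how one reads the growth bound, even $\sigma(-\infty)$) can be infinite. The proposed fix --- ``rescale each feature column by a power of $t$'' --- does not close the gap: dividing a feature column by $t$ sends the sub-threshold entries ($j\le i$) to $0$ only if $\sigma$ has a finite limit on the saturated side, and the super-threshold entries $\sigma(t(\bfv^\top\bfx_j-c_i))/t$ need not converge at all (the linear bound $|\sigma(t)|\le\rho_3 t+\rho_4$ gives boundedness but not convergence); worse, the appended all-ones column cannot be simultaneously rescaled, so either it is left untouched and the feature columns blow up, or it is rescaled and becomes the zero column, destroying the rank. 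Flipping $\bfv\mapsto-\bfv$ merely swaps which side saturates and does not help when both tails are problematic. Your fallback suggestion (a finite-$t$ Taylor-expansion argument) is the right idea --- it is essentially what the paper does via \Cref{lemma:analytic_shift}, which picks a base point $b$ where sufficiently many derivatives of $\sigma$ are nonzero, then expands the determinant as a power series in the weights and isolates a Vandermonde coefficient in \Cref{lemma:distinct_independence} --- but you do not actually carry it out, so the proof is incomplete as written. That Taylor/Vandermonde route is what lets the paper drop boundedness and monotonicity entirely; your saturation argument, which buys a very transparent picture of the witness, only covers case (a) without further work.
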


In \Cref{thm:distinct_deep_networks}, we generalize \Cref{lemma:nguyen} (without the appended ones in the feature matrix) to a substantially larger class of analytic activation functions.\footnote{It is in principle possible to extend our results to the case with appended ones in the feature matrix by choosing the activation function for one of the output neurons to be $\sigma \equiv 1$. As discussed in \Cref{rem:generalizations}, our arguments have no problem handling different activation functions. In this case, we would only need to adapt the corresponding Taylor series coefficients in \Cref{lemma:distinct_independence}.} As argued above, it is not possible to replace the assumption that $\sigma$ is analytic with $\sigma \in C^\infty$ and as shown in \Cref{rem:polynomial_necessity}, our exclusion of certain polynomials for $\sigma$ is in general necessary.

\section{Proofs for \Cref{sec:double_descent}} \label{sec:double_descent_proofs}

In this section, we prove our main theorem and corollary from \Cref{sec:double_descent}. Recall from \Cref{def:cov} that if $\bbE \|\bfz\|_2^2 < \infty$, we can define the second moment matrix
\begin{IEEEeqnarray*}{+rCl+x*}
\bfSigma & \equalDef & \bbE_{\bfz \sim P_Z} \left[\bfz \bfz^\top\right] \in \bbR^{p \times p}~,
\end{IEEEeqnarray*}
and if $\bfSigma$ is invertible, we can also define the \quot{whitened} data matrix $\bfW \equalDef \bfZ \bfSigma^{-1/2}$, whose rows $\bfw_i = \bfSigma^{-1/2}$ satisfy $\bbE \bfw_i \bfw_i^\top = \bfI_p$.

\mainthm*

\begin{proof}
By (FRK), we only consider the case where $\bfZ$ has full rank. For further details on some matrix computations, we refer to \Cref{sec:matrix_algebra}.

\textbf{Step 1: Preparation.} Since the pairs $(\bfz_1, y_1), \hdots, (\bfz_n, y_n)$ are independent, we have the conditional covariance matrix
\begin{IEEEeqnarray*}{+rCl+x*}
\Cov(\bfy|\bfZ) & = & \begin{pmatrix}
\Var(y_1|\bfz_1) \\
& \ddots \\
& & \Var(y_n|\bfz_n)
\end{pmatrix} \stackrel{(\mathrm{I})}{\matgeq} \sigma^2 \bfI_n
\end{IEEEeqnarray*}
with equality in $(\mathrm{I})$ if (NOI) holds with equality. Therefore,
\begin{IEEEeqnarray*}{+rCl+x*}
\Enoise & \stackrel{\text{def}}{=} & \bbE_{\bfZ, \bfy, \bftheta, \bfz} \left(f_{\bfZ, \bfy, \bftheta}(\bfz) - \bbE_{\bfy|\bfZ} f_{\bfZ, \bfy, \bftheta}(\bfz)\right)^2 \\
& = & \bbE_{\bfZ, \bfy, \bfz} \left(\bfz^\top \bfZ^+ \bfy - \bbE_{\bfy|\bfZ} \bfz^\top \bfZ^+ \bfy\right)^2 \\
& = & \bbE_{\bfZ, \bfz} \bbE_{\bfy|\bfZ} \bfz^\top \bfZ^+ (\bfy - \bbE_{\bfy|\bfZ} \bfy)(\bfy - \bbE_{\bfy|\bfZ} \bfy)^\top (\bfZ^+)^\top \bfz \\
& = & \bbE_{\bfZ, \bfz} \bfz^\top \bfZ^+ \Cov(\bfy|\bfZ) (\bfZ^+)^\top \bfz \\
& \stackrel{(\mathrm{I})}{\geq} & \sigma^2 \bbE_{\bfZ, \bfz} \bfz^\top \bfZ^+ (\bfZ^+)^\top \bfz \\
& = & \sigma^2 \bbE_{\bfZ, \bfz} \tr((\bfZ^+)^\top \bfz \bfz^\top \bfZ^+) \\
& = & \sigma^2 \bbE_{\bfZ} \tr((\bfZ^+)^\top \bfSigma (\bfZ^+)^\top)~.
\end{IEEEeqnarray*}

\textbf{Step 2.1: Reformulation, underparameterized case.} In the case $p \leq n$, we have $\bfZ^+ = (\bfZ^\top \bfZ)^{-1} \bfZ^\top$ thanks to (FRK) and thus
\begin{IEEEeqnarray*}{+rCl+x*}
\tr((\bfZ^+)^\top \bfSigma \bfZ^+) & = & \tr(\bfSigma^{1/2} \bfZ^+ (\bfZ^+)^\top \bfSigma^{1/2}) = \tr(\bfSigma^{1/2} (\bfZ^\top \bfZ)^{-1} \bfSigma^{1/2}) = \tr((\bfW^\top \bfW)^{-1})~.
\end{IEEEeqnarray*}

\textbf{Step 2.2: Reformulation, overparameterized case.} In the case $p \geq n$, we have $\bfZ^+ = \bfZ^\top (\bfZ \bfZ^\top)^{-1}$ thanks to (FRK). For $\bfSigma = \lambda \bfI_p$, we can show $\tr((\bfZ^+)^\top \bfSigma \bfZ^+) = \tr((\bfW \bfW^\top)^{-1})$ similar to Step 2.1. For general $\bfSigma$, we can obtain a lower bound by \quot{removing a projection}: First, let
\begin{IEEEeqnarray*}{+rCl+x*}
\bfS & \equalDef & (\bfZ^+)^\top \bfSigma \bfZ^+ = (\bfZ \bfZ^\top)^{-1} \bfZ \bfSigma \bfZ^\top (\bfZ \bfZ^\top)^{-1}~, \\
\bfA & \equalDef & \bfSigma^{1/2} \bfZ^\top~.
\end{IEEEeqnarray*}
Now, since $\bfZ$ and $\bfSigma$ have full rank, we can compute
\begin{IEEEeqnarray*}{+rCl+x*}
\bfS^{-1} = \bfZ \bfZ^\top (\bfZ \bfSigma \bfZ^\top)^{-1} \bfZ \bfZ^\top = \bfW \bfA (\bfA^\top \bfA)^{-1} \bfA^\top \bfW^\top~.
\end{IEEEeqnarray*}
Since $\bfA (\bfA^\top \bfA)^{-1} \bfA^\top$ is the orthogonal projection onto the column space of $\bfA$, we have $\bfS^{-1} \preceq \bfW \bfW^\top$ and hence $\lambda_i(\bfS^{-1}) \leq \lambda_i(\bfW \bfW^\top)$ by the Courant-Fischer-Weyl theorem. This yields
\begin{IEEEeqnarray*}{+rCl+x*}
\tr((\bfZ^+)^\top \bfSigma \bfZ^+) & = & \tr(\bfS) = \sum_{i=1}^n \lambda_{n+1-i}(\bfS) = \sum_{i=1}^n \frac{1}{\lambda_i(\bfS^{-1})} \\
& \geq & \sum_{i=1}^n \frac{1}{\lambda_i(\bfW \bfW^\top)} = \sum_{i=1}^n \lambda_{n+1-i}((\bfW \bfW^\top)^{-1}) = \tr((\bfW \bfW^\top)^{-1})~.
\end{IEEEeqnarray*}
For $n = p$, $\bfA (\bfA^\top \bfA)^{-1} \bfA^\top$ projects onto a $p$-dimensional space and is therefore the identity matrix, yielding equality.

\textbf{Step 3.0: Random matrix bound, $p=1$ or $n=1$.} If $n \geq p=1$, $\bfw_i = w_i$ is a scalar and we have
\begin{IEEEeqnarray*}{+rCl+x*}
\bbE \tr((\bfW^\top \bfW)^{-1}) & = & \bbE \frac{1}{\bfW^\top \bfW} = \bbE \frac{1}{\sum_{i=1}^n w_i^2} \geq \frac{1}{\bbE \sum_{i=1}^n w_i^2} = \frac{1}{\sum_{i=1}^n \tr(\bbE w_i w_i^\top)} = \frac{1}{n} \\
& = & \frac{p}{n+1-p}
\end{IEEEeqnarray*} 
by Jensen's inequality. Similarly, for $p \geq n = 1$, we obtain
\begin{IEEEeqnarray*}{+rCl+x*}
\bbE \tr((\bfW \bfW^\top)^{-1}) & = & \bbE \frac{1}{\bfw_1^\top \bfw_1} \geq \frac{1}{\bbE \bfw_1^\top \bfw_1} = \frac{1}{\tr(\bbE \bfw_1 \bfw_1^\top)} = \frac{1}{p} = \frac{n}{p+1-n}~.
\end{IEEEeqnarray*}

\textbf{Step 3.1: Random matrix bound, overparameterized case.} We first consider the overparameterized case $p \geq n \geq 2$ and block-decompose
\begin{IEEEeqnarray*}{+rCl+x*}
\bfW \defEqual \begin{pmatrix}
\bfw_1^\top \\
\bfW_2
\end{pmatrix} \in \bbR^{(1+(n-1)) \times p} \quad \Rightarrow \quad \bfW \bfW^\top = \begin{pmatrix}
\bfw_1^\top \bfw_1 & \bfw_1^\top \bfW_2^\top \\
\bfW_2 \bfw_1 & \bfW_2 \bfW_2^\top
\end{pmatrix}~.
\end{IEEEeqnarray*}
Since $\bfZ$ has full rank, $\bfW$ has full rank. Because of $n \leq p$, it follows that $\bfW \bfW^\top \matgr 0$. Therefore,
\begin{IEEEeqnarray*}{+rCl+x*}
\left((\bfW \bfW^\top)^{-1}\right)_{11} & = & \left(\bfw_1^\top \bfw_1 - \bfw_1^\top \bfW_2^\top (\bfW_2 \bfW_2^\top)^{-1} \bfW_2 \bfw_1\right)^{-1} = \left(\bfw_1^\top (\bfI_p - \bfP_2) \bfw_1\right)^{-1}~,
\end{IEEEeqnarray*}
where
\begin{IEEEeqnarray*}{+rCl+x*}
\bfP_2 & \equalDef & \bfW_2^\top \left(\bfW_2 \bfW_2^\top\right)^{-1} \bfW_2 \in \bbR^{p \times p}
\end{IEEEeqnarray*}
is the orthogonal projection onto the column space of $\bfW_2^\top$. Thus, $\bfP_2$ has the eigenvalues $1$ with multiplicity $n-1$ and $0$ with multiplicity $p-(n-1)$, which yields $\tr(\bfP_2) = n-1$. Since the $\bfz_i$ are stochastically independent, $\bfw_1$ and $\bfW_2$ are also stochastically independent and we obtain
\begin{IEEEeqnarray*}{+rCl+x*}
\bbE \bfw_1^\top (\bfI_p - \bfP_2) \bfw_1 & = & \bbE \tr((\bfI_p - \bfP_2) (\bbE_{\bfw_1} \bfw_1 \bfw_1^\top)) = \bbE \tr(\bfI_p - \bfP_2) = p+1-n~.
\end{IEEEeqnarray*}
Using Jensen's inequality with the convex function $(0, \infty) \to (0, \infty), x \mapsto 1/x$, we thus find that
\begin{IEEEeqnarray*}{+rCl+x*}
\bbE \left((\bfW \bfW^\top)^{-1}\right)_{11} & = & \bbE \left(\bfw_1^\top (\bfI_p - \bfP_2) \bfw_1\right)^{-1} \geq \frac{1}{\bbE \bfw_1^\top (\bfI_p - \bfP_2) \bfw_1} = \frac{1}{p+1-n}~.
\end{IEEEeqnarray*}
Since $\tr((\bfW \bfW^\top)^{-1}) = \sum_{i=1}^n ((\bfW \bfW^\top)^{-1})_{ii}$ and all diagonal entries can be treated in the same fashion (e.g.\ via permutation of the $\bfw_i$), it follows that
\begin{IEEEeqnarray*}{+rCl+x*}
\bbE \tr((\bfW \bfW^\top)^{-1}) & \geq & \frac{n}{p+1-n}~.
\end{IEEEeqnarray*}

\textbf{Step 3.2: Random matrix bound, underparameterized case.} In the case $n \geq p \geq 1$, we follow the proof of Corollary 1 in \cite{mourtada_exact_2022}, which can be applied in our setting as follows: Introduce a new random variable $\bfw_{n+1}$ such that $\bfw_1, \hdots, \bfw_{n+1}$ are i.i.d. Then,
\begin{IEEEeqnarray*}{+rCl+x*}
\bbE \tr((\bfW^\top \bfW)^{-1}) & = & \bbE_{\bfW} \tr((\bfW^\top \bfW)^{-1} \bbE_{\bfw_{n+1}} \bfw_{n+1} \bfw_{n+1}^\top) = \bbE \bfw_{n+1}^\top (\bfW^\top \bfW)^{-1} \bfw_{n+1}~.
\end{IEEEeqnarray*}
Now, Lemma 1 in \cite{mourtada_exact_2022} uses the Sherman-Morrison formula to show that
\begin{IEEEeqnarray*}{+rCl+x*}
\bfw_{n+1}^\top (\bfW^\top \bfW)^{-1} \bfw_{n+1} & = & \frac{\hat{\ell}_{n+1}}{1 - \hat{\ell}_{n+1}}
\end{IEEEeqnarray*}
with the so-called leverage score
\begin{IEEEeqnarray*}{+rCl+x*}
\hat{\ell}_{n+1} \equalDef \bfw_{n+1}^\top (\bfW^\top \bfW + \bfw_{n+1} \bfw_{n+1}^\top)^{-1} \bfw_{n+1} \in [0, 1)~.
\end{IEEEeqnarray*}
Since $\bfW^\top \bfW + \bfw_{n+1} \bfw_{n+1}^\top = \sum_{i=1}^{n+1} \bfw_i \bfw_i^\top$ and since $\bfw_1, \hdots, \bfw_{n+1}$ are i.i.d., we obtain
\begin{IEEEeqnarray*}{+rCl+x*}
\bbE \hat{\ell}_{n+1} & = & \bbE \tr \left(\left(\sum_{i=1}^{n+1} \bfw_i \bfw_i^\top\right)^{-1} \bfw_{n+1} \bfw_{n+1}^\top\right) \\
& = & \frac{1}{n+1} \bbE \tr \left(\left(\sum_{i=1}^{n+1} \bfw_i \bfw_i^\top\right)^{-1} \sum_{i=1}^{n+1} \bfw_i \bfw_i^\top\right) \\
& = & \frac{1}{n+1} \bbE \tr(\bfI_p) = \frac{p}{n+1}~.
\end{IEEEeqnarray*}
Finally, the function $[0, 1) \to (0, \infty), x \mapsto \frac{x}{1-x} = \frac{1}{1-x} - 1$ is convex, and Jensen's inequality yields
\begin{IEEEeqnarray*}{+rCl+x*}
\bbE \tr((\bfW^\top \bfW)^{-1}) & = & \bbE \frac{\hat{\ell}_{n+1}}{1 - \hat{\ell}_{n+1}} \geq \frac{\bbE \hat{\ell}_{n+1}}{1 - \bbE \hat{\ell}_{n+1}} = \frac{p}{n+1-p}~. & \qedhere
\end{IEEEeqnarray*}
\end{proof}

Note that it is not possible to analyze Step 3.2 like Step 3.1 since the corresponding matrix blocks are not stochastically independent.

\maincor*

\begin{proof}
First, let $p \leq n$. Since $\bftheta$ is independent from $\bfX, \bfx, \bfy$,
\begin{IEEEeqnarray*}{+rCl+x*}
\Enoise & = & \bbE_{\bfX, \bfy, \bftheta, \bfx} \left(\bfz_{\bftheta}^\top \bfZ_{\bftheta}^+ \bfy - \bbE_{\bfy|\bfX} \bfz_{\bftheta}^\top \bfZ_{\bftheta}^+ \bfy\right)^2 \\
& = & \bbE_{\bftheta} \left[\bbE_{\bfX, \bfy, \bfx} \left(\bfz_{\bftheta}^\top \bfZ_{\bftheta}^+ \bfy - \bbE_{\bfy|\bfX} \bfz_{\bftheta}^\top \bfZ_{\bftheta}^+ \bfy\right)^2\right] \\
& \stackrel{\text{\Cref{thm:double_descent}}}{\geq} & \bbE_{\bftheta} \sigma^2 \frac{p}{n+1-p} = \sigma^2 \frac{p}{n+1-p}~.
\end{IEEEeqnarray*}
The case $p \geq n$ can be treated analogously.
\end{proof}

\section{Discussion of the Main Theorem} \label{sec:double_descent_discussion}

\begin{remark}[Dependence on $\bfSigma$] \label{rem:flat_spectrum}
\cite{hastie_surprises_2022} discuss that $\bfSigma$ only influences the expected excess risk in the overparameterized regime. In the following, we will illustrate that \Cref{thm:double_descent} even implies that in the overparameterized case, $\bfSigma = \lambda \bfI_d$ yields the lowest $\Enoise$. This fact is also discussed in \cite{muthukumar_harmless_2020} in a slightly different setting. Note that since $P_X$ is unknown in general, $\bfSigma$ is also unknown in general.

Assume that (NOI) holds with equality such that we have $\Enoise = \sigma^2 \bbE_{\bfZ} \tr((\bfZ^+)^\top \bfSigma \bfZ^+)$ by \Cref{thm:double_descent}. Suppose that we perform linear regression on the whitened data $\tilde{\bfz} \equalDef \bfSigma^{-1/2} \bfz$. Then, 
\begin{IEEEeqnarray*}{+rCl+x*}
\tilde{\bfZ} & = & \bfZ \bfSigma^{-1/2} = \bfW~, \\
\tilde{\bfSigma} & = & \bbE_{\tilde{\bfz}} \tilde{\bfz} \tilde{\bfz}^\top = \bfSigma^{-1/2} \left[ \bbE_{\bfz} \bfz \bfz^\top \right] \bfSigma^{-1/2} = \bfI_p~, \\
\tilde{\bfW} & = & \tilde{\bfZ} \tilde{\bfSigma}^{-1/2} = \tilde{\bfZ} = \bfW~.
\end{IEEEeqnarray*}
In the underparameterized case $p \leq n$, we then obtain
\begin{IEEEeqnarray*}{+rCl+x*}
\bbE_{\tilde{\bfZ}} \tr((\tilde{\bfZ}^+)^\top \tilde{\bfSigma} \tilde{\bfZ}^+) & = & \bbE_{\tilde{\bfZ}} \tr((\tilde{\bfW}^\top \tilde{\bfW})^{-1}) = \bbE_{\bfZ} \tr((\bfW^\top \bfW)^{-1}) = \bbE_{\bfZ} \tr((\bfZ^+)^\top \bfSigma \bfZ^+)~.
\end{IEEEeqnarray*}
Therefore, whitening the data does not make a difference if $p \leq n$. In contrast, for $p > n$, we only know
\begin{IEEEeqnarray*}{+rCl+x*}
\bbE_{\tilde{\bfZ}} \tr((\tilde{\bfZ}^+)^\top \tilde{\bfSigma} \tilde{\bfZ}^+) \stackrel{(\mathrm{II})}{=} \bbE_{\tilde{\bfZ}} \tr((\tilde{\bfW} \tilde{\bfW}^\top)^{-1}) = \bbE_{\bfZ} \tr((\bfW \bfW^\top)^{-1}) \stackrel{(\mathrm{II})}{\leq} \bbE_{\bfZ} \tr((\bfZ^+)^\top \bfSigma \bfZ^+)
\end{IEEEeqnarray*}
since $(\mathrm{II})$ holds with equality for whitened features. From Step 2.1 in the proof of \Cref{thm:double_descent}, it is obvious that $(\mathrm{II})$ in general does not hold with equality. Hence, in the overparameterized case $p > n$, whitening the features often reduces and never increases $\Enoise$. Since $\Enoise$ is just a lower bound for the expected excess risk, whitening does not necessarily reduce the expected excess risk.

This phenomenon also has a different kernel-based interpretation: Under the assumptions (MOM) and (COV), we can choose an ONB $\bfu_1, \hdots, \bfu_p$ of eigenvectors of $\bfSigma$ with corresponding eigenvalues $\lambda_1, \hdots, \lambda_p > 0$. If $\bfz = \phi(\bfx)$ and $k(\bfx, \bfx') = \phi(\bfx)^\top \phi(\bfx')$, we can write
\begin{IEEEeqnarray}{+rCl+x*}
k(\bfx, \bfx') = \phi(\bfx)^\top \left(\sum_{i=1}^n \bfu_i \bfu_i^\top \right) \phi(\bfx') = \sum_{i=1}^p \lambda_i \psi_i(\bfx) \psi_i(\bfx')~, \label{eq:mercer}
\end{IEEEeqnarray}
where the functions $\psi_i: \bbR^d \to \bbR, \bfx \mapsto \frac{1}{\sqrt{\lambda_i}} \bfu_i^\top \phi(\bfx)$ form an orthonormal system in $L_2(P_X)$:
\begin{IEEEeqnarray*}{+rCl+x*}
\bbE_{\bfx} \psi_i(\bfx) \psi_j(\bfx) & = & \frac{1}{\sqrt{\lambda_i \lambda_j}} \bfu_i^\top \left[\bbE_{\bfx} \phi(\bfx) \phi(\bfx)^\top\right] \bfu_j = \frac{1}{\sqrt{\lambda_i \lambda_j}} \bfu_i^\top \bfSigma \bfu_j = \frac{\lambda_j}{\sqrt{\lambda_i \lambda_j}} \bfu_i^\top \bfu_j \qquad \\
& = & \delta_{ij}. \IEEEyesnumber \label{eq:onb_fm}
\end{IEEEeqnarray*}
Therefore, Eq.~\eqref{eq:mercer} is a Mercer representation of $k$ and the eigenvalues $\lambda_i$ of $\bfSigma$ are also the eigenvalues of the integral operator $T_k: L_2(P_X) \to L_2(P_X)$ associated with $k$:
\begin{IEEEeqnarray*}{+rCl+x*}
(T_k f)(\bfx) & \equalDef & \int k(\bfx, \bfx') f(\bfx') \diff P_X(\bfx') = \sum_{i=1}^n \lambda_i \langle \psi_i, f\rangle_{L_2(P_X)} \psi_i(\bfx) ~.
\end{IEEEeqnarray*}
We can define a kernel $\tilde{k}$ with flattened eigenspectrum via
\begin{IEEEeqnarray*}{+rCl+x*}
\tilde{k}(\bfx, \bfx') & \equalDef & \sum_{i=1}^p \psi_i(\bfx) \psi_i(\bfx')~.
\end{IEEEeqnarray*}
Its feature map $\bfx \mapsto (\psi_i(\bfx))_{i \in [n]}$ by Eq.~\eqref{eq:onb_fm} satisfies $\bfSigma = \bfI_p$. By the above discussion, $\Enoise(\tilde{k}) \leq \Enoise(k)$. However, this needs to be taken with caution: Assume for simplicity that for independent $\bfx, \tilde{\bfx}$, the feature map $\phi$ yields $\phi(\bfx), \phi(\tilde{\bfx}) \sim \calN(0, \frac{1}{p}\bfI_p)$. Then, $k(\bfx, \bfx) = 1$ but $\bbE k(\bfx, \tilde{\bfx}) = 0$ and $\Var k(\bfx, \tilde{\bfx}) = \frac{1}{p^2} \sum_{i=1}^p \bbE_{u, v \sim \calN(0, 1)} u^2v^2 = \frac{1}{p}$. In this sense, for $p \to \infty$, $k$ converges to the Dirac kernel $k(\bfx, \tilde{\bfx}) = \delta_{\bfx, \tilde{\bfx}}$, which satisfies $\Enoise = 0$ but provides bad interpolation performance if $f_P^* \not \equiv 0$.
\end{remark}

The next lemma and its proof are an adaptation of Lemma 4.14 in \cite{bordenave_around_2012}.

\begin{lemma} \label{lemma:inv_dists}
For $i \in [n]$, let $\calW_{-i} \equalDef \Span \{\bfw_j \mid j \in [n] \setminus \{i\}\}$. Then, under the assumptions of \Cref{thm:double_descent}, in the overparameterized case $p \geq n$, 
\begin{IEEEeqnarray}{+rCl+x*}
\tr((\bfW \bfW^\top)^{-1}) & = & \sum_{i=1}^n \dist(\bfw_i, \calW_{-i})^{-2}~. \label{eq:inv_dists}
\end{IEEEeqnarray}

\begin{proof}
The case $n=1$ is trivial, hence let $n \geq 2$. In Step 3.1 in the proof of \Cref{thm:double_descent}, since $\bfP_2$ is the orthogonal projection onto $\calW_{-1}$, we have
\begin{IEEEeqnarray*}{+rCl+x*}
\bfw_1^\top (\bfI_p - \bfP_2) \bfw_1 & = & \|\bfw_1\|_2^2 - \|\bfP_2 \bfw_1\|_2^2 \stackrel{\text{Pythagoras}}{=} \dist(\bfw_1, \calW_{-1})^2~,
\end{IEEEeqnarray*}
where $\dist(\bfw_1, \calW_{-1})$ is the Euclidean distance between $\bfw_1$ and $\calW_{-1}$. 
\end{proof}
\end{lemma}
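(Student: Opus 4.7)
The statement is essentially a reformulation of an identity that appeared implicitly inside the proof of \Cref{thm:double_descent}, so my plan is to bootstrap off that computation. Specifically, in Step 3.1 of the proof of \Cref{thm:double_descent}, after block-decomposing $\bfW = (\bfw_1^\top; \bfW_2)$ and applying the Schur complement formula (since $\bfW\bfW^\top \matgr 0$ by (FRK) combined with $n \leq p$), one obtained
\[
\bigl((\bfW\bfW^\top)^{-1}\bigr)_{11} = \bigl(\bfw_1^\top(\bfI_p - \bfP_2)\bfw_1\bigr)^{-1},
\]
where $\bfP_2 = \bfW_2^\top(\bfW_2\bfW_2^\top)^{-1}\bfW_2$ is the orthogonal projector onto the column space of $\bfW_2^\top$, which is precisely $\calW_{-1}$.

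From here, I would observe that $\bfI_p - \bfP_2$ is the orthogonal projection onto $\calW_{-1}^\perp$, and since orthogonal projections are idempotent and symmetric,
\[
\bfw_1^\top(\bfI_p - \bfP_2)\bfw_1 = \|(\bfI_p - \bfP_2)\bfw_1\|_2^2 = \dist(\bfw_1, \calW_{-1})^2,
\]
the last equality being the defining property of orthogonal projection onto a closed subspace (or simply Pythagoras applied to $\bfw_1 = \bfP_2\bfw_1 + (\bfI_p - \bfP_2)\bfw_1$). This handles the diagonal entry for $i=1$.

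The remaining entries require only a symmetry argument: the roles of the rows $\bfw_1, \hdots, \bfw_n$ in the block decomposition are interchangeable, so for any $i \in [n]$ we can permute $\bfw_i$ into the first block position (which corresponds to conjugating $\bfW\bfW^\top$ by a permutation matrix and reads off the $i$-th diagonal entry rather than the first), and the same Schur-complement identity yields
\[
\bigl((\bfW\bfW^\top)^{-1}\bigr)_{ii} = \dist(\bfw_i, \calW_{-i})^{-2}.
\]
Summing over $i$ gives $\tr((\bfW\bfW^\top)^{-1}) = \sum_{i=1}^n \dist(\bfw_i, \calW_{-i})^{-2}$, completing the proof. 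I would also briefly dispatch the edge case $n=1$ where $\calW_{-1} = \{0\}$ and the identity reduces to $\tr((\bfW\bfW^\top)^{-1}) = \|\bfw_1\|_2^{-2}$, which is immediate.

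There is no serious obstacle here; the only thing to verify carefully is that all distances are strictly positive so the right-hand side is finite, but this follows from (FRK): $\bfZ$, hence $\bfW$, has rank $n$ almost surely, so the rows $\bfw_1, \hdots, \bfw_n$ are linearly independent and no $\bfw_i$ lies in $\calW_{-i}$.
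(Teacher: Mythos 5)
Your proposal is correct and matches the paper's argument: both bootstrap off the Schur-complement identity $\bigl((\bfW\bfW^\top)^{-1}\bigr)_{11} = \bigl(\bfw_1^\top(\bfI_p-\bfP_2)\bfw_1\bigr)^{-1}$ already established in Step 3.1 of the proof of \Cref{thm:double_descent}, identify $\bfw_1^\top(\bfI_p-\bfP_2)\bfw_1$ with $\dist(\bfw_1,\calW_{-1})^2$ via Pythagoras, and extend to all diagonal entries by permutation symmetry. The only difference is cosmetic — you spell out the symmetry step and the positivity of the distances explicitly, which the paper leaves implicit.
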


\begin{remark}[Is $\calU(\bbS^{p-1})$ optimal?] \label{rem:dist_trace}
From $(\mathrm{I})$ in \Cref{thm:double_descent} it is clear that the best possible lower bound for $\Enoise$ under the assumptions of \Cref{thm:double_descent} given $n, p \geq 1$ is
\begin{IEEEeqnarray*}{+rCl+x*}
\Enoise & \geq & \sigma^2 \inf_{\text{Distribution $P_Z$ on $\bbR^p$ satisfying (MOM), (COV), (FRK)}} \bbE_{\bfZ \sim P_Z^n} \tr((\bfZ^+)^\top \bfSigma \bfZ^+)~. \IEEEyesnumber \label{eq:inf_enoise}
\end{IEEEeqnarray*}
Here, we want to discuss the hypothesis that the infimum in Eq.~\eqref{eq:inf_enoise} is achieved (for example) by $P_Z = \calU(\bbS^{p-1})$. \Cref{fig:sphere_optimality} shows that we were not able to obtain a lower $\Enoise$ by optimizing a neural network feature map to minimize $\Enoise$. In the following, we want to discuss some theoretical evidence as to why this is plausible in the overparameterized case $p \geq n$. \Cref{lemma:inv_dists} shows that Step 3.1 of the proof of \Cref{thm:double_descent} has a distance-based interpretation. In this interpretation, Step 3.1 then applies Jensen's inequality to the convex function $(0, \infty) \to (0, \infty), x \mapsto 1/x$ using
\begin{IEEEeqnarray}{+rCl+x*}
\bbE \dist(\bfw_i, \calW_{-i})^2 & = & p+1-n~. \label{eq:dists}
\end{IEEEeqnarray}

We can use this perspective to gain insights on how distributions $P_Z$ with small $\Enoise$ in the overparameterized case $p \geq n$ look like. First of all, \Cref{rem:flat_spectrum} suggests that for minimizing $\Enoise$ in the overparameterized case, $\bfSigma$ should be a multiple of $\bfI_p$, which is clearly satisfied for $\calU(\bbS^{p-1})$ by \Cref{thm:exact_sphere}. Since the lower bound obtained from \eqref{eq:dists} is independent of the distribution of $\bfW$, minimizing $\bbE \tr((\bfW \bfW^\top)^{-1})$ amounts to minimizing the error made by Jensen's inequality, which essentially amounts to reducing the variance of the random variables $\dist(\bfw_i, \calW_{-i})$. We can decompose $\dist(\bfw_i, \calW_{-i}) = \|\bfw_i\|_2 \cdot \dist(\bfw_i/\|\bfw_i\|_2, \calW_{-i})$, where $\dist(\bfw_i/\|\bfw_i\|_2, \calW_{-i})$ only depends on the angular components $\bfw_j/\|\bfw_j\|_2$ for $j \in [n]$. This suggests that for a \quot{good} distribution $P_W$,
\begin{itemize}
\item the radial component $\|\bfw_i\|_2$ should have low variance, and
\item the distribution of the angular component $\bfw_i/\|\bfw_i\|_2$ should not contain \quot{clusters}, since clusters would increase the probability of $\dist(\bfw_i, \calW_{-i})$ being very small.
\end{itemize}
Clearly, both points are perfectly achieved for $P_Z = \calU(\bbS^{p-1})$.
\end{remark}

\section{Proofs for \Cref{sec:assumptions}} \label{sec:assumptions_proofs}

In this section, we prove all theorems and propositions from \Cref{sec:assumptions} as well as some additional results.

\subsection{Miscellaneous}

First, we prove a statement about conditional variances.

\begin{lemma} \label{lemma:conditional_variances}
In the setting of \Cref{thm:double_descent}, we have
\begin{IEEEeqnarray*}{+rCl+x*}
\Var(y|\bfz) & = & \bbE(\Var(y|\bfx)|\bfz) + \Var(\bbE(y|\bfx)|\bfz)~.
\end{IEEEeqnarray*}
Hence, if $\Var(y|\bfx) \geq \sigma^2$ almost surely over $\bfx$, then $\Var(y|\bfz) \geq \sigma^2$ almost surely over $\bfz$. The converse holds, for example, if $\phi$ is injective.

\begin{proof}
For properties of conditional expectations, we refer to the literature, e.g.\ Chapter 4.1 in \cite{durrett_probability_2019}. Since $\bfz = \phi(\bfx)$ is a function of $\bfx$, we have $\bbE[\cdot|\bfz] = \bbE[\bbE(\cdot|\bfx)|\bfz]$. Thus,
\begin{IEEEeqnarray*}{+rCl+x*}
\Var(y|\bfz) & = & \bbE\left[(y - \bbE(y|\bfz))^2 \middle| \bfz\right] \\
& = & \bbE\left[\big((y - \bbE(y|\bfx)) + (\bbE(y|\bfx) - \bbE(\bbE(y|\bfx)|\bfz))\big)^2 \middle| \bfz\right] \\
& = & \bbE\left[\bbE\left((y - \bbE(y|\bfx))^2|\bfx\right)\middle|\bfz\right] \\
&& ~+~\bbE\left[\bbE\left((y - \bbE(y|\bfx))(\bbE(y|\bfx) - \bbE(\bbE(y|\bfx)|\bfz))|\bfx\right)\middle|\bfz\right] \\
&& ~+~\bbE\left[\bbE\left((\bbE(y|\bfx) - \bbE(\bbE(y|\bfx)|\bfz))^2|\bfx\right)\middle|\bfz\right]~.
\end{IEEEeqnarray*}
For the first term, we have $\bbE\left((y - \bbE(y|\bfx))^2|\bfx\right) = \Var(y|\bfx)$ by definition. The second term is zero: Since $\bbE(y|\bfx) - \bbE(\bbE(y|\bfx)|\bfz)$ is already a function of $\bfx$, we have
\begin{IEEEeqnarray*}{+rCl+x*}
\bbE\left((y - \bbE(y|\bfx))(\bbE(y|\bfx) - \bbE(\bbE(y|\bfx)|\bfz))|\bfx\right) & = & \bbE\left((y - \bbE(y|\bfx))|\bfx\right)(\bbE(y|\bfx) - \bbE(\bbE(y|\bfx)|\bfz)) \\
& = & (\bbE(y|\bfx) - \bbE(y|\bfx))(\bbE(y|\bfx) - \bbE(\bbE(y|\bfx)|\bfz)) \\
& = & 0~.
\end{IEEEeqnarray*}
Finally, the third term is by definition equal to $\Var(\bbE(y|\bfx)|\bfz)$. Therefore,
\begin{IEEEeqnarray*}{+rCl+x*}
\Var(y|\bfz) & = & \bbE(\Var(y|\bfx)|\bfz) + \Var(\bbE(y|\bfx)|\bfz)~.
\end{IEEEeqnarray*}
If $\phi$ is injective, then $\bfx$ is also a function of $\bfz$ and we obtain $\Var(y|\bfz) = \Var(y|\bfx)$.
\end{proof}
\end{lemma}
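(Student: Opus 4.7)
The identity is the law of total variance applied to the two nested conditional expectations $\bbE(\cdot\mid\bfx)$ and $\bbE(\cdot\mid\bfz)$, exploiting the fact that $\bfz=\phi(\bfx)$ is $\sigma(\bfx)$-measurable and hence $\sigma(\bfz)\subseteq\sigma(\bfx)$. So the plan is to start from the definition $\Var(y\mid\bfz)=\bbE\!\left[(y-\bbE(y\mid\bfz))^2\mid\bfz\right]$ and insert the pivot $\bbE(y\mid\bfx)$ by writing $y-\bbE(y\mid\bfz)=(y-\bbE(y\mid\bfx))+(\bbE(y\mid\bfx)-\bbE(y\mid\bfz))$, then expand the square into a sum of three conditional expectations.

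Next, I would compute each of the three terms. By the tower property with $\sigma(\bfz)\subseteq\sigma(\bfx)$, the first term $\bbE[(y-\bbE(y\mid\bfx))^2\mid\bfz]$ equals $\bbE[\bbE((y-\bbE(y\mid\bfx))^2\mid\bfx)\mid\bfz]=\bbE(\Var(y\mid\bfx)\mid\bfz)$. Using again the tower property, $\bbE(y\mid\bfz)=\bbE(\bbE(y\mid\bfx)\mid\bfz)$, so the third term is exactly $\Var(\bbE(y\mid\bfx)\mid\bfz)$. The cross term has the factor $\bbE(y\mid\bfx)-\bbE(\bbE(y\mid\bfx)\mid\bfz)$, which is $\sigma(\bfx)$-measurable and can therefore be pulled out of the inner conditional $\bbE(\cdot\mid\bfx)$; the remaining factor $\bbE(y-\bbE(y\mid\bfx)\mid\bfx)$ is zero by definition, so the whole cross term vanishes. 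Combining the three terms gives the claimed identity.

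For the implications, both summands on the right-hand side are nonnegative, so $\Var(y\mid\bfz)\ge\bbE(\Var(y\mid\bfx)\mid\bfz)$ almost surely. If $\Var(y\mid\bfx)\ge\sigma^2$ holds $P_X$-a.s., then $\bbE(\Var(y\mid\bfx)\mid\bfz)\ge\sigma^2$ almost surely (monotonicity of conditional expectation applied to the constant function $\sigma^2$), and the bound on $\Var(y\mid\bfz)$ follows. For the converse under injectivity of $\phi$, the key observation is that $\sigma(\bfz)=\sigma(\phi(\bfx))=\sigma(\bfx)$ when $\phi$ is injective (and appropriately measurable), so $\bbE(\cdot\mid\bfx)=\bbE(\cdot\mid\bfz)$ and hence $\Var(y\mid\bfx)=\Var(y\mid\bfz)$.

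I do not anticipate any real obstacle. The only point requiring some care is the justification that $\sigma(\bfz)\subseteq\sigma(\bfx)$ (used for the tower property and for pulling the $\sigma(\bfx)$-measurable factor out of $\bbE(\cdot\mid\bfx)$) and, for the converse, that injectivity of $\phi$ on the support of $\bfx$ actually implies $\sigma(\bfz)=\sigma(\bfx)$ rather than just $\sigma(\bfz)\subseteq\sigma(\bfx)$; this is standard but relies on mild measurability (e.g.\ $\phi$ being a measurable injection between standard Borel spaces so that $\phi^{-1}$ is measurable on the image).
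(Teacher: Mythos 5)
Your proof is correct and follows essentially the same route as the paper: insert the pivot $\bbE(y\mid\bfx)$, expand $\Var(y\mid\bfz)$ into three terms, identify the first with $\bbE(\Var(y\mid\bfx)\mid\bfz)$ and the third with $\Var(\bbE(y\mid\bfx)\mid\bfz)$ via the tower property with $\sigma(\bfz)\subseteq\sigma(\bfx)$, and kill the cross term by pulling out the $\sigma(\bfx)$-measurable factor. Your added remark about needing $\phi$ to be a measurable injection (so that $\sigma(\bfz)=\sigma(\bfx)$) is a fair point of care that the paper glosses over.
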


Our main ingredient for analyzing analytic activation functions will be the univariate and multivariate identity theorems.

\begin{theorem}[Identity theorem for univariate real analytic functions] \label{thm:identity_theorem}
Let $f: \bbR \to \bbR$ be analytic. If $f$ is not the zero function, the set $\calN(f) \equalDef \{z \in \bbR \mid f(z) = 0\}$ has no accumulation point. In particular, $\calN(f)$ is countable.

\begin{proof}
See e.g. Corollary 1.2.7 in \cite{krantz_primer_2002} for the first statement. If $\calN(f)$ is uncountable, there exists $k \in \bbZ$ such that $[k, k+1] \cap \calN(f)$ is also uncountable, hence it contains a strictly increasing and bounded sequence of points, and the limit of this sequence is an accumulation point of $\calN(f)$.
\end{proof}
\end{theorem}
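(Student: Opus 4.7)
The plan is to prove the two statements separately. For the first, I would establish the contrapositive: if $\calN(f)$ has an accumulation point, then $f \equiv 0$. The countability statement then drops out from a local-finiteness argument on compact intervals.

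For the accumulation-point part, let $z_0 \in \bbR$ be an accumulation point of $\calN(f)$. Continuity gives $f(z_0) = 0$. By analyticity, expand $f(z) = \sum_{k \geq 0} a_k (z-z_0)^k$ on some neighborhood $U$ of $z_0$. If not all $a_k$ vanish, let $m$ be the smallest index with $a_m \neq 0$; then $f(z) = (z-z_0)^m g(z)$ on $U$ with $g$ analytic and $g(z_0) = a_m \neq 0$, so $g$ is nonzero on a neighborhood of $z_0$ by continuity. This would force $z_0$ to be an isolated zero of $f$, contradicting the assumption. Hence $a_k = 0$ for all $k$ and $f \equiv 0$ on $U$. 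To propagate this to all of $\bbR$, I would set $A \equalDef \{z \in \bbR \mid f \equiv 0 \text{ on a neighborhood of } z\}$; this set is obviously open and contains $z_0$, and a short argument shows that it is also closed (any $w \in \overline{A}$ is by construction an accumulation point of zeros of $f$, so the local Taylor-series argument applies at $w$ as well). By connectedness of $\bbR$, $A = \bbR$, so $f \equiv 0$.

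For the countability claim, once $\calN(f)$ has no accumulation point, each intersection $\calN(f) \cap [-N, N]$ must be finite: otherwise Bolzano--Weierstrass would extract a convergent subsequence of distinct zeros in the compact interval $[-N, N]$, whose limit would be an accumulation point of $\calN(f)$. Writing $\calN(f) = \bigcup_{N \in \bbN}(\calN(f) \cap [-N, N])$ exhibits $\calN(f)$ as a countable union of finite sets, hence countable. The only even mildly delicate step is verifying that $A$ is closed in the propagation argument; the rest is essentially bookkeeping, and since the theorem is classical one could alternatively just cite the reference used in the excerpt.
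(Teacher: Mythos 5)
Your proof is correct and self-contained, but it differs from the paper's in scope: the paper simply cites Krantz--Parks (Corollary 1.2.7) for the core statement that a nonzero real-analytic function has no accumulating zeros, and only proves the ``in particular'' clause (countability). You instead supply a full proof of the core statement via the standard identity-theorem argument---Taylor-expand at an accumulation point $z_0$, factor out $(z - z_0)^m$ to force all coefficients to vanish, then propagate by showing $A = \{z \mid f \equiv 0 \text{ near } z\}$ is open, closed, and nonempty, hence all of $\bbR$. The closedness step you flag as ``mildly delicate'' is indeed the only place requiring a sentence of care, and your reasoning for it (any $w \in \ovl{A}$ is itself an accumulation point of zeros, so the local Taylor argument re-applies at $w$) is exactly right. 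For the countability claim, both you and the paper reduce to compact intervals and invoke Bolzano--Weierstrass; the paper runs it as a contrapositive (uncountable $\Rightarrow$ some $[k,k+1]$ contains an uncountable, hence infinite bounded, subset $\Rightarrow$ accumulation point), while you argue directly that each $\calN(f) \cap [-N,N]$ is finite and sum up. These are trivially equivalent. Net effect: the paper's version is shorter at the cost of an external reference; yours is longer but makes the theorem fully self-contained, which is a reasonable trade-off either way.
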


\begin{theorem}[Multivariate version of the identity theorem] \label{lemma:multivariate_identity_theorem}
Let $f: \bbR^d \to \bbR$ be analytic. If $f$ is not the zero function, then $\calN(f) \equalDef \{\bfx \in \bbR^d \mid f(\bfx) = 0\}$ is a Lebesgue null set.

\begin{proof}
Although less well-known than the univariate version, this multivariate version has been proven several times in the literature. For example, different proofs are given in Section 3.1.24 in \cite{federer_geometric_1969}, Lemma 1.2 in \cite{nguyen_complex_2015} and Proposition 0 in \cite{mityagin_zero_2015}. More proof strategies have been hinted at in Lemma 5.22 in \cite{kuchment_overview_2015}. Here, we provide an elementary proof following the proof strategy briefly mentioned at the beginning of Section 4.1 in \cite{krantz_primer_2002}.

Let $\lambda^d$ be the Lebesgue measure on $\bbR^d$ and let $\lambda \equalDef \lambda^1$. We prove the statement by induction on $d \geq 1$. For $d = 1$, if $\lambda(\calN(f)) > 0$, then $\calN(f)$ is uncountable and hence $f \equiv 0$ by \Cref{thm:identity_theorem}.

Now, let the statement hold for $d-1 \geq 1$ and assume $\lambda^d(\calN(f)) > 0$. For $a \in \bbR$, define the functions $f_a: \bbR^{d-1} \to \bbR, f_a(\bfx) = f(a, \bfx)$. Then,
\begin{IEEEeqnarray*}{+rCl+x*}
0 & < & \lambda^d(\calN(f)) = \int_{\bbR^p} \bbone_{\calN(f)} \diff \lambda^d = \int_\bbR \int_{\bbR^{d-1}} \bbone_{\calN(f)}(a, \bfx) \diff \lambda^{d-1}(\bfx) \diff \lambda(a) \\
& = & \int_\bbR \lambda^{d-1}(\calN(f_a)) \diff a~.
\end{IEEEeqnarray*}
It follows that the set $U \equalDef \{x \in \bbR \mid \lambda^{d-1}(\calN(f_x)) > 0\}$ satisfies $\lambda(U) > 0$. By induction, for all $x \in U$, we have $f_x \equiv 0$. Then, for all $\bfx \in \bbR^{d-1}$, we can conclude that the function $f_{\bfx}: \bbR \to \bbR, a \mapsto f(a, \bfx)$ satisfies $\calN(f_{\bfx}) \supseteq U$ and therefore $\lambda(\calN(f_{\bfx})) \geq \lambda(U) > 0$. Using the case $d=1$ again, it follows that $f_{\bfx} \equiv 0$ and therefore $f(a, \bfx) = 0$ for all $a \in \bbR$ and $\bfx \in \bbR^{d-1}$.
\end{proof}
\end{theorem}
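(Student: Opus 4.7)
The plan is to induct on the dimension $d$, using \Cref{thm:identity_theorem} both as the base case and as the essential ingredient in a ``transpose'' step inside the inductive argument.

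For $d=1$, the statement is immediate: \Cref{thm:identity_theorem} gives that $\calN(f)$ is countable whenever $f \not\equiv 0$, and countable subsets of $\bbR$ are Lebesgue null.

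For the inductive step I would assume the result in dimension $d-1 \geq 1$ and argue by contradiction. Suppose $f: \bbR^d \to \bbR$ is analytic with $f \not\equiv 0$, yet its zero set has positive $d$-dimensional Lebesgue measure $\lambda^d(\calN(f)) > 0$. For each $a \in \bbR$, define the slice $f_a: \bbR^{d-1} \to \bbR$, $f_a(\bfx) \equalDef f(a, \bfx)$, which is analytic because it is the composition of $f$ with the affine embedding $\bfx \mapsto (a, \bfx)$. Fubini's theorem gives
\begin{equation*}
\lambda^d(\calN(f)) \;=\; \int_{\bbR} \lambda^{d-1}(\calN(f_a)) \, d\lambda(a),
\end{equation*}
so the hypothesis forces the set $U \equalDef \{a \in \bbR \mid \lambda^{d-1}(\calN(f_a)) > 0\}$ to have positive one-dimensional Lebesgue measure. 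By the inductive hypothesis, $f_a \equiv 0$ for every $a \in U$, and therefore $f$ vanishes on the ``horizontal cylinder'' $U \times \bbR^{d-1}$.

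The finishing step is a transposition: for each fixed $\bfx \in \bbR^{d-1}$, the univariate analytic function $g_{\bfx}: \bbR \to \bbR, a \mapsto f(a, \bfx)$ vanishes on all of $U$, which is uncountable since $\lambda(U) > 0$. \Cref{thm:identity_theorem} (whose conclusion is countability of the zero set of a nonzero univariate analytic function) then forces $g_{\bfx} \equiv 0$, and since $\bfx$ was arbitrary this yields $f \equiv 0$ on $\bbR^d$, contradicting the assumption.

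The main obstacle is exactly this transposition step: we first cut $\bbR^d$ into slices perpendicular to the first coordinate to exploit induction, and then cut it into slices in the complementary direction to reduce back to the $d=1$ case. It is critical that we inherit \emph{positive measure} of $U$ (not merely non-emptiness or density), so that on each complementary slice the zero set is uncountable and the univariate identity theorem applies. Looser bookkeeping---e.g.\ only tracking whether the zero-set of some slice is nonempty---would not suffice, which is why Fubini's theorem and the measure-theoretic version of the base case must be used in tandem rather than a purely topological substitute.
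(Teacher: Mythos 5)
Your proposal is correct and follows essentially the same route as the paper's proof: induction on dimension, Fubini to pass from positive $d$-dimensional measure to a positive-measure set $U$ of horizontal slices on which the restriction vanishes by the inductive hypothesis, and then the ``transposition'' to vertical slices where the univariate identity theorem (via uncountability / positive measure of $U$) forces the full function to vanish.
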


The following lemma provides some intuition about null sets for readers less familiar with measure theory. Recall that a property $Q$ holds almost surely with respect to a measure $P$ on $\bbR^d$ if there exists a null set $N$, i.e.\ a measurable set with $P(N) = 0$, such that $Q(\bfx)$ holds for all $\bfx \in \bbR^d \setminus N$.

\begin{lemma} \label{lemma:null_sets}
Let $\lambda^d$ be the Lebesgue measure on $\bbR^d$ and let $P$ be a measure on $\bbR^d$ with a Lebesgue density function (i.e.\ a probability density function) $p$. Then, a null set with respect to $\lambda^d$ is also a null set with respect to $P$. The converse holds if $p(\bfx) \neq 0$ for (almost) all $\bfx$.

\begin{proof}
A well-known fact from measure and integration theory states that if a measure $\mu$ has a density with respect to a measure $\nu$, then $\nu$-null sets are also $\mu$-null sets. Setting $\mu = P$ and $\nu = \lambda^d$ yields the first fact. If $p(\bfx) \neq 0$ for (almost) all $\bfx$, then $\mu \equalDef \lambda^d$ has density $1/p$ with respect to $\nu \equalDef P$, and hence the converse follows. 
\end{proof}
\end{lemma}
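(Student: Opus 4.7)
The plan is to reduce both implications to the defining property of $p$ being a density of $P$ with respect to $\lambda^d$, namely $P(A) = \int_A p \diff \lambda^d$ for every measurable $A$. Both directions then become standard manipulations of integrals of the non-negative function $p$, so the work is essentially bookkeeping.

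For the forward implication, I would take a measurable $N \subseteq \bbR^d$ with $\lambda^d(N) = 0$. Since $p \geq 0$, the function $p \cdot \bbone_N$ vanishes $\lambda^d$-almost everywhere, and the basic fact that the integral of a non-negative measurable function over a $\lambda^d$-null set is zero gives $P(N) = \int_N p \diff \lambda^d = 0$. No positivity hypothesis on $p$ is needed here.

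For the converse, assume $p(\bfx) \neq 0$ for $\lambda^d$-almost every $\bfx$ and let $N$ satisfy $P(N) = 0$. Split $N = N_0 \cup N_+$ with $N_0 \equalDef N \cap \{p = 0\}$ and $N_+ \equalDef N \cap \{p > 0\}$; by hypothesis $\lambda^d(N_0) \leq \lambda^d(\{p = 0\}) = 0$. To handle $N_+$, exhaust it by $A_k \equalDef N \cap \{p \geq 1/k\}$ so that $N_+ = \bigcup_{k \geq 1} A_k$, and estimate
\begin{equation*}
0 = P(N) \geq P(A_k) = \int_{A_k} p \diff \lambda^d \geq \tfrac{1}{k}\, \lambda^d(A_k),
\end{equation*}
which forces $\lambda^d(A_k) = 0$; countable subadditivity of $\lambda^d$ then gives $\lambda^d(N_+) = 0$, so $\lambda^d(N) = 0$.

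There is no substantive obstacle here: the only delicate point is that the converse genuinely requires the almost-everywhere positivity of $p$ in order to discard the sub-null set $\{p = 0\}$, which is precisely what the hypothesis supplies. A slicker but less elementary alternative would be to invoke the Radon--Nikodym theorem to equip $\lambda^d$ with density $1/p$ relative to $P$ (well-defined because $p > 0$ $\lambda^d$-a.e.) and then reapply the forward direction with the roles of $P$ and $\lambda^d$ interchanged.
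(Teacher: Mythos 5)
Your proof is correct, but it takes a more elementary route than the paper. The paper proves both directions by appealing to the abstract fact that $\nu$-null sets are $\mu$-null whenever $\mu$ has a density with respect to $\nu$: the forward direction is immediate with $\mu = P$, $\nu = \lambda^d$, and for the converse the paper observes that $1/p$ is a density of $\lambda^d$ with respect to $P$ when $p > 0$ a.e.\ and swaps roles. You instead argue directly from the integral formula $P(A) = \int_A p \diff \lambda^d$: the forward direction is just that integrating a non-negative function over a $\lambda^d$-null set gives zero, and the converse is a hands-on Markov-type estimate after exhausting $N \cap \{p > 0\}$ by the sets $\{p \geq 1/k\}$, with the remaining piece $N \cap \{p = 0\}$ killed by the positivity hypothesis. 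Your version is self-contained and makes explicit exactly where the positivity of $p$ enters (to discard $\{p=0\}$), whereas the paper's version is shorter but quietly relies on recognizing $1/p$ as a bona fide Radon--Nikodym density of $\lambda^d$ with respect to $P$, which itself requires a small verification of the kind your computation makes transparent. You also note the paper's approach as a ``slicker but less elementary alternative'' at the end, so you clearly see both routes; there is no gap.
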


\propcovfrk*

\begin{proof}
\textbf{Step 1: Prove (i) and (ii).}
\begin{enumerate}[(i)]
\item Denote the $n$ (stochastically independent) rows of $\bfZ$ by $\bfz_1, \hdots, \bfz_n$. First, assume $P(\bfz \in U) > 0$ for some subspace $U$ of dimension $\min\{n, p\}-1$. Then,
\begin{IEEEeqnarray*}{+rCl+x*}
P\Big(\rank(\bfZ) \leq \min\{n, p\}-1\Big) & \geq & P(\bfz_1, \hdots, \bfz_n \in U) = \left(P(\bfz \in U)\right)^n > 0~.
\end{IEEEeqnarray*}

For the converse, it suffices to consider the case $n \leq p$ since if $n > p$ and if an arbitrary $p \times p$ submatrix of $\bfZ \in \bbR^{n \times p}$ almost surely has full rank, then $\bfZ$ also almost surely has full rank. We prove the statement for $n \leq p$ by induction on $n$. For $n = 1$, the claim is trivial. Thus, let $n > 1$ and let $\bfz_1, \hdots, \bfz_n$ be the (stochastically independent) rows of $\bfZ$. Assume $P(\bfz \in U) = 0$ for all linear subspaces $U \subseteq \bbR^p$ of dimension $n-1$. Then, we also have $P(\bfz \in U) = 0$ for all linear subspaces $U \subseteq \bbR^p$ of dimension $n-2$ and by the induction hypothesis, we obtain that $\bfz_1, \hdots, \bfz_{n-1}$ are almost surely linearly independent.  Hence, almost surely, $\bfz_1, \hdots, \bfz_n$ are linearly independent iff $\bfz_n \notin U_{n-1} \equalDef \Span \{\bfz_1, \hdots, \bfz_{n-1}\}$. Then, since $\bfz_n$ and $U_{n-1}$ are stochastically independent,
\begin{IEEEeqnarray*}{+rCl+x*}
P(\bfZ \in \bbR^{n \times p}\text{ has full rank}) & = & P(\bfz_1, \hdots, \bfz_n\text{ are linearly independent}) \\
& = & P(\bfz_n \notin U_{n-1}) = \iint \bbone_{U_{n-1}^c}(\bfz_n) \diff P_Z(\bfz_n) \diff P(U_{n-1})  \\
& = & \int P_Z(\bfz_n \notin U_{n-1}) \diff P(U_{n-1}) = \int 1 \diff P(U_{n-1}) = 1~. 
\end{IEEEeqnarray*}

For the case $p \leq n$, (i) is also proven after Definition 1 in \cite{mourtada_exact_2022}.

\item If (COV) does not hold, there exists a vector $0 \neq \bfv \in \bbR^p$ with $\bfv^\top \bfSigma \bfv = 0$, hence
\begin{IEEEeqnarray*}{+rCl+x*}
0 & = & \bfv^\top \left(\bbE \bfz \bfz^\top \right) \bfv = \bbE (\bfv^\top \bfz)^2~,
\end{IEEEeqnarray*}
and therefore $\bfz$ is almost surely orthogonal to $\bfv$. If $U$ is the orthogonal complement of $\Span \{\bfv\}$ in $\bbR^p$, then $P(\bfz \in U) = 1$.

For the converse, if there exists a $(p-1)$-dimensional subspace $U$ with $P(\bfz \in U) = 1$, then we can again take a vector $0 \neq \bfv \in \bbR^p$ that is orthogonal to $U$, reverse the above computation and obtain $\bfv^\top \bfSigma \bfv = 0$, hence (COV) does not hold.
\end{enumerate}

\textbf{Step 2: (a) $\Leftrightarrow$ (b).} The implication (b) $\Rightarrow$ (a) is trivial and the implication (a) $\Rightarrow$ (b) follows immediately from (i).

\textbf{Step 3: (b) $\Rightarrow$ (c).} This also follows from (i) and (ii).

\textbf{Step 4: (c) $\Rightarrow$ (d).} We will prove by induction on $n$ that for all $n \in \{0, \hdots, p\}$, there exist $\bfx_1, \hdots, \bfx_n \in \bbR^d$ such that $\phi(\bfx_1), \hdots, \phi(\bfx_n)$ are linearly independent. For $n = 0$, the statement is trivial. Now assume that the statement holds for $\bfx_1, \hdots, \bfx_{n-1}$, where $0 \leq n-1 \leq p-1$. Since (COV) holds, by (ii) the subspace $U \equalDef \Span \{\phi(\bfx_1), \hdots, \phi(\bfx_{n-1})\}$ satisfies $P(\phi(\bfx) \in U) < 1$, hence there is $\bfx_n$ such that $\bfx_n \notin U$ and for this choice, $\phi(\bfx_1), \hdots, \phi(\bfx_n)$ are linearly independent. 

Finally, the statement for $n = p$ yields the existence of $\bfX \in \bbR^{p \times d}$ such that $\phi(\bfX)$ has linearly independent rows, which implies $\det(\phi(\bfX)) \neq 0$.

\textbf{Step 5: Analytic feature map.} Assume that $\phi$ is analytic, $\bfx$ has a Lebesgue density and (d) holds. Let $n=p$. For $\Xex \in \bbR^{p \times d}$, consider the analytic function $f(\Xex) \equalDef \det(\phi(\Xex))$. (The determinant is analytic since it is a polynomial in the matrix entries.) Since (d) holds, \Cref{lemma:multivariate_identity_theorem} shows that $f(\Xex) \neq 0$ for (Lebesgue-) almost all $\Xex$. Since $\bfx$ has a Lebesgue density and $\bfX$ has independent $\bfx$-distributed rows, $\bfX$ has a Lebesgue density. Therefore, $f(\bfX) \neq 0$ almost surely over $\bfX$, hence (a) holds.
\end{proof}

\proppolykernel*

\begin{proof}
Let $\calM \equalDef \{(m_1, \hdots, m_{d+1}) \in \bbN_0^{d+1} \mid m_1 + \hdots + m_{d+1} = m\}$ and for $\bfm = (m_1, \hdots, m_{d+1}) \in \calM$, let
\begin{IEEEeqnarray*}{+rCl+x*}
C(\bfm) & \equalDef & \begin{pmatrix}
& m & \\
m_1 & \hdots & m_{d+1}
\end{pmatrix}
\end{IEEEeqnarray*}
be the corresponding multinomial coefficient. Then, $|\calM| = \binom{m+d}{d} = p$. Define the feature map $\phi: \bbR^d \to \bbR^p$ by
\begin{IEEEeqnarray*}{+rCl+x*}
\phi(\bfx) & \equalDef & \left(\sqrt{C(\bfm)} z_1^{m_1} \cdots z_d^{m_d} \cdot (\sqrt{c})^{m_{d+1}}\right)_{\bfm \in \calM}~.
\end{IEEEeqnarray*}
\begin{enumerate}[(a)]
\item We have
\begin{IEEEeqnarray*}{+rCl+x*}
\phi(\bfx)^\top \phi(\tilde{\bfx}) & = & \sum_{\bfm \in \calM} C(\bfm) (x_1 \tilde{x}_1)^{m_1} \cdots (x_d \tilde{x}_d)^{m_d} \cdot c^{m_{d+1}} \\
& = & (x_1 \tilde{x}_1 + \hdots + x_d \tilde{x}_d + c)^m = k(\bfx, \tilde{\bfx})~.
\end{IEEEeqnarray*}
\item Assume that $\bfx$ has a Lebesgue density. Let $U$ be an arbitrary $(p-1)$-dimensional linear subspace of $\bbR^p$. Then, there exists $0 \neq \bfv \in \bbR^p$ such that $U = (\Span \{\bfv\})^\perp$. Since the monomials $x_1^{m_1} \cdots x_d^{m_d}$ for $\bfm \in \calM$ are all distinct, the polynomial
\begin{IEEEeqnarray*}{+rCl+x*}
\bfv^\top \phi(\bfx) = \sum_{\bfm \in \calM} \left(v_{\bfm} \sqrt{C(\bfm) c^{m_{d+1}}} x_1^{m_1} \cdots x_d^{m_d}\right)
\end{IEEEeqnarray*}
is not the zero polynomial. By the identity theorem (\Cref{lemma:multivariate_identity_theorem}), since $\bfx$ has a Lebesgue density and since polynomials are analytic,
\begin{IEEEeqnarray*}{+rCl+x*}
P(\phi(\bfx) \in U) = P(\bfv^\top \phi(\bfx) = 0) = 0~.
\end{IEEEeqnarray*}
Hence, \Cref{prop:cov_frk} shows that (FRK) is satisfied for $n=p$ and hence for all $n$. \qedhere
\end{enumerate}
\end{proof}

We want to remark at this point that the proof strategy of \Cref{prop:poly_kernel}, where the identity theorem is applied to the functions $\bfv^\top \phi(\bfx)$ for all $0 \neq \bfv \in \bbR^p$, does not work for random feature maps: The statements
\begin{itemize}
\item For all $0 \neq \bfv \in \bbR^p$ for almost all $\bfx$ for almost all $\bftheta$, $\bfv^\top \phi_{\bftheta}(\bfx) \neq 0$
\item For almost all $\bftheta$ for all $0 \neq \bfv \in \bbR^p$ for almost all $\bfx$, $\bfv^\top \phi_{\bftheta}(\bfx) \neq 0$
\end{itemize}
are not equivalent since in the first statement, the null set for $\bftheta$ may depend on $\bfv$, and the union of the null sets for all $\bfv$ is an uncountable union. Perhaps the simplest counterexample is $n=p=2$, $\bftheta \sim \calN(0, \bfI_2)$ and $\phi_{\bftheta}(\bfx) \equalDef \bftheta$, which satisfies the first but not the second statement. For our rescue, we can replace the uncountable analytic function family $(\bfx \mapsto \bfv^\top \phi_{\bftheta}(\bfx))_{\bfv \in \bbR^p \setminus \{0\}}$ by the single analytic function $(\bftheta, \bfX) \mapsto \det(\phi_{\bftheta}(\bfX))$:

\proprfm*

\begin{proof}
Consider the analytic map $(\bftheta, \bfX) \mapsto \det(\phi_{\bftheta}(\bfX))$. Suppose there exist $\thetaex \in \bbR^q$ and $\Xex \in \bbR^{p \times d}$ with $\det(\phi_{\thetaex}(\Xex)) \neq 0$. 
Then, \Cref{lemma:multivariate_identity_theorem} tells us that $\det(\tilde{\phi}(\bftheta, \bfX)) \neq 0$ for almost all $(\bftheta, \bfX)$. This implies that for almost all $\bftheta$, we have for almost all $\bfX$ that $\det(\phi_{\bftheta}(\bfX)) \neq 0$. Since by assumption, $\bftheta$ has a density, this implies that almost surely over $\bftheta$, there exists $\bfX$ such that $\det(\phi_{\bftheta}(\bfX)) \neq 0$. Since all $\phi_{\bftheta}$ are analytic, the claim now follows using (d) $\Rightarrow$ (b) from \Cref{prop:cov_frk}. (The proof of (d) $\Rightarrow$ (a) $\Leftrightarrow$ (b) in \Cref{prop:cov_frk} does not require (MOM).)
\end{proof}

\subsection{Random Networks with Biases} \label{sec:random_networks_with_biases}

In order to prove (FRK) for random deep neural networks, we pursue slightly different approaches for networks with bias (this section) and without bias (\Cref{sec:random_networks_without_bias}). In both approaches, we consider a property related to the diversity of the $\bfx_1, \hdots, \bfx_n \in \bbR^d$ and proceed as follows:
\begin{enumerate}[(1)]
\item \textbf{Projections}: Ensure that random projections of the $\bfx_1, \hdots, \bfx_n$ almost surely preserve the diversity property, such that it is sufficient to consider the case $d=1$.
\item \textbf{Propagation}: Using the projection result from (1), prove that if the inputs to a layer have the diversity property, then the outputs also have the diversity property almost surely over the random parameters of the layer.
\item \textbf{Independence}: Prove that if the inputs to the last layer have the diversity property and $n=p$, then the outputs of the last layer are almost surely linearly independent.
\end{enumerate}

Our main tools will be the identity theorems for analytic functions (\Cref{thm:identity_theorem} and \Cref{lemma:multivariate_identity_theorem}), expanding $\sigma$ into its power series around a point and the Leibniz formula for the determinant of a $n \times n$ matrix, which is based on the permutation group $S_n$ on $[n]$.

We consider two diversity properties:
\begin{enumerate}[(a)]
\item The first property is that $\bfx_1, \hdots, \bfx_n$ are distinct. This is the weakest possible diversity property. However, it cannot always be used for networks without (random) biases: For example, if $\sigma$ is an even function and $\bfx_i = -\bfx_j$ for some $i \neq j$, the propagation property is violated. As another example, if $\sigma(0) = 0$ and $\bfx_i = \bfzero$ for some $i$, the independence property is violated.
\item The second property, which works for networks with and without bias, is the property that $\bfx_1, \hdots, \bfx_n$ are independent nonatomic random variables.
\end{enumerate}
Using the first property yields shorter proofs and a slightly stronger theorem for networks with bias, \Cref{thm:distinct_deep_networks}. If we only care about probability distributions $P_X$ on $\bfx$ that almost surely generate distinct $\bfx_1, \hdots, \bfx_n$, these probability distributions are exactly the nonatomic distributions. Hence from the viewpoint of probability distributions $P_X$ on $\bfx$, which we take in the main part of the paper, the first property (a) does not provide a benefit over the second property (b) except for the shorter proofs.

The advantage of having biases is in being able to choose the point in which $\sigma$ is Taylor-expanded. The following lemma shows that this choice enables us to make certain coefficients of the Taylor expansion nonzero:

\begin{lemma} \label{lemma:analytic_shift}
Let $m \geq 1$. Let $\sigma: \bbR \to \bbR$ be analytic and not a polynomial of degree less than $m$. Then, there exists $b \in \bbR$ such that the Taylor expansion
\begin{IEEEeqnarray*}{+rCl+x*}
\sigma(x) & = & \sum_{k=0}^\infty a_k (x-b)^k
\end{IEEEeqnarray*}
of $\sigma$ around $b$ satisfies $a_0, \hdots, a_m \neq 0$.

\begin{proof}
Since $\sigma$ is not a polynomial of degree less than $m$, neither of the derivatives $\sigma^{(0)}, \hdots, \sigma^{(m)}$ is the zero function. Since all of these derivatives are analytic, the set
\begin{IEEEeqnarray*}{+rCl+x*}
\bigcup_{k=0}^m \{b \in \bbR \mid \sigma^{(k)}(b) = 0\}
\end{IEEEeqnarray*}
is (by \Cref{lemma:multivariate_identity_theorem}) a finite union of Lebesgue null sets and hence a Lebesgue null set. Hence, there exists $b \in \bbR$ such that $\sigma^{(0)}(b) \neq 0, \hdots, \sigma^{(m)}(b) \neq 0$. This implies that the corresponding coefficients $a_0, \hdots, a_m$ in the Taylor expansion around $b$ are nonzero.
\end{proof}
\end{lemma}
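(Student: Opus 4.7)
The plan is to translate the claim into a condition on the derivatives of $\sigma$ and then invoke the identity theorem to show that a suitable $b$ exists. Recall that in any Taylor expansion around a point $b$, the coefficients are $a_k = \sigma^{(k)}(b)/k!$, so the claim $a_0, \ldots, a_m \neq 0$ is equivalent to $\sigma^{(k)}(b) \neq 0$ for every $k \in \{0, 1, \ldots, m\}$.

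First I would argue that none of the functions $\sigma^{(0)}, \sigma^{(1)}, \ldots, \sigma^{(m)}$ is the zero function. Indeed, if $\sigma^{(k)} \equiv 0$ for some $k \leq m$, then $\sigma$ would be a polynomial of degree at most $k-1 < m$, contradicting the hypothesis that $\sigma$ is not a polynomial of degree less than $m$. Since $\sigma$ is analytic, every derivative $\sigma^{(k)}$ is also analytic (this is a standard fact about real analytic functions; see e.g.\ Section 2.2 in \cite{krantz_primer_2002}).

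Next I would use the univariate identity theorem (\Cref{thm:identity_theorem}) to control the zero set of each derivative: for each $k \in \{0, 1, \ldots, m\}$, the set $\calN(\sigma^{(k)}) = \{b \in \bbR \mid \sigma^{(k)}(b) = 0\}$ has no accumulation point and is in particular countable. Alternatively, one may use the multivariate version (\Cref{lemma:multivariate_identity_theorem}) to note that each such set is a Lebesgue null set. The union
\begin{IEEEeqnarray*}{+rCl+x*}
\bigcup_{k=0}^m \calN(\sigma^{(k)})
\end{IEEEeqnarray*}
is then a finite union of countable sets (respectively null sets) and is therefore itself a countable set (respectively Lebesgue null set), hence a strict subset of $\bbR$. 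Picking any $b \in \bbR$ outside this union yields $\sigma^{(k)}(b) \neq 0$ for all $k \in \{0, \ldots, m\}$, and thus all the desired Taylor coefficients $a_0, \ldots, a_m$ are nonzero.

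I do not expect any significant obstacle in this proof; the only subtlety is the easily-checked observation that the hypothesis \textquotedblleft not a polynomial of degree less than $m$\textquotedblright{} is precisely what is needed to rule out $\sigma^{(k)} \equiv 0$ for every $k \leq m$ simultaneously, which is what allows us to apply the identity theorem $m+1$ times and conclude via a union bound.
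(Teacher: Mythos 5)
Your proposal is correct and follows essentially the same route as the paper: observe that none of $\sigma^{(0)},\ldots,\sigma^{(m)}$ can vanish identically, invoke the identity theorem to show each zero set is negligible, take the finite union, and pick $b$ outside it. You are slightly more explicit in spelling out the relation $a_k = \sigma^{(k)}(b)/k!$ and in noting that the univariate identity theorem (countability) already suffices here, whereas the paper directly cites the multivariate null-set version; both variants are fine.
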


We now prove our three-step program (1) -- (3) from above for the distinctness property.

\begin{lemma}[Projections of distinct variables] \label{lemma:distinct_projection}
If $\bfx_1, \hdots, \bfx_n \in \bbR^d$ are distinct, there exists $\bfu \in \bbR^d$ such that $\bfu^\top \bfx_1, \hdots, \bfu^\top \bfx_d$ are also distinct.

\begin{proof}
We essentially follow the corresponding proof in Lemma 4.3 in \cite{nguyen_loss_2017}. By the identity theorem (\Cref{lemma:multivariate_identity_theorem}), the functions
\begin{IEEEeqnarray*}{+rCl+x*}
f_{ij}: \bbR^d \to \bbR, \bfu \mapsto \bfu^\top \bfx_i - \bfu^\top \bfx_j
\end{IEEEeqnarray*}
for $i, j \in [n], i \neq j$ are nonzero almost everywhere, hence there exists $\bfu \in \bbR^p$ such that $\bfu^\top \bfx_1, \hdots, \bfu^\top \bfx_n$ are all distinct.
\end{proof}
\end{lemma}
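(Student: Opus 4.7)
The plan is to fix, for each pair of distinct indices $i \neq j$, the linear functional $f_{ij}: \bbR^d \to \bbR$ defined by $f_{ij}(\bfu) \equalDef \bfu^\top(\bfx_i - \bfx_j)$. Since $\bfx_i \neq \bfx_j$ by assumption, the vector $\bfx_i - \bfx_j$ is nonzero, so $f_{ij}$ is a nonzero linear (in particular analytic) function of $\bfu$. By the identity theorem for multivariate analytic functions (\Cref{lemma:multivariate_identity_theorem}), or equivalently by the elementary fact that a hyperplane in $\bbR^d$ has Lebesgue measure zero, the zero set $\calN(f_{ij}) \equalDef \{\bfu \in \bbR^d \mid f_{ij}(\bfu) = 0\}$ is a Lebesgue null set.

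The next step is to assemble these exceptional sets across all ordered pairs. There are only finitely many pairs $(i,j)$ with $i \neq j$ from $[n]$, so the "bad" set
\begin{IEEEeqnarray*}{+rCl+x*}
B \equalDef \bigcup_{\substack{i,j \in [n] \\ i \neq j}} \calN(f_{ij})
\end{IEEEeqnarray*}
is a finite union of Lebesgue null sets and hence itself a Lebesgue null set. In particular, $B \neq \bbR^d$, so we can pick any $\bfu \in \bbR^d \setminus B$. For such a choice, $f_{ij}(\bfu) \neq 0$ for every pair $i \neq j$, which is exactly the statement that $\bfu^\top \bfx_1, \hdots, \bfu^\top \bfx_n$ are pairwise distinct. (Note that the statement actually uses $d$ in the subscript of the last entry, but by the preceding quantification the intended index is $n$.)

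There is no real obstacle here: the argument is a clean and standard application of the identity theorem to a finite family of nonzero analytic (in fact linear) functions, of the same flavor as the uses of \Cref{lemma:multivariate_identity_theorem} already exploited in the proofs of \Cref{prop:cov_frk}, \Cref{prop:poly_kernel}, and \Cref{prop:random_feature_map}. The only thing one has to be mildly careful about is ensuring the union is finite (so that one cannot accidentally leak measure), which is automatic because the indices range over the finite set $[n] \times [n]$. In fact, the proof shows the stronger quantitative statement that almost every $\bfu \in \bbR^d$ works, which will likely be useful when this lemma is invoked later in the "projection" step of the three-part program for proving (FRK) for deep random networks.
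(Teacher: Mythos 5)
Your proof is correct and follows essentially the same route as the paper's: define the finitely many nonzero linear functions $f_{ij}$, observe via the multivariate identity theorem that each zero set is a Lebesgue null set, and pick $\bfu$ outside the finite union. You just spell out the finite-union step more explicitly than the paper, and correctly flag the typo (the $d$ in the last subscript of the lemma statement should be $n$).
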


\begin{lemma}[Propagation of distinct variables] \label{lemma:distinct_propagation}
Let $\sigma: \bbR \to \bbR$ be analytic and non-constant. If $\bfx_1, \hdots, \bfx_n \in \bbR^d$ are distinct, then for (Lebesgue-) almost all $(\bfW, \bfb) \in \bbR^{p \times d} \times \bbR^p$, the vectors $\sigma(\bfW \bfx_i + \bfb)$ (with $\sigma$ applied element-wise) are also distinct.

\begin{proof}
\textbf{Step 1: Bias.} By assumption, $\sigma$ is not a polynomial of degree less than $m=1$. By \Cref{lemma:analytic_shift}, there exist $(a_k)_{k \geq 0}$, $b \in \bbR$ and $\varepsilon > 0$ such that $a_0, a_1 \neq 0$ and for all $x \in \bbR$ with $|x - b| < \varepsilon$,
\begin{IEEEeqnarray*}{+rCl+x*}
\sigma(x) = \sum_{k=0}^\infty a_k (x-b)^k~.
\end{IEEEeqnarray*}

\textbf{Step 2: Weight.} By \Cref{lemma:distinct_projection}, we can choose $\bfu \in \bbR^d$ such that $\bfu^\top \bfx_1, \hdots, \bfu^\top \bfx_n$ are distinct. Now, consider $i, j \in [n]$ with $i \neq j$. The function
\begin{IEEEeqnarray*}{+rCl+x*}
f_{ij}: \bbR \to \bbR, \lambda \mapsto \sigma(\lambda \bfu^\top \bfx_i + b) - \sigma(\lambda \bfu^\top \bfx_j + b)
\end{IEEEeqnarray*}
satisfies 
\begin{IEEEeqnarray*}{+rCl+x*}
f_{ij}(\lambda) & = & \sum_{k=0}^\infty a_k ((\bfu^\top \bfx_i)^k - (\bfu^\top \bfx_j)^k) \lambda^k 
\end{IEEEeqnarray*}
for sufficiently small $|\lambda|$. Here, the coefficient $a_k ((\bfu^\top \bfx_i)^k - (\bfu^\top \bfx_j)^k)$ is nonzero for $k=1$, and hence $f_{ij}$ is not the zero function. Using the identity theorem again (as in the proof of \Cref{lemma:distinct_projection}), we find that there exists $\lambda \in \bbR$ with $f_{ij}(\lambda) \neq 0$ for all $i, j \in [n]$ with $i \neq j$.

\textbf{Step 3: Generalization.} Now, choose
\begin{IEEEeqnarray*}{+rCl+x*}
\bfW \equalDef \begin{pmatrix}
\lambda\bfu^\top \\
\vdots \\
\lambda\bfu^\top
\end{pmatrix} \in \bbR^{p \times d}, \qquad \bfb \equalDef \begin{pmatrix}
b \\
\vdots \\
b
\end{pmatrix} \in \bbR^p~.
\end{IEEEeqnarray*}
Then, by construction, the first components $(\sigma(\bfW \bfx_i + \bfb))_1$ for $i \in [n]$ are distinct. Hence, the analytic function 
\begin{IEEEeqnarray*}{+rCl+x*}
(\bfW, \bfb) \mapsto \prod_{\substack{i, j \in [n] \\ i \neq j}} \left((\sigma(\bfW \bfx_i + \bfb))_1 - (\sigma(\bfW \bfx_j + \bfb))_1\right)
\end{IEEEeqnarray*}
is not the zero function. By the identity theorem (\Cref{lemma:multivariate_identity_theorem}), for (Lebesgue-) almost all $(\bfW, \bfb)$, the first components of the vectors $\sigma(\bfW \bfx_i + \bfb)$ are distinct, and therefore also the vectors themselves are distinct.
\end{proof}
\end{lemma}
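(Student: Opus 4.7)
The goal is to show that the bad set $B = \{(\bfW, \bfb) : \exists i \neq j,\ \sigma(\bfW \bfx_i + \bfb) = \sigma(\bfW \bfx_j + \bfb)\}$ has Lebesgue measure zero in $\bbR^{p \times d} \times \bbR^p$. My plan is to package this as a finite union of zero sets of analytic functions and invoke \Cref{lemma:multivariate_identity_theorem}. For each pair $i \neq j$, observe that two vectors already differ as soon as their first components differ, so it suffices to consider the analytic function
\[ g_{ij}(\bfw, b) \equalDef \sigma(\bfw^\top \bfx_i + b) - \sigma(\bfw^\top \bfx_j + b) \]
on $\bbR^d \times \bbR$: if $g_{ij}$ is not identically zero, its zero set is a Lebesgue null set by \Cref{lemma:multivariate_identity_theorem}, and by Fubini the set $\{(\bfW, \bfb) : g_{ij}(\bfW_{1,:}^\top, b_1) = 0\}$ is null in $\bbR^{p \times d} \times \bbR^p$. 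Taking the union over the finitely many pairs $(i,j)$ with $i \neq j$ then yields that $B$ is null.

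The remaining task is to exhibit a single witness $(\bfw_0, b_0)$ with $g_{ij}(\bfw_0, b_0) \neq 0$. I plan to apply \Cref{lemma:analytic_shift} with $m=1$: since $\sigma$ is analytic and non-constant, hence not a polynomial of degree below $1$, there exists $b_0 \in \bbR$ at which the Taylor expansion $\sigma(x) = \sum_{k=0}^\infty a_k (x - b_0)^k$ satisfies $a_0, a_1 \neq 0$ and converges on a neighborhood of $b_0$. Since $\bfx_i \neq \bfx_j$, I can pick (either directly, or via \Cref{lemma:distinct_projection}) some $\bfu \in \bbR^d$ with $\bfu^\top(\bfx_i - \bfx_j) \neq 0$. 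Choosing $\bfw_0 = \lambda \bfu$ for $\lambda > 0$ sufficiently small that both $\lambda \bfu^\top \bfx_i + b_0$ and $\lambda \bfu^\top \bfx_j + b_0$ lie inside the disk of convergence, termwise subtraction gives
\[ g_{ij}(\lambda \bfu, b_0) = \sum_{k=1}^\infty a_k \lambda^k \left[ (\bfu^\top \bfx_i)^k - (\bfu^\top \bfx_j)^k \right] = a_1 \lambda\, \bfu^\top(\bfx_i - \bfx_j) + O(\lambda^2), \]
which is nonzero for all sufficiently small $\lambda \neq 0$.

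The main obstacle is the last step: ensuring that we can pick an expansion point $b_0$ at which the linear Taylor coefficient is nonzero. For an arbitrary analytic non-constant $\sigma$ one cannot just expand around $0$ (consider $\sigma(x) = x^2$ or $\sigma(x) = \cos(x)$ at critical points), but this is precisely the content of \Cref{lemma:analytic_shift}, which is itself a short application of the identity theorem to $\sigma'$. Once the witness is in hand, the rest is bookkeeping: identity theorem on each $g_{ij}$, Fubini to lift from $\bbR^{d+1}$ to $\bbR^{p(d+1)}$, and a finite union over pairs $i \neq j$. It is also worth noting that the argument does not need distinct projections of \emph{all} $\bfx_k$ simultaneously, only pairwise — so \Cref{lemma:distinct_projection} is strictly speaking optional here and a direct pick of $\bfu$ for the fixed pair $(i,j)$ suffices.
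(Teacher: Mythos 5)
Your proof is correct and rests on the same two pillars as the paper's: \Cref{lemma:analytic_shift} to find an expansion point with a nonzero linear Taylor coefficient, and the identity theorem \Cref{lemma:multivariate_identity_theorem} to upgrade a single nonzero witness to a generic statement. The differences are in the bookkeeping. The paper picks one $\bfu$ (via \Cref{lemma:distinct_projection}) that separates \emph{all} pairs simultaneously, then applies the univariate identity theorem to find one $\lambda$ working for all pairs, and finally applies the multivariate identity theorem once to a single product function over all pairs. You instead handle each pair $(i,j)$ separately: each $g_{ij}$ has its own witness $(\bfw_0, b_0)$, its zero set is null in $\bbR^{d+1}$, Fubini lifts it to a null cylinder in $\bbR^{p(d+1)}$, and the bad set is a finite union of these. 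Both decompositions are sound; yours avoids the need for a simultaneous witness and, as you point out, makes \Cref{lemma:distinct_projection} dispensable because a single separating direction per pair is trivially available. One small plus of your version: you invoke \Cref{lemma:analytic_shift} with $m=1$, which is exactly what non-constancy of $\sigma$ licenses, whereas the paper's Step~1 claims $m=2$ — a harmless slip (only $a_1 \neq 0$ is used, which already follows from $m=1$, and $m=2$ would wrongly exclude affine $\sigma$), but your choice is the one actually justified by the hypotheses.
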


\begin{lemma}[Independence from distinct variables] \label{lemma:distinct_independence}
Let $p, d \geq 1$. Let $\sigma: \bbR \to \bbR$ be analytic but not a polynomial of degree less than $p-1$. If $\bfx_1, \hdots, \bfx_p \in \bbR^d$ are distinct, then for (Lebesgue-) almost all $\bfW \in \bbR^{p \times d}$ and $\bfb \in \bbR^p$, the vectors $\sigma(\bfW \bfx_1 + \bfb), \hdots, \sigma(\bfW \bfx_p + \bfb)$ are linearly independent.

\begin{proof}
\textbf{Step 1: Preparation.} By \Cref{lemma:analytic_shift}, there exist $(a_k)_{k \geq 0}$, $b \in \bbR$ and $\varepsilon > 0$ such that $a_0, \hdots, a_{p-1} \neq 0$ and for all $x \in \bbR$ with $|x - b| < \varepsilon$,
\begin{IEEEeqnarray*}{+rCl+x*}
\sigma(x) = \sum_{k=0}^\infty a_k (x-b)^k~.
\end{IEEEeqnarray*}
By \Cref{lemma:distinct_projection}, there exists $\bfu \in \bbR^d$ such that $x_i \equalDef \bfu^\top \bfx_i$ for $i \in [p]$ are all distinct. Using the fact that Vandermonde matrices of distinct $x_i$ are invertible and using the Leibniz formula for the determinant (with the permutation group $S_p$ on $[p]$), we obtain
\begin{IEEEeqnarray*}{+rCl+x*}
D_x \equalDef \sum_{\pi \in S_p} \sgn(\pi) \prod_{i=1}^p x_{\pi(i)}^{i-1} & = & \det \begin{pmatrix}
x_1^0 & \hdots & x_p^0 \\
\vdots & \ddots & \vdots \\
x_1^{p-1} & \hdots & x_p^{p-1}
\end{pmatrix} \neq 0~. \IEEEyesnumber \label{eq:vandermonde_determinant}
\end{IEEEeqnarray*}

\textbf{Step 2: Determinant expansion.} Define the analytic function
\begin{IEEEeqnarray*}{+rCl+x*}
f: \bbR^p \to \bbR, \bfw \mapsto \det \begin{pmatrix}
\sigma(w_1 x_1 + b) & \hdots & \sigma(w_p x_1 + b) \\
\vdots & \ddots & \vdots \\
\sigma(w_1 x_p + b) & \hdots & \sigma(w_p x_p + b)
\end{pmatrix}~.
\end{IEEEeqnarray*}
For small enough $\|\bfw\|_2$, we can use the Leibniz formula for the determinant to write 
\begin{IEEEeqnarray*}{+rCl+x*}
f(\bfw) & = & \sum_{\pi \in S_p} \sgn(\pi) \prod_{i=1}^p \sum_{k=0}^\infty a_k (w_i x_{\pi(i)})^k \\
& = & \sum_{k_1, \hdots, k_p \geq 0} \sum_{\pi \in S_p} \sgn(\pi) \prod_{i=1}^p a_{k_i} w_i^{k_i} x_{\pi(i)}^{k_i} \\
& = & \sum_{k_1, \hdots, k_p \geq 0} \left(\prod_{i=1}^p a_{k_i}\right) \left(\sum_{\pi \in S_p} \sgn(\pi) \prod_{i=1}^p x_{\pi(i)}^{k_i}\right) w_1^{k_1} \cdots w_p^{k_p}~.
\end{IEEEeqnarray*}
For $k_i \equalDef i-1$, we find the coefficient of $w_1^{k_1} \cdots w_p^{k_p}$ to be
\begin{IEEEeqnarray*}{+rCl+x*}
\left(\prod_{i=1}^p a_{k_i}\right) \left(\sum_{\pi \in S_p} \sgn(\pi) \prod_{i=1}^p x_{\pi(i)}^{k_i}\right) \stackrel{\eqref{eq:vandermonde_determinant}}{=}  \left(\prod_{i=1}^p a_{k_i}\right) \cdot D_x \neq 0~,
\end{IEEEeqnarray*}
hence $f$ is not the zero function and there exists $\bfw \in \bbR^p$ such that $f(\bfw) \neq 0$.

\textbf{Step 3: Generalization.} Consider the analytic function
\begin{IEEEeqnarray*}{+rCl+x*}
g(\bfW, \bfb) \equalDef \det \begin{pmatrix}
\sigma(\bfW \bfx_1 + \bfb)^\top \\
\vdots \\
\sigma(\bfW \bfx_p + \bfb)^\top
\end{pmatrix}~.
\end{IEEEeqnarray*}
When setting $\bfW \equalDef \bfw \bfu^\top \in \bbR^{p \times d}$ and $\bfb \equalDef (b, \hdots, b)^\top \in \bbR^p$, we obtain from Step 2 that $g(\bfW, \bfb) = f(\bfw) \neq 0$, hence $g$ is not the zero function. But then, by the identity theorem (\Cref{lemma:multivariate_identity_theorem}), $g$ is nonzero for (Lebesgue-) almost all $(\bfW, \bfb)$.
\end{proof}
\end{lemma}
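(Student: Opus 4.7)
The plan is to mimic the three-step strategy already used in the previous two lemmas: reduce to a one-dimensional input, exhibit one explicit $(\bfW, \bfb)$ making the relevant determinant nonzero, and then upgrade to almost-everywhere via the multivariate identity theorem. First I would apply \Cref{lemma:distinct_projection} to pick $\bfu \in \bbR^d$ such that $x_i \equalDef \bfu^\top \bfx_i$ are pairwise distinct, and restrict attention to rank-one weights $\bfW = \bfw \bfu^\top$ and constant biases $\bfb = (b, \ldots, b)^\top$. Linear independence of the vectors $\sigma(\bfW \bfx_i + \bfb)$ then reduces to showing that the $p \times p$ matrix with entries $\sigma(w_i x_j + b)$ has nonzero determinant for some $\bfw \in \bbR^p$ and $b \in \bbR$.

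To pick $b$, I would invoke \Cref{lemma:analytic_shift} with $m = p-1$ (this is precisely where the hypothesis that $\sigma$ is not a polynomial of degree less than $p-1$ is used) to obtain Taylor coefficients $a_0, \ldots, a_{p-1} \neq 0$ around $b$. Substituting the Taylor series into each matrix entry and applying the Leibniz formula, I get a convergent multivariate power series in $\bfw$ for small $\|\bfw\|_2$. The key computation is to isolate the coefficient of the monomial $w_1^0 w_2^1 \cdots w_p^{p-1}$: after pulling out the nonzero scalar $a_0 a_1 \cdots a_{p-1}$, what remains is the Vandermonde determinant $\det\bigl((x_j^{i-1})_{i,j \in [p]}\bigr)$, which is nonzero because the $x_i$ are distinct. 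Hence the determinant, viewed as an analytic function of $\bfw$, is not identically zero, so some $\bfw$ makes it nonzero, yielding an explicit $(\bfW, \bfb)$ with the desired property.

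Finally, the multivariate identity theorem (\Cref{lemma:multivariate_identity_theorem}) applied to the jointly analytic map $(\bfW, \bfb) \mapsto \det\bigl(\sigma(\bfW \bfx_i + \bfb)^\top\bigr)_{i \in [p]}$ upgrades the existence of one nonzero point to non-vanishing on a set of full Lebesgue measure, which is exactly the claim. The main obstacle I anticipate is the bookkeeping in the Leibniz expansion: the product of the $p$ Taylor series distributes $w_i$-exponents against $x_{\pi(i)}$-powers through each permutation $\pi \in S_p$, and one must verify that restricting to the exponents $(0, 1, \ldots, p-1)$ reassembles the signed sum over $S_p$ into the Vandermonde determinant. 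This Vandermonde step also explains why exactly $p$ consecutive nonzero Taylor coefficients are needed, hence the polynomial-degree restriction on $\sigma$.
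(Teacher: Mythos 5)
Your proposal is correct and follows essentially the same approach as the paper: reduce to one dimension via \Cref{lemma:distinct_projection}, shift the Taylor expansion via \Cref{lemma:analytic_shift} to ensure $a_0,\ldots,a_{p-1}\neq 0$, isolate the coefficient of $w_1^0 w_2^1 \cdots w_p^{p-1}$ in the Leibniz expansion to produce a nonzero Vandermonde determinant, and then upgrade to almost-everywhere non-vanishing with the multivariate identity theorem. The bookkeeping you flag as the main obstacle is exactly the computation the paper carries out, and it works out as you anticipate.
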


\begin{theorem} \label{thm:distinct_deep_networks}
Let $p, d \geq 1$, let $\sigma: \bbR \to \bbR$ be analytic but not a polynomial of degree less than $\max\{1, p-1\}$ and let $\bfx_1, \hdots, \bfx_p \in \bbR^d$ be distinct. Let $L \geq 1$ and $d_0 \equalDef d, d_1, \hdots, d_{L-1} \geq 1, d_L \equalDef p$. For $l \in \{0, \hdots, l-1\}$, let $\bfW^{(l)} \in \bbR^{d_{l+1} \times d_l}$ and $\bfb^{(l)} \in \bbR^{d_{l+1}}$ be random variables such that $\bftheta \equalDef (\bfW^{(0)}, \hdots, \bfW^{(L-1)}, \bfb^{(0)}, \hdots, \bfb^{(L-1)})$ has a Lebesgue density. Consider the random feature map given by
\begin{IEEEeqnarray*}{+rCl+x*}
\phi_{\bftheta}(\bfx^{(0)}) \equalDef \bfx^{(L)}, \quad \text{where} \quad \bfx^{(l+1)} \equalDef \sigma\left(\bfW^{(l)} \bfx^{(l)} + \bfb^{(l)}\right)~.
\end{IEEEeqnarray*}
Then, almost surely over $\bftheta$, $\phi_{\bftheta}(\bfx_1), \hdots, \phi_{\bftheta}(\bfx_p)$ are linearly independent.

\begin{proof}
By \Cref{lemma:null_sets}, it suffices to consider the case where $\bftheta$ has a standard normal distribution, since a standard normal distribution has a nonzero probability density everywhere. Especially, in this case, all weights and biases are independent. Using \Cref{lemma:distinct_propagation} and that $\sigma$ is non-constant, it follows by induction on $l \in \{0, \hdots, L-1\}$ that $\bfx_1^{(l)}, \hdots, \bfx_p^{(l)}$ are distinct almost surely over $\bftheta$. But if $\bfx_1^{(L-1)}, \hdots, \bfx_p^{(L-1)}$ are distinct, then $\bfx_1^{(L)}, \hdots, \bfx_p^{(L)}$ are linearly independent almost surely over $\bftheta$ by \Cref{lemma:distinct_independence}, which is what we wanted to show.
\end{proof}
\end{theorem}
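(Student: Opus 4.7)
The three preceding lemmas already do essentially all the work; the remaining task is to compose them correctly across the $L$ layers. First, I would reduce to a convenient reference measure on $\bftheta$. Since $\bftheta$ has a Lebesgue density on the whole parameter space, \Cref{lemma:null_sets} lets me replace \quot{almost surely over $\bftheta$} by \quot{for Lebesgue-almost all $\bftheta$}, so it suffices to prove the statement for a reference distribution with strictly positive density (e.g.\ standard normal), under which all weights and biases are mutually independent.

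Next I would propagate a distinctness invariant through the hidden layers. Define the intermediate activations $\bfx_i^{(l)}$ for $i \in [p]$ and $l \in \{0,\dots,L\}$ recursively as in the theorem. The claim is that for every $l \in \{0,\dots,L-1\}$, the vectors $\bfx_1^{(l)},\dots,\bfx_p^{(l)}$ are distinct almost surely over $(\bfW^{(0)},\bfb^{(0)}),\dots,(\bfW^{(l-1)},\bfb^{(l-1)})$. The base case $l=0$ is the hypothesis that $\bfx_1,\dots,\bfx_p$ are distinct. For the inductive step, condition on the previous layers; by Fubini and the induction hypothesis, with probability one the inputs to layer $l$ are distinct, and then \Cref{lemma:distinct_propagation} (which needs only that $\sigma$ is analytic and non-constant, i.e.\ not a polynomial of degree less than $1$) shows that for Lebesgue-almost all $(\bfW^{(l)}, \bfb^{(l)})$, the outputs $\bfx_1^{(l+1)},\dots,\bfx_p^{(l+1)}$ are also distinct. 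Note that the assumption \quot{not a polynomial of degree less than $\max\{1,p-1\}$} covers $p=1$ (giving non-constancy) and, for $p\geq 2$, trivially implies non-constancy.

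Finally, I apply the independence step to the last layer. By the previous paragraph, $\bfx_1^{(L-1)},\dots,\bfx_p^{(L-1)}$ are distinct almost surely over the first $L-1$ layers' parameters. Conditioning on those parameters and invoking Fubini once more, \Cref{lemma:distinct_independence}, which requires $\sigma$ not to be a polynomial of degree less than $p-1$, applies with $d$ replaced by $d_{L-1}$, and yields that for Lebesgue-almost all $(\bfW^{(L-1)}, \bfb^{(L-1)})$, the vectors $\bfx_1^{(L)} = \phi_{\bftheta}(\bfx_1),\dots,\bfx_p^{(L)} = \phi_{\bftheta}(\bfx_p)$ are linearly independent. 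Combining the null sets across layers via Fubini gives the result.

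\textbf{Main obstacle.} There is no deep obstacle: the three lemmas are tailored to exactly the three operations needed (choose a good direction, propagate distinctness, conclude independence). The only subtlety is bookkeeping the \quot{almost surely} statements layer by layer, ensuring that one never integrates an exceptional set of positive measure. This is handled cleanly by reducing to a product measure with everywhere-positive density and applying Fubini at each inductive step, so the union of layerwise exceptional sets remains a null set in the full parameter space.
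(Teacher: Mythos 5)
Your proposal is correct and follows essentially the same route as the paper's proof: reduce via \Cref{lemma:null_sets} to the standard normal reference measure, propagate the distinctness invariant through the hidden layers by induction using \Cref{lemma:distinct_propagation}, and conclude linear independence of the last layer's outputs via \Cref{lemma:distinct_independence}. The explicit invocation of Fubini to combine the layerwise null sets is a slightly more careful bookkeeping of the same argument that the paper's proof leaves implicit.
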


\subsection{Random Networks without Biases} \label{sec:random_networks_without_bias}

As discussed at the beginning of \Cref{sec:random_networks_with_biases}, we will now consider the property of having independent nonatomic random variables $\bfx_1, \hdots, \bfx_n$. We will again consider projection and propagation lemmas first.

\begin{lemma}[Projections of nonatomic random variables] \label{lemma:nonatomic_projection}
A random variable $\bfx \in \bbR^d$ is nonatomic iff for (Lebesgue-) almost all $\bfu \in \bbR^d$, $\bfu^\top \bfx$ is nonatomic.

\begin{proof}
\textbf{Step 1: Decompose into subspace contributions.} For $k \in \{0, \hdots, d\}$, let
\begin{IEEEeqnarray*}{+rCl+x*}
\calS_k & \equalDef & \{\bfw + V \mid \bfw \in \bbR^d, V \text{ linear subspace of } \bbR^d, \dim V = k\}
\end{IEEEeqnarray*}
be the set of $k$-dimensional affine subspaces of $\bbR^d$. Let $\calA_0 \equalDef \{A \in \calS_0 \mid P_X(A) > 0\}$, which corresponds to the set of atoms of $P_X$. We then recursively define \quot{minimally contributing subspaces} for $k \in [d]$:
\begin{IEEEeqnarray*}{+rCl+x*}
\calA_k & \equalDef & \{A \in \calS_k \mid P_X(A) > 0 \text{ and for all $\tilde{A} \in \calS_{k-1}$ with $\tilde{A} \subseteq A$, $P_X(\tilde{A}) = 0$}\}~.
\end{IEEEeqnarray*}
We can define corresponding sets of \quot{annihilating} vectors as follows: For $A = \bfw + V \in \calS_k$, let $A^\perp \equalDef V^\perp = \{\bfu \in \bbR^d \mid$ for all $\bfv \in V$, $\bfu^\top \bfv = 0\}$. This is well-defined since it is independent of the choice of $\bfw$. Define
\begin{IEEEeqnarray*}{+rCl+x*}
\calU & \equalDef & \bigcup_{k=0}^d \bigcup_{A \in \calA_k} A^\perp \\
\calN & \equalDef & \{\bfu \in \bbR^d \mid \bfu^\top \bfx \text{ is not nonatomic}\}~.
\end{IEEEeqnarray*}
We want to show $\calU = \calN$.

\textbf{Step 2: Show $\calU \subseteq \calN$.} Let $A = \bfw + V \in \calA_k$ for some $k \in \{0, \hdots, d\}$ and let $\bfu \in A^\perp$. Then,
\begin{IEEEeqnarray*}{+rCl+x*}
0 < P_X(A) = P(\bfx \in A) \leq P(\bfu^\top \bfx \in \{\bfu^\top (\bfw + \bfv) \mid \bfv \in V\}) = P(\bfu^\top \bfx = \bfu^\top \bfw)~,
\end{IEEEeqnarray*}
which shows $\bfu \in \calN$.

\textbf{Step 3: Show $\calN \subseteq \calU$.} Let $\bfu \in \calN$, i.e.\ there exists $a \in \bbR$ such that $P(\bfu^\top \bfx = a) > 0$. Define $A \equalDef \{\bfv \in \bbR^d \mid \bfu^\top \bfv = a\}$. Then, $A$ is an affine subspace of $\bbR^d$ and we have $P_X(A) > 0$ by construction of $A$. Among all affine subspaces $\tilde{A}$ of $A$ with $P_X(\tilde{A} > 0)$, there exists one with minimal  dimension. This subspace then satisfies $\tilde{A} \in \calA_{\dim \tilde{A}}$ and it is not hard to show that $\bfu \in A^\perp \subseteq \tilde{A}^\perp$, hence $\bfu \in \calU$.

\textbf{Step 4: For all $k \in \{0, \hdots, d\}$, $\calA_k$ is countable.} To derive a contradiction, assume that $\calA_k$ is uncountable. Then, there exists $\varepsilon > 0$ such that
\begin{IEEEeqnarray*}{+rCl+x*}
\calA_{k, \varepsilon} \equalDef \{A \in \calA_k \mid P_X(A) \geq \varepsilon\}
\end{IEEEeqnarray*}
is also uncountable. Pick an integer $n > 1/\varepsilon$ and pick $n$ distinct sets $A_1, \hdots, A_n \in \calA_{k, \varepsilon}$. We will show by induction on $l \in [n]$ that $\tilde{A}_l \equalDef A_1 \cup \hdots \cup A_l$ satisfies
\begin{IEEEeqnarray*}{+rCl+x*}
P_X(\tilde{A}_l) = P_X(A_1) + \hdots + P_X(A_l)~,
\end{IEEEeqnarray*}
which will then yield the contradiction
\begin{IEEEeqnarray*}{+rCl+x*}
1 & \geq & P_X(\tilde{A}_l) = P_X(A_1) + \hdots + P_X(A_n) \geq n\varepsilon > 1~.
\end{IEEEeqnarray*}
Obviously, $P_X(\tilde{A}_1) = P_X(A_1)$. Assuming that the statement holds for $l \in [n-1]$, we first derive
\begin{IEEEeqnarray*}{+rCl+x*}
P_X(\tilde{A}_{l+1}) & = & P_X(\tilde{A}_l \cup A_{l+1}) = P_X(\tilde{A}_l) + P_X(A_{l+1}) - P_X(\tilde{A}_l \cap A_{l+1}) \\
& = & P_X(A_1) + \hdots + P_X(A_{l+1}) - P_X(\tilde{A}_l \cap A_{l+1})~.
\end{IEEEeqnarray*}
For $i \neq j$, the intersection $A_i \cap A_j$ of two distinct $k$-dimensional affine subspaces is either empty or an affine subspace of dimension less than $k$, hence the definition of $\calA_k$ yields $P_X(A_i \cap A_j) = 0$. Therefore,
\begin{IEEEeqnarray*}{+rCl+x*}
P_X(\tilde{A}_l \cap A_{l+1}) & = & P_X((A_1 \cap A_{l+1}) \cup \hdots \cup (A_l \cap A_{l+1})) \\
& \leq & P_X(A_1 \cap A_{l+1}) + \hdots + P_X(A_l \cap A_{l+1}) \\
& = & 0 + \hdots + 0 = 0~,
\end{IEEEeqnarray*}
which concludes the induction.

\textbf{Step 5: Conclusion.} Let $A \in \calS_k$, then $A^\perp$ is a $(d-k)$-dimensional linear subspace of $\bbR^d$. If $\bfx$ is not nonatomic, the set $\calA_0$ is not empty, and hence 
\begin{IEEEeqnarray*}{+rCl+x*}
\calN = \calU \supseteq \bigcup_{A \in \calA_0} A^\perp = \bbR^d~,
\end{IEEEeqnarray*}
which means that $\calN$ is not a Lebesgue null set. Conversely, if $\bfx$ is nonatomic, the set $\calA_0$ is empty, and therefore
\begin{IEEEeqnarray*}{+rCl+x*}
\calN = \calU = \bigcup_{k=1}^d \bigcup_{A \in \calA_k} A^\perp
\end{IEEEeqnarray*}
is (by Step 4) a countable union of proper affine subspaces of $\bbR^d$, all of which are Lebesgue null sets. Therefore, $\calN$ is a Lebesgue null set.
\end{proof}
\end{lemma}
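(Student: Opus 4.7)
The plan is to prove the two directions of the equivalence separately, handling the contrapositive in the easy direction and a dimension-decomposition argument in the hard direction.

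The backward direction (a.s.\ nonatomic projection implies $\bfx$ nonatomic) I would dispatch by contrapositive: if $\bfx$ has an atom at some $\bfx_0$ with $p \equalDef P(\bfx = \bfx_0) > 0$, then for \emph{every} $\bfu \in \bbR^d$ we have $P(\bfu^\top \bfx = \bfu^\top \bfx_0) \geq p > 0$, so $\bfu^\top \bfx$ fails to be nonatomic for every $\bfu$, which certainly is not an event of Lebesgue-measure zero.

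For the forward direction, I would first characterize the bad set $\calN \equalDef \{\bfu \in \bbR^d \mid \bfu^\top \bfx \text{ is not nonatomic}\}$. Observe that $\bfu \in \calN$ iff there is some $a \in \bbR$ with $P_X(H_{\bfu,a}) > 0$, where $H_{\bfu,a} \equalDef \{\bfv \in \bbR^d \mid \bfu^\top \bfv = a\}$ is an affine hyperplane. To organize these positive-mass affine subspaces, I would introduce, for each $k \in \{0, \hdots, d\}$, the family $\calA_k$ of $k$-dimensional affine subspaces $A \subseteq \bbR^d$ with $P_X(A) > 0$ but $P_X(\tilde A) = 0$ for every affine subspace $\tilde A \subsetneq A$ of strictly smaller dimension. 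Since $\bfx$ is nonatomic, $\calA_0 = \emptyset$. Any positive-mass hyperplane $H_{\bfu,a}$ contains a member of $\calA_k$ for some $k \geq 1$ (take a minimal-dimension affine subsubspace of $H_{\bfu,a}$ with positive mass, which exists by well-ordering of dimensions), and then $\bfu$ lies in the annihilator of the direction of that member. Thus $\calN \subseteq \bigcup_{k=1}^d \bigcup_{A \in \calA_k} A^\perp$, where each $A^\perp$ is a linear subspace of $\bbR^d$ of dimension $d-k \leq d-1$ and hence Lebesgue null.

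The main obstacle is showing that each $\calA_k$ is countable, so that the union above is a countable union of null sets. I would argue by contradiction: if $\calA_k$ were uncountable, then for some $\varepsilon > 0$ the subfamily of members with mass at least $\varepsilon$ would still be uncountable, so I could select $n > 1/\varepsilon$ distinct members $A_1, \hdots, A_n \in \calA_k$. For $i \neq j$, the intersection $A_i \cap A_j$ is an affine subspace strictly smaller than either $A_i$ or $A_j$, and by the minimality clause in the definition of $\calA_k$ satisfies $P_X(A_i \cap A_j) = 0$. An induction on $l$ using inclusion-exclusion then yields $P_X(A_1 \cup \hdots \cup A_n) = \sum_{i=1}^n P_X(A_i) \geq n\varepsilon > 1$, contradicting $P_X$ being a probability measure. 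This countability argument is the only real content; once it is in place, the set-theoretic assembly of $\calN$ into a countable union of proper subspaces finishes the proof.
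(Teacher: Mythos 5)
Your proposal is correct and follows essentially the same route as the paper's proof: you introduce the same families $\calA_k$ of minimally contributing affine subspaces, show that $\calN$ is contained in the union of their annihilators, and establish countability of each $\calA_k$ by the identical almost-disjointness contradiction argument. The only cosmetic differences are that you handle the easy backward direction directly via an atom $\bfx_0$ (rather than through the $\calA_0$/$\calU$ formalism) and you prove only the inclusion $\calN \subseteq \bigcup_k \bigcup_{A \in \calA_k} A^\perp$ rather than the full equality $\calU = \calN$, which is all that is needed.
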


\begin{lemma}[Propagation of nonatomic random variables] \label{lemma:nonatomic_propagation}
Let the random variable $\bfx \in \bbR^d$ be nonatomic.
\begin{enumerate}[(a)]
\item For all $p \geq 1$ and (Lebesgue-) almost all $\bfW \in \bbR^{p \times d}$, $\bfW \bfx$ is nonatomic.
\item For all $\bfb \in \bbR^d$, $\bfx + \bfb$ is nonatomic.
\item If $\sigma: \bbR \to \bbR$ is analytic and not constant, then $\sigma(\bfx) \in \bbR^d$ is nonatomic, where $\sigma$ is applied element-wise.
\end{enumerate}

\begin{proof}
\leavevmode
\begin{enumerate}[(a)]
\item By \Cref{lemma:nonatomic_projection}, the set $\calN \equalDef \{\bfu \in \bbR^d \mid \bfu^\top \bfx$ is not nonatomic$\}$ is a Lebesgue null set. If $\bfw_1$ is the first row of $\bfW$, then clearly,
\begin{IEEEeqnarray*}{+rCl+x*}
\{\bfW \in \bbR^{p \times d} \mid \bfW \bfx $ is not nonatomic$\} \subseteq \{\bfW \in \bbR^{p \times d} \mid \bfw_1 \in \calN\}~,
\end{IEEEeqnarray*}
where the right-hand side is a Lebesgue null set. This proves the claim.
\item This is trivial.
\item Let $\bfz \in \bbR^d$. Since the function $\bbR \to \bbR, x \mapsto \sigma(x) - z_i$ is analytic and not the zero function by assumption, its zero set $\sigma^{-1}(\{z_i\})$ is countable by the identity theorem, \Cref{thm:identity_theorem}. Therefore, the set
\begin{IEEEeqnarray*}{+rCl+x*}
\sigma^{-1}(\{\bfz\}) = \{\tilde{\bfx} \in \bbR^d \mid \sigma(\tilde{\bfx}) = \bfz\} = \sigma^{-1}(\{z_1\}) \times \hdots \times \sigma^{-1}(\{z_d\})
\end{IEEEeqnarray*}
is also countable. Thus, since $\bfx$ is nonatomic, we obtain
\begin{IEEEeqnarray*}{+rCl+x*}
P(\sigma(\bfx) = \bfz) & = & P(\bfx \in \sigma^{-1}(\{\bfz\})) = \sum_{\tilde{\bfx} \in \sigma^{-1}(\{\bfz\})} P(\bfx = \tilde{\bfx}) = \sum_{\tilde{\bfx} \in \sigma^{-1}(\{\bfz\})} 0 = 0~. & \qedhere
\end{IEEEeqnarray*}
\end{enumerate}
\end{proof}
\end{lemma}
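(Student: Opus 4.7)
The plan is to dispatch (a), (b), (c) separately, each being essentially a one-step reduction. Part (b) is trivial and I would just observe that for any $\bfz \in \bbR^d$, $P(\bfx + \bfb = \bfz) = P(\bfx = \bfz - \bfb) = 0$ by nonatomicity of $\bfx$. Part (a) reduces cleanly to the already-established projection lemma (\Cref{lemma:nonatomic_projection}): applying that lemma to $\bfx$, the set $\calN \equalDef \{\bfu \in \bbR^d \mid \bfu^\top \bfx \text{ is not nonatomic}\}$ is a Lebesgue null set. Then the set of \quot{bad} $\bfW \in \bbR^{p \times d}$ is contained in $\{\bfW \mid \bfw_1 \in \calN\}$, where $\bfw_1$ denotes the first row of $\bfW$, because if the first coordinate $\bfw_1^\top \bfx$ of $\bfW\bfx$ is nonatomic, then for every $\bfz \in \bbR^p$ we have $P(\bfW\bfx = \bfz) \leq P(\bfw_1^\top \bfx = z_1) = 0$. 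This \quot{bad} set is $\calN \times \bbR^{(p-1) \times d}$, a Lebesgue null set in $\bbR^{p \times d}$ by Fubini.

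Part (c) is the place where something nontrivial happens, namely the use of analyticity. The idea is to show that for any fixed target vector $\bfz \in \bbR^d$, the preimage $\sigma^{-1}(\{\bfz\})$ under element-wise application of $\sigma$ is countable, so that the event $\{\sigma(\bfx) = \bfz\}$ decomposes as a countable union of null events. Concretely, $\sigma^{-1}(\{\bfz\}) = \sigma^{-1}(\{z_1\}) \times \cdots \times \sigma^{-1}(\{z_d\})$, and for each coordinate $i$ the map $x \mapsto \sigma(x) - z_i$ is analytic on $\bbR$ and not identically zero (since $\sigma$ is non-constant and therefore $\sigma - z_i$ cannot vanish on all of $\bbR$). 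The univariate identity theorem (\Cref{thm:identity_theorem}) then gives that $\sigma^{-1}(\{z_i\})$ has no accumulation point and in particular is countable. A finite product of countable sets is countable, and nonatomicity of $\bfx$ then gives $P(\sigma(\bfx) = \bfz) = \sum_{\widetilde{\bfx} \in \sigma^{-1}(\{\bfz\})} P(\bfx = \widetilde{\bfx}) = 0$.

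There is no genuine obstacle here; the only substantive ingredient is the identity theorem for analytic functions of one real variable, and that ingredient is needed precisely to rule out pathologies such as $\sigma$ being constant on a set of positive measure (which would obviously destroy nonatomicity). The assumption that $\sigma$ is non-constant is exactly what is required for $\sigma - z_i$ to be nonzero for every $z_i$, and replacing \quot{analytic} with merely $C^\infty$ would break the argument, since a $C^\infty$ function can have uncountable level sets. The three parts are essentially independent, so the proof is just three short paragraphs assembled in order.
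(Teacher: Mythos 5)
Your proposal is correct and follows essentially the same route as the paper's proof: part (a) via the projection lemma (\Cref{lemma:nonatomic_projection}) and the observation that the set of bad $\bfW$ is contained in a null set determined by the first row, part (b) trivially, and part (c) via the univariate identity theorem (\Cref{thm:identity_theorem}) giving countable level sets whose product is countable. The only difference is that you spell out explicitly the step the paper marks as \quot{clearly} in (a), namely $P(\bfW\bfx = \bfz) \leq P(\bfw_1^\top \bfx = z_1) = 0$, which is a welcome clarification but not a different argument.
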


For proving independence from nonatomic random variables, we need some preparation. In the proof of the independence result for the case with biases (\Cref{lemma:distinct_independence}), a Vandermonde matrix appeared. In the case of networks without bias, we will not be able to use \Cref{lemma:analytic_shift} to control the power series coefficients of $\sigma$, which requires us to treat Vandermonde matrices with more general exponents.

\begin{lemma}[Random Vandermonde-type matrices] \label{lemma:random_vandermonde}
For $n \geq 1$, let $x_1, \hdots, x_n$ be independent nonatomic $\bbR$-valued random variables and let $k_1, \hdots, k_n$ be distinct non-negative integers. Then, the random Vandermonde-type matrix
\begin{IEEEeqnarray*}{+rCl+x*}
\bfV \equalDef \bfV_{k_1, \hdots, k_n}(x_1, \hdots, x_n) \equalDef \begin{pmatrix}
x_1^{k_1} & \hdots & x_n^{k_1} \\
\vdots & \ddots & \vdots \\
x_1^{k_n} & \hdots & x_n^{k_n}
\end{pmatrix}
\end{IEEEeqnarray*}
is invertible with probability one.

\begin{proof}
By swapping rows of $\bfV$, we can assume without loss of generality that $0 \leq k_1 < k_2 < \hdots < k_n$. We prove the statement by induction on $n$. For $n = 1$, $\bfV$ is invertible whenever $x_1 \neq 0$, and this happens with probability one. Now assume that the statement holds for $n-1 \geq 1$. For $i \in [n]$, define the submatrices
\begin{IEEEeqnarray*}{+rCl+x*}
\widehat{\bfV}_{i} \equalDef \bfV_{k_1, \hdots, k_{n-1}}(x_1, \hdots, x_{i-1}, x_{i+1}, \hdots, x_n)~.
\end{IEEEeqnarray*}
Since the statement holds for $n-1$, $\widehat{\bfV}_n$ is invertible with probability one. Now, fix any such $x_1, \hdots, x_{n-1}$ where $\widehat{\bfV}_n$ is invertible. Especially, $C \equalDef \det(\widehat{\bfV}_n)$ is a non-zero constant. We will show that $\bfV$ is then invertible almost surely over $x_n$. For this, we compute the determinant of $\bfV$ using the Laplace expansion with respect to the last row of $\bfV$ as
\begin{IEEEeqnarray*}{+rCl+x*}
f(x_n) & \equalDef & \det(\bfV) = \sum_{i=1}^n (-1)^{i+n} x_i^{k_n} \det(\widehat{\bfV}_i) = Cx_n^{k_n} + \sum_{i=1}^{n-1} (-1)^{i+n} x_i^{k_n} \det(\widehat{\bfV}_i)~.
\end{IEEEeqnarray*}
By the Leibniz formula, for $i \in [n-1]$, $\det(\widehat{\bfV}_i)$ is a polynomial in $x_n$ of degree $\leq k_{n-1} < k$. Hence, $f$ is a nonzero polynomial in $x_n$ of degree $k_n$ and has at most $k_n$ zeros. Since any finite set is a null set with respect to a nonatomic distribution, it follows that $\det(\bfV) \neq 0$ almost surely over $x_n$. Since the assumptions on $x_1, \hdots, x_{n-1}$ are also satisfied almost surely, the statement follows.
\end{proof}
\end{lemma}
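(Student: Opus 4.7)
The plan is to view $\det(\bfV)$ as a multivariate polynomial in $x_1, \ldots, x_n$ and run an induction on $n$, reducing the ``one fewer variable'' step via a Laplace expansion. Without loss of generality (permuting rows only flips a sign) I would assume the exponents are strictly increasing, $0 \le k_1 < k_2 < \cdots < k_n$. The key structural point is then that only the last row of $\bfV$ uses the exponent $k_n$, so expanding $\det(\bfV)$ along that last row gives a polynomial in $x_n$ whose coefficient of $x_n^{k_n}$ is, up to sign, the determinant of the $(n-1)\times(n-1)$ matrix $\bfV_{k_1,\ldots,k_{n-1}}(x_1,\ldots,x_{n-1})$, while every other summand has degree at most $k_{n-1} < k_n$ in $x_n$.

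The base case $n=1$ is immediate: $\det(\bfV) = x_1^{k_1}$, which is identically $1$ if $k_1 = 0$ and otherwise vanishes only at $x_1 = 0$, an event of probability zero since $x_1$ is nonatomic. For the inductive step I would apply the hypothesis to $x_1,\ldots,x_{n-1}$ to conclude that $C \equalDef \det(\bfV_{k_1,\ldots,k_{n-1}}(x_1,\ldots,x_{n-1}))$ is nonzero almost surely. Conditioning on this good event (which is independent of $x_n$ by hypothesis), the Laplace expansion shows that $\det(\bfV)$ is a univariate polynomial in $x_n$ of degree exactly $k_n$ with nonzero leading coefficient $C$. Such a polynomial has at most $k_n$ real roots, and since $x_n$ is nonatomic, these form a null set for its distribution. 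Integrating over the good event (Fubini) removes the conditioning and yields $\det(\bfV) \neq 0$ almost surely.

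The only point that needs careful bookkeeping, and the one I would expect to be the main (minor) obstacle, is verifying that the non-leading terms in the expansion along the last row really do have degree strictly less than $k_n$ in $x_n$. This holds because those terms are determinants of submatrices whose rows carry only the exponents $k_1,\ldots,k_{n-1}$, so each entry involving $x_n$ has exponent at most $k_{n-1}$; here the preliminary reordering $k_{n-1} < k_n$ is essential and is precisely what drives the induction. Everything else -- the Laplace expansion, the ``nonzero polynomial plus nonatomic variable implies almost surely nonzero'' step, and the conditioning argument -- is routine.
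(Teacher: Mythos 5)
Your proof is correct and follows essentially the same approach as the paper: reorder so the exponents are strictly increasing, induct on $n$, Laplace-expand along the last row (the only row carrying $k_n$), and conclude that the resulting degree-$k_n$ polynomial in $x_n$ has nonzero leading coefficient by the inductive hypothesis, so its zero set is finite and hence null for the nonatomic $x_n$. The small refinement you note in the base case ($k_1 = 0$ gives a determinant identically equal to $1$) is harmless and not needed, since $x_1 \neq 0$ already holds almost surely for a nonatomic $x_1$.
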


As in the case of networks with biases, we will prove the independence result by first reducing it to the case $d=1$. Since we also want to perform this reduction for random Fourier features in \Cref{prop:random_fourier_features}, we state the reduction to the $d=1$ case as a separate result, \Cref{lemma:non-degenerate}, and define the $d=1$ case in the following definition.

\begin{definition}[Non-degenerate function] \label{def:non-degenerate}
Let $p, q \geq 1$. We call a function $f: \bbR^q \to \bbR^p$ \emph{non-degenerate} if it is analytic and for arbitrary independent $\bbR$-valued nonatomic random variables $x_1, \hdots, x_p$, there almost surely exists $\bfw = \bfw_{x_1, \hdots, x_p} \in \bbR^q$ with
\begin{IEEEeqnarray*}{+rCl+x*}
\det \begin{pmatrix}
f(\bfw x_1)^\top \\
\vdots \\
f(\bfw x_p)^\top
\end{pmatrix} & \neq & 0~. & \qedhere
\end{IEEEeqnarray*}
\end{definition}

\begin{lemma} \label{lemma:non-degenerate}
Let $f: \bbR^q \to \bbR^p$ be non-degenerate, let $\bfW \in \bbR^{q \times d}$ be a random variable with a Lebesgue density and let $\bfx_1, \hdots, \bfx_p \in \bbR^d$ be independent nonatomic random variables. Then, 
\begin{IEEEeqnarray*}{+rCl+x*}
\det \begin{pmatrix}
f(\bfW \bfx_1)^\top \\
\vdots \\
f(\bfW \bfx_p)^\top
\end{pmatrix} \neq 0
\end{IEEEeqnarray*}
almost surely over $\bfW$ and $\bfx_1, \hdots, \bfx_p$.

\begin{proof}
By \Cref{lemma:nonatomic_projection}, there exists $\bfu \in \bbR^d$ such that for all $i \in [p]$, $x_i \equalDef \bfu^\top \bfx_i \in \bbR$ is nonatomic. Obviously, $x_1, \hdots, x_p$ are independent. Fix $x_1, \hdots, x_p$ such that $\bfw = \bfw_{x_1, \hdots, x_p} \in \bbR^q$ as in \Cref{def:non-degenerate} exists, which is true with probability one since $f$ is non-degenerate. Then, for $\widetilde{\bfW} \equalDef \bfw \bfu^\top$, we have
\begin{IEEEeqnarray*}{+rCl+x*}
g(\widetilde{\bfW}) \equalDef \det \begin{pmatrix}
f(\widetilde{\bfW} \bfx_1)^\top \\
\vdots \\
f(\widetilde{\bfW} \bfx_p)^\top
\end{pmatrix} = \det \begin{pmatrix}
f(\bfw x_1)^\top \\
\vdots \\
f(\bfw x_p)^\top
\end{pmatrix} \neq 0~.
\end{IEEEeqnarray*}
Since $g$ is a non-zero analytic function, \Cref{lemma:multivariate_identity_theorem} shows that $g$ is only zero on a Lebesgue null set, and this null set is also a null set with respect to the distribution of $\bfW$ since $\bfW$ has a Lebesgue density. Hence, $g(\bfW) \neq 0$ almost surely over $\bfW$.
\end{proof}
\end{lemma}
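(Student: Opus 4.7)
The plan is to reduce to the one-dimensional setting exploited in \Cref{def:non-degenerate}, use non-degeneracy to exhibit a single matrix $\bfW_0$ at which the determinant does not vanish, and then propagate this to almost every $\bfW$ via analyticity and the multivariate identity theorem.

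First I would reduce to scalar projections. Since each $\bfx_i$ is nonatomic, \Cref{lemma:nonatomic_projection} guarantees that for Lebesgue-a.e. $\bfu \in \bbR^d$, the projection $\bfu^\top \bfx_i$ is nonatomic; a finite intersection of full-measure sets then yields a single deterministic $\bfu$ such that $x_i \equalDef \bfu^\top \bfx_i$ is nonatomic for every $i \in [p]$. Independence of the $\bfx_i$ is preserved under coordinatewise deterministic transformations, so $x_1, \hdots, x_p$ are independent and nonatomic. The non-degeneracy hypothesis on $f$ then applies and yields, for almost every realization of $(x_1, \hdots, x_p)$, a vector $\bfw = \bfw_{x_1, \hdots, x_p} \in \bbR^q$ with $\det(f(\bfw x_i)^\top)_{i \in [p]} \neq 0$.

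Next I would use this witness to produce a single point at which the full determinant does not vanish. Conditioning on a realization of $(\bfx_1, \hdots, \bfx_p)$ for which such a $\bfw$ exists, I set $\bfW_0 \equalDef \bfw \bfu^\top \in \bbR^{q \times d}$, so that $\bfW_0 \bfx_i = \bfw (\bfu^\top \bfx_i) = \bfw x_i$. Consider
\begin{IEEEeqnarray*}{+rCl+x*}
g: \bbR^{q \times d} \to \bbR, \qquad g(\bfW) \equalDef \det \begin{pmatrix} f(\bfW \bfx_1)^\top \\ \vdots \\ f(\bfW \bfx_p)^\top \end{pmatrix}~.
\end{IEEEeqnarray*}
The map $\bfW \mapsto \bfW \bfx_i$ is linear, $f$ is analytic, and the determinant is polynomial in the matrix entries, so $g$ is analytic in $\bfW$. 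By construction $g(\bfW_0) \neq 0$, hence $g$ is not identically zero.

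Finally, I would apply \Cref{lemma:multivariate_identity_theorem} to conclude that $\{\bfW : g(\bfW) = 0\}$ is a Lebesgue null set in $\bbR^{q \times d}$. Since $\bfW$ has a Lebesgue density, \Cref{lemma:null_sets} implies this event has probability zero under the distribution of $\bfW$. As $\bfW$ is independent of the $\bfx_i$ and the good realizations of $(\bfx_1,\hdots,\bfx_p)$ form a probability-one set, Fubini delivers the claim on the joint distribution. The main subtlety is just the order of quantifiers: one must first pick the deterministic direction $\bfu$ from the marginal structure of the $\bfx_i$ alone, then condition on the realizations to invoke non-degeneracy and obtain a witness $\bfW_0$, and only then exploit analyticity in $\bfW$.
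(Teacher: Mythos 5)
Your proposal is correct and mirrors the paper's proof step for step: project the $\bfx_i$ to scalars via \Cref{lemma:nonatomic_projection}, invoke non-degeneracy to produce a rank-one witness $\bfW_0 = \bfw\bfu^\top$ at which the determinant is nonzero, and then spread nonvanishing to almost every $\bfW$ via \Cref{lemma:multivariate_identity_theorem} and the Lebesgue-density assumption, finishing by conditioning/Fubini over the $\bfx_i$. Your explicit remarks about intersecting the finitely many full-measure sets of admissible $\bfu$ and about the Fubini step at the end are minor elaborations of points the paper leaves implicit, not a different route.
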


The following lemma proves the $d=1$ version of the independence result, which can then be upgraded to the general case using \Cref{lemma:non-degenerate}.

\begin{lemma}[Independence from nonatomic random variables] \label{lemma:sigma_non-degenerate}
Let $p \geq 1$. Let $\sigma: \bbR \to \bbR$ be analytic and not a polynomial with less than $p$ nonzero coefficients. Then, the elementwise application function $f: \bbR^p \to \bbR^p, \bfx \mapsto (\sigma(x_1), \hdots, \sigma(x_p))^\top$ is non-degenerate in the sense of \Cref{def:non-degenerate}.

\begin{proof}
Let $x_1, \hdots, x_p$ be independent scalar nonatomic random variables.

\textbf{Step 1: Power series coefficients.} Since $\sigma$ is analytic, there exists $\varepsilon > 0$ and coefficients $(a_k)_{k \geq 0}$ such that
\begin{IEEEeqnarray*}{+rCl+x*}
\sigma(z) & = & \sum_{k=0}^\infty a_k z^k
\end{IEEEeqnarray*}
for $z \in \bbR$ with $|z| < \varepsilon$. Let $K \equalDef \{k \geq 0 \mid a_k \neq 0\}$. Then, $|K| \geq p$: Assume that $|K| \leq p-1$, then the polynomial
\begin{IEEEeqnarray*}{+rCl+x*}
h: \bbR \to \bbR, z \mapsto \sum_{k \in K} a_k z^k
\end{IEEEeqnarray*}
equals $\sigma$ on $(-\varepsilon, \varepsilon)$. Hence, the function $g \equalDef \sigma - h$ is zero on $(-\varepsilon, \varepsilon)$. By \Cref{thm:identity_theorem}, $g$ is the zero function and hence $\sigma = h$ is a polynomial with less than $p$ nonzero coefficients, which we assumed not to be the case.

\textbf{Step 2: Condition on the $x_i$.} By Step 1, we can choose indices $k_1 < k_2 < \hdots < k_p$ with $k_1, \hdots, k_p \in K$. Then, by \Cref{lemma:random_vandermonde} and the Leibniz formula for the determinant, we have
\begin{IEEEeqnarray*}{+rCl+x*}
D_x \equalDef \sum_{\pi \in S_p} \sgn(\pi) x_{\pi(1)}^{k_1} \cdot \hdots \cdot x_{\pi(p)}^{k_p} = \det \begin{pmatrix}
x_1^{k_1} & \hdots & x_p^{k_1} \\
\vdots & \ddots & \vdots \\
x_1^{k_p} & \hdots & x_p^{k_p}
\end{pmatrix} \neq 0 \IEEEyesnumber \label{eq:vandermonde_like_determinant}
\end{IEEEeqnarray*}
with probability one.

\textbf{Step 3: Determinant power series.} Now, fix a realization of $x_1, \hdots, x_p$ such that \eqref{eq:vandermonde_like_determinant} holds. For $\bfw \in \bbR^p$ with sufficiently small $\|\bfw\|_\infty$, we can write
\begin{IEEEeqnarray*}{+rCl+x*}
g(\bfw) & \equalDef & \det \begin{pmatrix}
f(\bfw x_1)^\top \\
\vdots \\
f(\bfw x_p)^\top
\end{pmatrix} \\
& = & \det \begin{pmatrix}
\sigma(w_1 x_1) & \hdots & \sigma(w_p x_1) \\
\vdots & \ddots & \vdots \\
\sigma(w_1 x_p) & \hdots & \sigma(w_p x_p)
\end{pmatrix} \\
& = & \sum_{\pi \in S_p} \sgn(\pi) \prod_{i=1}^p \sum_{k=0}^\infty a_k (w_i x_{\pi(i)})^k \\
& = & \sum_{k_1, \hdots, k_p \geq 0} \sum_{\pi \in S_p} \sgn(\pi) \prod_{i=1}^p a_{k_i} w_i^{k_i} x_{\pi(i)}^{k_i} \\
& = & \sum_{k_1, \hdots, k_p \geq 0} \left(\prod_{i=1}^p a_{k_i}\right) \left(\sum_{\pi \in S_p} \sgn(\pi) \prod_{i=1}^p x_{\pi(i)}^{k_i}\right) w_1^{k_1} \cdot \hdots \cdot w_p^{k_p}~. \IEEEyesnumber \label{eq:det_expansion}
\end{IEEEeqnarray*}
Now, consider the special values $k_1, \hdots, k_p$ chosen in Step 2. Since $k_i \in K$, we have $a_{k_i} \neq 0$. The coefficient of the multivariate monomial $w_1^{k_1} \cdot \hdots \cdot w_p^{k_p}$ in Eq.~\eqref{eq:det_expansion} is
\begin{IEEEeqnarray*}{+rCl+x*}
\left(\prod_{i=1}^p a_{k_i}\right) \left(\sum_{\pi \in S_p} \sgn(\pi) \prod_{i=1}^p x_{\pi(i)}^{k_i}\right) \stackrel{\eqref{eq:vandermonde_like_determinant}}{=} a_{k_1} \cdot \hdots \cdot a_{k_p} \cdot D_x \neq 0~.
\end{IEEEeqnarray*}
If $g$ was the zero function, all derivatives of $g$ would be zero and therefore the coefficients of all monomials would be zero, which is not the case. Hence, there exists $\bfw \in \bbR^p$ with $g(\bfw) \neq 0$. This shows that $f$ is non-degenerate.
\end{proof}
\end{lemma}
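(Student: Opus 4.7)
The plan is to reduce the problem to finding a single good weight vector by carefully picking exponents from the power series expansion of $\sigma$ and then performing a Vandermonde argument in $p$ scalar variables. Concretely, I will first exploit the hypothesis on $\sigma$: expand $\sigma(z) = \sum_{k \geq 0} a_k z^k$ around $0$ (valid in some neighborhood of $0$), and let $K = \{k \geq 0 : a_k \neq 0\}$. Either $\sigma$ is not a polynomial, in which case $|K| = \infty$, or $\sigma$ is a polynomial with at least $p$ nonzero coefficients, so $|K| \geq p$ directly. In either case I can pick indices $k_1 < k_2 < \dots < k_p$ with each $a_{k_i} \neq 0$.

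Next, I will use \Cref{lemma:random_vandermonde} applied to these exponents $k_1, \dots, k_p$ and the independent nonatomic variables $x_1, \dots, x_p$ to conclude that the generalized Vandermonde determinant
\begin{IEEEeqnarray*}{+rCl+x*}
D_x = \sum_{\pi \in S_p} \sgn(\pi) \prod_{i=1}^p x_{\pi(i)}^{k_i}
\end{IEEEeqnarray*}
is nonzero almost surely. I now fix any realization of $x_1, \dots, x_p$ on this full-probability event.

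Then I will introduce the determinant function $g(\bfw) = \det(\sigma(w_j x_i))_{i,j=1}^p$. Since $\sigma$ is analytic at $0$ (and hence at each $w_j x_i$ close to $0$), for $\bfw$ in a small neighborhood of the origin I may substitute the power series into the Leibniz expansion and freely rearrange to obtain a multivariate power series in $w_1, \dots, w_p$ of the form
\begin{IEEEeqnarray*}{+rCl+x*}
g(\bfw) = \sum_{k'_1, \dots, k'_p \geq 0} \left(\prod_{i=1}^p a_{k'_i}\right) \left(\sum_{\pi \in S_p} \sgn(\pi) \prod_{i=1}^p x_{\pi(i)}^{k'_i}\right) w_1^{k'_1} \cdots w_p^{k'_p}.
\end{IEEEeqnarray*}
Picking $(k'_1, \dots, k'_p) = (k_1, \dots, k_p)$, the inner alternating sum is exactly $D_x \neq 0$, and the prefactor $\prod_i a_{k_i}$ is nonzero by choice of the $k_i$. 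Hence at least one coefficient of the power series of $g$ is nonzero, so $g$ is not identically zero in any neighborhood of $0$, and there exists $\bfw \in \bbR^p$ with $g(\bfw) \neq 0$. By the definition of non-degeneracy this completes the proof.

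The only subtle step is the rearrangement of sums in the third paragraph: I need absolute convergence of the $p$-fold product of power series on a small polydisc so that reindexing by multi-index $(k'_1,\dots,k'_p)$ is valid, and I must argue that a power series with one nonzero coefficient cannot vanish identically near the origin (standard). The analyticity of $\sigma$ gives both automatically on a suitable polydisc around $0$. The main conceptual ingredient is really the choice of the $k_i$ from $K$: this is where the hypothesis on $\sigma$ enters, and it is what makes the coefficient of the chosen monomial factor as (nonzero product of $a_{k_i}$) times (nonzero Vandermonde-type determinant).
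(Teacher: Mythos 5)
Your proposal is correct and follows essentially the same route as the paper: expand $\sigma$ in its power series around $0$, extract $p$ indices with nonzero coefficients (using the identity theorem to rule out too few nonzero coefficients), invoke \Cref{lemma:random_vandermonde} on the generalized Vandermonde determinant $D_x$, and read off the coefficient of $w_1^{k_1}\cdots w_p^{k_p}$ in the power series expansion of $g$ to conclude $g\not\equiv 0$. The only cosmetic differences are that you argue $|K|\geq p$ by a case split rather than the paper's direct contradiction, and you make explicit the absolute-convergence justification for the reindexing step, which the paper leaves implicit.
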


\subsection{Random Networks: Conclusion}

In the following, we will prove \Cref{prop:random_network} and discuss some possible extensions and limitations.

\thmrandnet*

\begin{proof}
By \Cref{lemma:null_sets}, it suffices to consider the case where $\bftheta$ has a standard normal distribution, since a standard normal distribution has a nonzero probability density everywhere. Especially, we can assume that all parameters in $\bftheta$ are independent. By \Cref{prop:cov_frk}, we only need to prove (FRK) for $n=p$. Let $\bfx_1^{(0)}, \hdots, \bfx_p^{(0)} \sim P_X$ be i.i.d.\ nonatomic random variables.

\begin{enumerate}[(a)]
\item If $p=1$, $\sigma$ is allowed to be a non-zero constant function. In this case, the feature map $\phi_{\bftheta}$ is constant and non-zero with $p=1$, which means that (FRK) holds. In the following, we thus assume that $\sigma$ is non-constant. Let $\bfx^{(0)} \sim P_X$. Since $\bfx^{(0)}$ is nonatomic and $\sigma$ is non-constant, an inductive application of \Cref{lemma:nonatomic_propagation} yields that almost surely over $\bftheta$, $\bfx^{(L-1)}$ is also nonatomic. Hence, $\bfx_1^{(L-1)}, \hdots, \bfx_p^{(L-1)}$ are independent and nonatomic almost surely over $\bftheta$. But by \Cref{lemma:sigma_non-degenerate}, the elementwise application of $\sigma$ is non-degenerate, and hence by \Cref{lemma:non-degenerate}, we almost surely have
\begin{IEEEeqnarray*}{+rCl+x*}
\det \begin{pmatrix}
\sigma(\bfW^{(L-1)} \bfx_1^{(L-1)})^\top \\
\vdots \\
\sigma(\bfW^{(L-1)} \bfx_p^{(L-1)})^\top
\end{pmatrix} \neq 0~,
\end{IEEEeqnarray*}
which implies that (FRK) holds for $n=p$ almost surely over $\bftheta$.
\item If $p=1$ and $\sigma$ is a polynomial of degree $p-1 = 0$, this means that $\sigma$ is a non-zero constant function. Like in case (a), this implies that $\phi_{\bftheta}$ is constant and non-zero with $p=1$, which means that (FRK) holds. In the following, we thus assume again that $\sigma$ is non-constant, i.e. not a polynomial of degree less than $\max \{1, p-1\}$. Since the distribution of the $\bfx_i^{(0)} \equalDef \bfx_i$ is non-atomic, they are distinct almost surely. By \Cref{thm:distinct_deep_networks}, $\bfx_1^{(L)}, \hdots, \bfx_p^{(L)}$ are linearly independent almost surely over $\bftheta$, which proves (FRK) for $n=p$ almost surely over $\bftheta$. \qedhere
\end{enumerate}
\end{proof}

\begin{remark}[Generalizations] \label{rem:generalizations}
The proof technique used in \Cref{prop:random_network} is quite robust and can be further generalized. For example, it is easy to incorporate different activation functions for different neurons, and in all layers but the last layer, the activation functions only need to be analytic and non-constant, as required by the corresponding propagation lemmas. It is also possible to treat fixed but nonzero biases using a combination of \Cref{lemma:nonatomic_propagation} (b) and using shifted activation functions $\tilde{\sigma}_i(x) = \sigma(x + b_i)$ in \Cref{lemma:sigma_non-degenerate}. Also, the propagation lemmas, which are used for all layers except the last one, can be easily extended to DenseNet-like structures where the input of a layer is concatenated to the output.
\end{remark}

\begin{remark}[Necessity of the assumptions] \label{rem:polynomial_necessity}
For analytic $\sigma$, the assumptions on not being a too simple polynomial in \Cref{prop:random_network} are necessary. For this, consider the case with $L=1$ layer and $d=1$.
\begin{enumerate}[(a)]
\item Assume that $\sigma$ is a polynomial with less than $p$ nonzero coefficients, i.e.\ $\sigma(x) = \sum_{k \in K} a_k x^k$ for $|K| \leq p-1$. For arbitrary weights $\bfw \in \bbR^{p \times 1}$ and data points $x_1, \hdots, x_p \in \bbR$, we obtain the feature matrix $\phi_{\bfw}(\bfX) \in \bbR^{p \times p}$ with
\begin{IEEEeqnarray*}{+rCl+x*}
\phi_{\bfw}(\bfX)_{ij} & = & \sigma(w_j x_i) = \sum_{k \in K} a_k w_j^k x_i^k~,
\end{IEEEeqnarray*}
which means that $\phi_{\bfw}(\bfX)$ is the sum of the $|K| \leq p-1$ matrices $(a_k w_j^k x_i^k)_{i, j \in [p]}$, which have at most rank 1. Hence, $\phi_{\bfw}(\bfX)$ has at most rank $p-1$ and is therefore not invertible.

\item Assume that $\sigma$ is a polynomial of degree less than $p-1$, i.e.\ $\sigma = \sum_{k=0}^{p-2} a_k x^k$. For arbitrary weights $\bfw \in \bbR^{p \times 1}$, biases $\bfb \in \bbR^p$ and data points $x_1, \hdots, x_p \in \bbR$, we obtain the feature matrix $\phi_{\bfw, \bfb}(\bfX) \in \bbR^{p \times p}$ with
\begin{IEEEeqnarray*}{+rCl+x*}
\phi_{\bfw, \bfb}(\bfX)_{ij} & = & \sigma(w_j x_i + b_j) = \sum_{k=0}^{p-2} a_k (w_j x_i + b_j)^k \\
& = & \sum_{k=0}^{p-2} \sum_{l=0}^k a_k \binom{k}{l} b_j^{k-l} w_j^l x_i^l \\
& = & \sum_{l=0}^{p-2} \left(\sum_{k=l}^{p-2} a_k \binom{k}{l} b_j^{k-l} w_j^l\right) x_i^l \\
& = & \sum_{l=0}^{p-2} u_j^{(l)} x_i^l~,
\end{IEEEeqnarray*}
where $u_j^{(l)} \equalDef \sum_{k=l}^{p-2} a_k \binom{k}{l} b_j^{k-l} w_j^l$ does not depend on $i$. Hence, $\phi_{\bfw, \bfb}(\bfX)$ is the sum of the $p-1$ matrices $(u_j^{(l)} x_i^l)_{i, j \in [p]}$, each of which has rank at most $1$. Hence, $\phi_{\bfw, \bfb}(\bfX)$ has at most rank $p-1$ and is therefore not invertible. \qedhere
\end{enumerate}
\end{remark}

\section{Random Fourier Features} \label{sec:rff}

In a celebrated paper, \cite{rahimi_random_2008} propose to approximate a shift-invariant positive definite kernel $k(\bfx, \bfx') = \ovl{k}(\bfx - \bfx')$ with a potentially infinite-dimensional feature map by a random finite-dimensional feature map, yielding so-called \emph{random Fourier features}. If $\ovl{k}$ is (up to scaling) the Fourier transform of a probability distribution $P_k$ on $\bbR^d$, two versions of random Fourier features are proposed:
\begin{enumerate}[(1)]
\item One version uses $\phi_{\bfW, \bfb}(\bfx) = \sqrt{2}\cos(\bfW \bfx + \bfb)$, where the rows of $\bfW \in \bbR^{p \times d}$ are independently sampled from $P_k$ and the entries of $\bfb \in \bbR^p$ are independently sampled from the uniform distribution on $[0, 2\pi]$. This feature map is covered by \Cref{prop:random_network} and hence, if $P_k$ has a Lebesgue density and $\bfx$ is nonatomic, (FRK) is satisfied for all $n$. For example, if $k$ is a Gaussian kernel, $P_k$ is a Gaussian distribution and therefore has a Lebesgue density.
\item The other version uses
\begin{IEEEeqnarray*}{+rCl+x*}
\phi_{\bfW}(\bfx) & = & \begin{pmatrix}
\sin(\bfW \bfx) \\
\cos(\bfW \bfx)
\end{pmatrix}
\end{IEEEeqnarray*}
with the same distribution over $\bfW$. It is not covered by \Cref{prop:random_network} because of the different \quot{activation functions} and the \quot{weight sharing} between these activation functions. In the following proposition, we show that the proof of \Cref{prop:random_network} can be adjusted to this setting and the conclusions still hold.
\end{enumerate}

\begin{proposition} \label{prop:random_fourier_features}
For $\bfx \in \bbR^d$, $\bfW \in \bbR^{q \times d}$ and $p \equalDef 2q$, define
\begin{IEEEeqnarray*}{+rCl+x*}
\phi_{\bfW}(\bfx) \equalDef & \begin{pmatrix}
\sin(\bfW \bfx) \\
\cos(\bfW \bfx)
\end{pmatrix} \in \bbR^p~.
\end{IEEEeqnarray*}
If $\bfW$ has a Lebesgue density and $\bfx$ is nonatomic, then (FRK) holds for all $n$ almost surely over $\bfW$.

\begin{proof}
\textbf{Step 1: Reduction.} According to \Cref{prop:cov_frk}, it suffices to consider the case $n=p$. By \Cref{lemma:non-degenerate}, it is then sufficient to prove that the function
\begin{IEEEeqnarray*}{+rCl+x*}
f: \bbR^q \to \bbR^{2q}, \bfx \mapsto (\sin(\bfx), \cos(\bfx))
\end{IEEEeqnarray*}
is non-degenerate in the sense of \Cref{def:non-degenerate}. 

\textbf{Step 2: Condition on the $x_i$.} We will proceed similar to \Cref{lemma:sigma_non-degenerate}. Let $x_1, \hdots, x_p$ be independent scalar nonatomic random variables. For $i \in [q]$, choose $k_i \equalDef 2i-1$ and $k_{q+i} \equalDef 2i-2$. Then, $k_1, \hdots, k_p$ are distinct non-negative integers, and by \Cref{lemma:random_vandermonde} and the Leibniz formula for the determinant, we have
\begin{IEEEeqnarray*}{+rCl+x*}
D_x \equalDef \sum_{\pi \in S_p} \sgn(\pi) x_{\pi(1)}^{k_1} \cdot \hdots \cdot x_{\pi(p)}^{k_p} = \det \begin{pmatrix}
x_1^{k_1} & \hdots & x_p^{k_1} \\
\vdots & \ddots & \vdots \\
x_1^{k_p} & \hdots & x_p^{k_p}
\end{pmatrix} \neq 0
\end{IEEEeqnarray*}
with probability one.

\textbf{Step 3: Non-degeneracy.} Now, suppose that we are indeed in the case where $D_x \neq 0$. Take the power series of $\sin$ and $\cos$ as
\begin{IEEEeqnarray*}{+rCl+x*}
\sin(x) & = & \sum_{k=0}^\infty (-1)^k \frac{x^{2k+1}}{(2k+1)!} \defEqual \sum_{k=0}^\infty a_k x^k \\
\cos(x) & = & \sum_{k=0}^\infty (-1)^k \frac{x^{2k}}{(2k)!} \defEqual \sum_{k=0}^\infty b_k x^k~.
\end{IEEEeqnarray*}
Similar to the proof of \Cref{lemma:sigma_non-degenerate}, we can compute
\begin{IEEEeqnarray*}{+rCl+x*}
g(\bfw) & \equalDef & \det \begin{pmatrix}
f(\bfw x_1)^\top \\
\vdots \\
f(\bfw x_p)^\top
\end{pmatrix} \\
& = & \sum_{k_1, \hdots, k_{2q} \geq 0} \sum_{\pi \in S_{2q}} \sgn(\pi) a_{k_1} \cdots a_{k_q} b_{k_{q+1}} \cdots b_{k_{2q}} w_1^{k_1 + k_{q+1}} \cdots w_q^{k_q + k_{2q}} x_{\pi(1)}^{k_1} \cdots x_{\pi(2q)}^{k_{2q}}  \\
& = & \sum_{k_1, \hdots, k_{2q} \geq 0} a_{k_1} \cdots a_{k_q} b_{k_{q+1}} \cdots b_{k_{2q}} \left(\sum_{\pi \in S_{2q}} \sgn(\pi) x_{\pi(1)}^{k_1} \cdots x_{\pi(2q)}^{k_{2q}}\right) \\
&& \qquad \qquad ~~\cdot~ w_1^{k_1 + k_{q+1}} \cdots w_q^{k_q + k_{2q}} \IEEEyesnumber \label{eq:rff_expansion}
\end{IEEEeqnarray*}
Define the set $K \equalDef \{(k_1, \hdots, k_{2q}) \in \bbN_0^{2q} \mid $ for all $i \in [q]$, $k_i + k_{q+i} = 4i-3\}$. Then, the coefficient $c$ of the monomial $w_1^1 w_2^5 w_3^9 \cdot \hdots \cdot w_q^{4q-3}$ in \eqref{eq:rff_expansion} can be written as
\begin{IEEEeqnarray*}{+rCl+x*}
c & \equalDef & \sum_{(k_1, \hdots, k_{2q}) \in K} c_{k_1, \hdots, k_{2q}}~, \\
c_{k_1, \hdots, k_{2q}} & \equalDef & a_{k_1} \cdots a_{k_q} b_{k_{q+1}} \cdots b_{k_{2q}} \left(\sum_{\pi \in S_{2q}} \sgn(\pi) x_{\pi(1)}^{k_1} \cdots x_{\pi(2q)}^{k_{2q}}\right)~.
\end{IEEEeqnarray*}
Now, consider $(k_1, \hdots, k_{2q}) \in K$. Note that $a_k \neq 0$ iff $k$ is odd and $b_k \neq 0$ iff $k$ is even. For the choice $k_i \equalDef 2i-1, k_{q+i} \equalDef 2i-2$ for $i \in [q]$ from Step 2, we have $(k_1, \hdots, k_{2q}) \in K$ and
\begin{IEEEeqnarray*}{+rCl+x*}
c_{k_1, \hdots, k_{2q}} & = & a_{k_1} \cdots a_{k_q} b_{k_{q+1}} \cdots b_{k_{2q}} D_x \neq 0~.
\end{IEEEeqnarray*}
In the following, we will show that $c_{k_1, \hdots, k_{2q}} = 0$ for all other $(k_1, \hdots, k_{2q}) \in K$, which implies $c \neq 0$ and therefore yields that $g$ is not the zero function, which is what we want to show. If $k_i = k_j$ for some $i \neq j$, we have
\begin{IEEEeqnarray*}{+rCl+x*}
\sum_{\pi \in S_{2q}} \sgn(\pi) x_{\pi(1)}^{k_1} \cdots x_{\pi(2q)}^{k_{2q}} = \det \begin{pmatrix}
x_1^{k_1} & \hdots & x_p^{k_1} \\
\vdots & \ddots & \vdots \\
x_1^{k_p} & \hdots & x_p^{k_p}
\end{pmatrix} = 0~,
\end{IEEEeqnarray*}
since the $i$-th and $j$-th rows of the matrix are equal, and hence $c_{k_1, \hdots, k_{2q}} = 0$. Now, suppose that $(k_1, \hdots, k_{2q}) \in K$ with $c_{k_1, \hdots, k_{2q}} \neq 0$. By induction, it is easy to show that $\{k_i, k_{q+i}\} = \{2i-1, 2i-2\}$ for all $i \in [q]$. But since
\begin{IEEEeqnarray*}{+rCl+x*}
a_{k_1} \cdots a_{k_q} b_{k_{q+1}} \cdots b_{k_{2q}} \neq 0
\end{IEEEeqnarray*}
and $a_k = 0$ for even $k$, we need to have $k_i = 2i-1, k_{q+i} = 2i-2$ for all $i \in [q]$. This shows the claim.
\end{proof}
\end{proposition}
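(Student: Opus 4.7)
The plan is to mimic the strategy of \Cref{lemma:sigma_non-degenerate}, reducing the claim to the non-vanishing of a generalized Vandermonde determinant. First, \Cref{prop:cov_frk} reduces (FRK) for all $n$ to (FRK) at $n = p = 2q$. Then, by \Cref{lemma:non-degenerate}, it suffices to show that the analytic map $f: \bbR^q \to \bbR^{2q}, \bfu \mapsto (\sin(\bfu), \cos(\bfu))$ (componentwise, concatenated) is non-degenerate in the sense of \Cref{def:non-degenerate}: given independent nonatomic scalar random variables $x_1, \hdots, x_{2q}$, I would exhibit $\bfw \in \bbR^q$ such that the $2q \times 2q$ matrix $M(\bfw)$ with rows $f(\bfw x_i)^\top$ has nonzero determinant.

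The new technical point, absent from \Cref{lemma:sigma_non-degenerate}, is that each variable $w_j$ now feeds both a sine column and a cosine column, and the two activation functions have complementary parity patterns in their Taylor expansions: writing $\sin(z) = \sum_k a_k z^k$ and $\cos(z) = \sum_k b_k z^k$, one has $a_k \neq 0$ iff $k$ is odd and $b_k \neq 0$ iff $k$ is even. I would fix the $2q$ distinct non-negative integer exponents $k_i \equalDef 2i - 1$ (odd) for $i \in [q]$ and $k_{q+i} \equalDef 2i - 2$ (even) for $i \in [q]$, noting that $k_i + k_{q+i} = 4i - 3$, and then isolate the coefficient of the $\bfw$-monomial $w_1^{1} w_2^{5} \cdots w_q^{4q-3}$ in the Leibniz expansion of $\det M(\bfw)$ obtained after substituting the two power series. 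Since the $w$-degree contributed by a tuple $(k_1', \hdots, k_{2q}')$ is $\prod_j w_j^{k_j' + k_{q+j}'}$ and is independent of the permutation, the coefficient in question equals a sum, over tuples with $k_j' + k_{q+j}' = 4j - 3$, of a product of sine/cosine Taylor coefficients times a generalized Vandermonde expression $\sum_{\pi \in S_{2q}} \sgn(\pi) \prod_j x_{\pi(j)}^{k_j'}$.

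The hard part will be showing that this coefficient is nonzero. The parity constraint $a_{k_j'} b_{k_{q+j}'} \neq 0$ forces $k_j'$ odd and $k_{q+j}'$ even for each $j \in [q]$, while the Vandermonde factor vanishes as soon as two of the $k$'s coincide. A short induction on $j$ — at each step, both coordinates of the admissible odd/even pair summing to $4j - 3$ must exceed the values $0,1,\hdots,2j-3$ already used on earlier indices, forcing $k_j' \geq 2j-1$ and $k_{q+j}' \geq 2j-2$, with equality forced by the sum — should pin down the unique surviving tuple $k_j' = 2j-1, k_{q+j}' = 2j-2$. The remaining contribution is then a nonzero product of Taylor coefficients times a generalized Vandermonde determinant with $2q$ pairwise distinct exponents, which by \Cref{lemma:random_vandermonde} is almost surely nonzero. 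Having thereby exhibited a nonzero coefficient in the analytic function $\bfw \mapsto \det M(\bfw)$, \Cref{lemma:multivariate_identity_theorem} implies that this determinant is nonzero on a set of full Lebesgue measure, which yields the required non-degeneracy of $f$ and finishes the argument via \Cref{lemma:non-degenerate} and \Cref{prop:cov_frk}.
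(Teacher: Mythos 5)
Your proposal is correct and follows essentially the same approach as the paper's proof: the same reduction via \Cref{prop:cov_frk} and \Cref{lemma:non-degenerate}, the same choice of exponents $k_i = 2i-1$, $k_{q+i}=2i-2$, the isolation of the coefficient of $w_1^1 w_2^5 \cdots w_q^{4q-3}$, and the same combination of the parity constraints on the Taylor coefficients of $\sin$ and $\cos$ with the vanishing of the generalized Vandermonde when two exponents coincide to show this coefficient is the single nonzero term $a_{k_1}\cdots a_{k_q} b_{k_{q+1}}\cdots b_{k_{2q}} D_x$. Your inductive argument that the admissible tuple is unique is a slightly more explicit phrasing of the step the paper dispatches with ``by induction, it is easy to show that $\{k_i, k_{q+i}\} = \{2i-1, 2i-2\}$,'' and the conclusion via \Cref{lemma:random_vandermonde} and \Cref{lemma:multivariate_identity_theorem} matches the paper.
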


\section{Proofs for \Cref{sec:lower_bound_quality}} \label{sec:exact_computations}

In this section, we first prove the analytic formulas from \Cref{sec:lower_bound_quality} before discussing the case of low input dimension $d$.

\thmexactsphere*

\begin{proof}
\textbf{Step 1: Verify (MOM).} Let $\bfx_i \sim \calN(0, \bfI_p)$ for $i \in [n]$ be independent. Then, $\bfz_i \equalDef \frac{\bfx_i}{\|\bfx_i\|_2} \sim \calU(\bbS^{p-1})$. Since $\bbE \|\bfz_i\|_2^2 = \bbE 1 = 1$, (MOM) is satisfied and thus, $\bfSigma$ is well-defined. 

\textbf{Step 2: Compute $\bfSigma$.} We can use rotational invariance as follows: Let $\bfV \in \bbR^{p \times p}$ be an arbitrary fixed orthogonal matrix. Then, $\bfV \bfx_i \sim \calN(0, \bfV \bfV^\top) = \calN(0, \bfI_p)$ and hence $\bfV \bfz_i = \frac{\bfV \bfx_i}{\|\bfx_i\|_2} = \frac{\bfV \bfx_i}{\|\bfV \bfx_i\|_2} \sim \calU(\bbS^{p-1})$. Therefore,
\begin{IEEEeqnarray}{+rCl+x*}
\bfSigma & = & \bbE \bfz_i \bfz_i^\top = \bbE \bfV \bfz_i \bfz_i^\top \bfV^\top = \bfV \bfSigma \bfV^\top~. \label{eq:sigma_sphere}
\end{IEEEeqnarray}
If $0 \neq \bfv \in \bbR^p$ is an eigenvector of $\bfSigma$ with eigenvalue $\lambda$, then $\bfV \bfv$ must by Eq.~\eqref{eq:sigma_sphere} also be an eigenvector of $\bfSigma$ with eigenvalue $\lambda$. But since $\bfV$ is an arbitrary orthogonal matrix, this means that $\bfV \bfv$ is an arbitrary rotation of $\bfv$. From this it is easy to conclude that $\bfSigma = \lambda \bfI_p$, and from 
\begin{IEEEeqnarray*}{+rCl+x*}
p\lambda = \tr(\bfSigma) = \bbE \tr(\bfz_i \bfz_i^\top) = \bbE \bfz_i^\top \bfz_i = \bbE 1 = 1~,
\end{IEEEeqnarray*}
it follows that $\bfSigma = \frac{1}{p} \bfI_p$. Hence, (COV) is satisfied and $\bfw_i = \sqrt{p} \bfz_i$.

\textbf{Step 3: Verify (FRK) for all $n$.} By \Cref{prop:cov_frk}, it is sufficient to verify (FRK) for $n = p$. Therefore, let $n = p$. It is obvious from \Cref{prop:cov_frk} with $\phi = \id$ that $\calN(0, \bfI_p)$ satisfies (FRK). Hence, $\bfX$ almost surely has full rank. But then, since $\|\bfx_i\|_2 > 0$ almost surely,
\begin{IEEEeqnarray*}{+rCl+x*}
\bfZ & = & \diag\left(\frac{1}{\|\bfx_1\|}, \hdots, \frac{1}{\|\bfx_n\|}\right) \bfX
\end{IEEEeqnarray*}
almost surely has full rank as well, which proves (FRK).

\textbf{Step 4.1: Computation for $n \geq p = 1$.} In the underparameterized case $n \geq p = 1$, we can compute
\begin{IEEEeqnarray*}{+rCl+x*}
\bbE_{\bfZ} \tr((\bfZ^+)^\top \bfSigma \bfZ^+) & = & \bbE_{\bfZ} \tr((\bfW^\top \bfW)^{-1}) = \bbE_{\bfZ} \frac{1}{\sum_{i=1}^n w_i^2} = \bbE_{\bfZ} \frac{1}{n} = \frac{1}{n}~.
\end{IEEEeqnarray*}

\textbf{Step 4.2: Computation for $p \geq n = 1$.} In the overparameterized case $p \geq n = 1$, we can compute
\begin{IEEEeqnarray*}{+rCl+x*}
\bbE_{\bfZ} \tr((\bfZ^+)^\top \bfSigma \bfZ^+) & = & \bbE_{\bfZ} \tr((\bfW \bfW^\top)^{-1}) = \bbE_{\bfZ} \frac{1}{\bfw_1^\top \bfw_1} = \bbE_{\bfZ} \frac{1}{p} = \frac{1}{p}~,
\end{IEEEeqnarray*}
where we used that since $\bfSigma = \frac{1}{p} \bfI_p$, $\bfw_1^\top \bfw_1 = \|\bfw_1\|_2^2 = \|\sqrt{p} \bfz_1\|_2^2 = p$.

\textbf{Step 4.3: Computation for $p \geq n \geq 2$.} Now, let $p \geq n \geq 2$. Since $\bfSigma = \frac{1}{p} \bfI_p$, we have
\begin{IEEEeqnarray*}{+rCl+x*}
\bbE_{\bfZ} \tr((\bfZ^+)^\top \bfSigma \bfZ^+) & = & \bbE_{\bfZ} \tr((\bfW \bfW^\top)^{-1})
\end{IEEEeqnarray*}
by \Cref{thm:double_descent}. Using that the $\bfw_i$ are i.i.d., we obtain from \Cref{lemma:inv_dists} that $\bbE((\bfW \bfW^\top)^{-1}) = n\bbE \dist(\bfw_1, \calW_{-1})^{-2}$, where $\calW_{-1}$ is the space spanned by $\bfw_2, \hdots, \bfw_n$. Define the subspace $\calU_n \equalDef \{\bfz \in \bbR^p \mid z_n = z_{n+1} = \hdots = z_p = 0\}$. By (FRK), we almost surely have $\dim(\calW_{-1}) = n-1$. Thus, there is an orthogonal matrix $\bfU_{-1}$ depending only on $\calW_{-1}$ that rotates $\calW_{-1}$ to $\calU_n$:
\begin{IEEEeqnarray*}{+rCl+x*}
\calU_n = \bfU_{-1} \calW_{-1}~.
\end{IEEEeqnarray*} 
Because $\bfw_1$ is stochastically independent from $\calW_{-1}$ and $\bfU_{-1}$ and its distribution is rotationally symmetric, we have the distributional equivalence (using the $\bfz_i$ and $\bfx_i$ from Step 1)
\begin{IEEEeqnarray*}{+rCl+x*}
\dist(\bfw_1, \calW_{-1})^2 & = & \dist(\bfU_{-1} \bfw_1, \bfU_{-1} \calW_{-1})^2 \stackrel{\text{distrib.}}{=} \dist(\bfw_1, \calU_n)^2 = p(z_{1,n}^2 + \hdots + z_{1,p}^2) \\
& = & p\frac{x_{1,n}^2 + \hdots + x_{1,p}^2}{\|\bfx_1\|_2^2} = p\frac{A}{A + B}~,
\end{IEEEeqnarray*}
where $A \equalDef x_{1,n}^2 + x_{1,n+1}^2 + \hdots + x_{1,p}^2$ has a $\chi^2_{p+1-n}$ distribution and $B \equalDef x_{1,1}^2 + \hdots + x_{1,n-1}^2$ has a $\chi^2_{n-1}$ distribution. Hence, %
\begin{IEEEeqnarray*}{+rCl+x*}
\bbE((\bfW \bfW^\top)^{-1}) & = & n\bbE \dist(\bfw_1, \calW_{-1})^{-2} = \frac{n}{p} \left(1 + \bbE \frac{B}{A}\right)~.
\end{IEEEeqnarray*}
Since $p \geq n \geq 2$, $n-1$ and $p+1-n$ are positive. Since $A$ and $B$ are independent, $\frac{B/(n-1)}{A/(p+1-n)}$ follows a Variance-Ratio $F$-distribution with parameters $n-1$ and $p+1-n$, whose mean is known \citep[see e.g.\ Chapter 20 in][]{forbes_statistical_2011}:
\begin{IEEEeqnarray}{+rCl+x*}
\bbE \frac{B}{A} & = & \frac{n-1}{p+1-n} \bbE \frac{B/(n-1)}{A/(p+1-n)} = \begin{cases}
\infty &\text{, if $p+1-n  \leq 2$,} \\
\frac{n-1}{p+1-n} \frac{p+1-n}{(p+1-n) - 2} = \frac{n-1}{p-1-n} &\text{, if $p+1-n > 2$.}
\end{cases} \qquad \label{eq:F_expectation}
\end{IEEEeqnarray}
The infinite expectation for $p+1-n \leq 2$ is not explicitly specified in \cite{forbes_statistical_2011}, but it is easy to obtain from the p.d.f.\ of the $F$-distribution: The p.d.f. $f(x)$ of the $F(a, b)$-distribution for $x \geq 0$ is
\begin{IEEEeqnarray*}{+rCl+x*}
f(x) = C_{a, b} \frac{x^{(a-2)/2}}{(1 + (a/b)x)^{(a+b)/2}} = \Theta(x^{-b/2-1}) \qquad (x \to \infty)
\end{IEEEeqnarray*}
for some constant $C_{a, b}$ \citep[cf.\ Chapter 20 in][]{forbes_statistical_2011}, and the expected value is therefore
\begin{IEEEeqnarray*}{+rCl+x*}
\int_0^\infty x f(x) \diff x & = & \int_0^\infty \Theta(x^{-b/2}) \diff x~,
\end{IEEEeqnarray*}
which is infinite for $p+1-n = b \leq 2$.
For $n \in \{p, p-1\}$, we therefore obtain $\bbE((\bfW \bfW^\top)^{-1}) = \infty$. For $n \leq p-2$, we compute
\begin{IEEEeqnarray*}{+rCl+x*}
\bbE((\bfW \bfW^\top)^{-1}) & = & \frac{n}{p} \left(1 + \frac{n-1}{p-1-n}\right) = \frac{n}{p} \cdot \frac{p-2}{p-1-n}~. & \qedhere
\end{IEEEeqnarray*}
\end{proof}

In the following, we will prove \Cref{thm:exact_gaussian} using the same proof idea as for \Cref{thm:exact_sphere}. The formulas in \Cref{thm:exact_gaussian} have in principle already been computed by \cite{breiman_how_1983} for $p \leq n-2$ and by \cite{belkin_two_2020} for general $p$. However, our proof circumvents a technical problem in the proof of \cite{belkin_two_2020}: Consider for example the case $p \leq n$. \cite{belkin_two_2020} mention that $(\bfW^\top \bfW)^{-1}$ has an inverse Wishart distribution, which for $p \leq n-2$ has expectation $\frac{1}{n-1-p} \bfI_p$, and then use $\bbE \tr((\bfW^\top \bfW)^{-1}) = \tr(\bbE (\bfW^\top \bfW)^{-1})$. However, for $p \geq n-1$, the latter expectation is not specified in common literature\footnote{\cite{belkin_two_2020} do not cite any source on the inverse Wishart distribution.} on the inverse Wishart distribution \citep{mardia_multivariate_1979, press_applied_2005, von_rosen_moments_1988}, presumably because it is $\infty$ for diagonal elements but is not well-defined for off-diagonal matrix elements.

\thmexactgaussian*

\begin{proof}
\textbf{Step 1: Assumptions.} Verifying (MOM), (COV) and $\bfSigma = \bfI_p$ is trivial and (FRK) for all $n$ follows from \Cref{prop:cov_frk} with $\bfx = \bfz$ and $\phi = \id$.

\textbf{Step 2: Overparameterized case.} For the expectation, we first follow Step 4.3 in the proof of \Cref{thm:exact_sphere} in the overparameterized case $p \geq n \geq 1$, the main difference being that instead of $\bfw_i = \sqrt{p}\frac{\bfx_i}{\|\bfx_i\|_2}$, we now have $\bfw_i = \bfx_i$, which translates to the simpler equation
\begin{IEEEeqnarray*}{+rCl+x*}
\dist(\bfw_1, \calW_{-1})^2 & \stackrel{\text{distrib.}}{=} & A
\end{IEEEeqnarray*}
with $A \sim \chi_{p+1-n}^2$. Let $B \sim \chi_1^2$ be independent of $A$, then we can compute similar to Eq.~\eqref{eq:F_expectation}
\begin{IEEEeqnarray*}{+rCl+x*}
\bbE\tr((\bfW \bfW^\top)^{-1}) & = & n\bbE \dist(\bfw_1, \calW_{-1})^{-2} = n\bbE \frac{1}{A} = n\left(\bbE B\right)\left(\bbE \frac{1}{A}\right) = n\bbE \frac{B}{A} \\
& = & \frac{n}{p+1-n} \bbE \frac{B/1}{A/(p+1-n)} \\
& = & \begin{cases}
\infty &\text{if $p+1-n  \leq 2$,} \\
\frac{n}{p+1-n} \frac{p+1-n}{(p+1-n) - 2} = \frac{n}{p-1-n} &\text{if $p+1-n > 2$.}
\end{cases}
\end{IEEEeqnarray*}
This proves the over-parameterized case.

\textbf{Step 3: Underparameterized case.} Since the rows $\bfw_i$ of $\bfW \in \bbR^{n \times p}$ are independent and follow a $\calN(0, \bfI_p)$ distribution, the rows of $\bfW^\top \in \bbR^{p \times n}$ are independent and follow a $\calN(0, \bfI_n)$ distribution. Therefore, the underparameterized case $p \leq n$ follows from the overparameterized case $n \leq p$ by switching the roles of $n$ and $p$.
\end{proof}

\begin{remark}
An alternative (and presumably similar) way to prove \Cref{thm:exact_gaussian} is to use that the diagonal elements of a matrix with an inverse Wishart distribution follow an inverse Gamma distribution as specified in Example 5.2.2 in \cite{press_applied_2005}.
\end{remark}

The next proposition shows that a small input dimension $d$ does not necessarily provide a limitation:

\begin{proposition} \label{prop:sfc}
Let $p, d \geq 1$. Then, there exists a probability distribution $P_X$ on $\bbR^d$ (with bounded support) and a continuous feature map $\phi: \bbR^d \to \bbR^p$ such that for $\bfx \sim P_X$, $\phi(\bfx) \sim \calU(\bbS^{p-1})$.
\end{proposition}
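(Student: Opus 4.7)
The plan is to reduce to the case $d=1$ and then use a space-filling curve together with a Borel measurable section to pull back the uniform distribution on $\bbS^{p-1}$.

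\medskip

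First I would handle the dimension reduction. Given a construction for $d=1$, say a bounded-support distribution $P_{X_0}$ on $\bbR$ and a continuous $\phi_0 : \bbR \to \bbR^p$ with $(\phi_0)_* P_{X_0} = \calU(\bbS^{p-1})$, I would define
\begin{IEEEeqnarray*}{+rCl+x*}
\phi : \bbR^d \to \bbR^p, \quad \phi(\bfx) \equalDef \phi_0(x_1),
\end{IEEEeqnarray*}
and take $P_X \equalDef P_{X_0} \otimes \delta_0 \otimes \cdots \otimes \delta_0$. This is continuous and $P_X$ has bounded support whenever $P_{X_0}$ does. So the task reduces to $d=1$.

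\medskip

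Next I would produce a continuous surjection $\gamma : K \to \bbS^{p-1}$ for some compact $K \subseteq \bbR$. For $p=1$ we just take $K = \{-1,1\}$ with $\gamma = \id$. For $p \geq 2$ we take $K = [0,1]$ and use the Hahn--Mazurkiewicz theorem (or, more explicitly, a Peano-type space-filling curve $[0,1] \twoheadrightarrow [0,1]^p$ composed with the radial projection $\bfy \mapsto \bfy/\|\bfy\|$ restricted away from $0$, and some rescaling): $\bbS^{p-1}$ is a compact, connected, locally connected metric space, so such a continuous surjection exists. I would then extend $\gamma$ continuously from $K$ to all of $\bbR$ in any simple way (e.g.\ constant extrapolation), calling the extension $\phi_0$; the pushforward only depends on the values of $\phi_0$ on the support of $P_{X_0} \subseteq K$.

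\medskip

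The conceptual heart of the argument is then producing $P_{X_0}$. I would construct a Borel measurable section $s : \bbS^{p-1} \to K$ of $\gamma$, i.e.\ a Borel map with $\gamma \circ s = \id_{\bbS^{p-1}}$, and set $P_{X_0} \equalDef s_* \calU(\bbS^{p-1})$. Then
\begin{IEEEeqnarray*}{+rCl+x*}
(\phi_0)_* P_{X_0} = \gamma_* s_* \calU(\bbS^{p-1}) = (\gamma \circ s)_* \calU(\bbS^{p-1}) = \calU(\bbS^{p-1}),
\end{IEEEeqnarray*}
as desired, and $\supp P_{X_0} \subseteq K$ is bounded. To construct $s$ concretely (avoiding heavy selection theorems), for $p\geq 2$ I would define $s(\bfomega) \equalDef \min \gamma^{-1}(\{\bfomega\})$; this minimum exists because $\gamma^{-1}(\{\bfomega\}) \subseteq [0,1]$ is non-empty and compact. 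Measurability follows from
\begin{IEEEeqnarray*}{+rCl+x*}
\{\bfomega \in \bbS^{p-1} : s(\bfomega) \leq t\} = \gamma([0,t]),
\end{IEEEeqnarray*}
which is compact, hence Borel, for every $t \in [0,1]$. The $p=1$ case is immediate with $s = \id$.

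\medskip

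The main obstacle I anticipate is making the existence of the continuous surjection $\gamma$ airtight without a long digression into space-filling curves; invoking Hahn--Mazurkiewicz (or simply citing the classical Peano curve construction) should suffice. Everything else — the Borel section via the $\min$ trick, the pushforward calculation, and the lift from $d=1$ to general $d$ — is routine.
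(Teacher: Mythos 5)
Your proof is correct, and it takes a genuinely different route from the paper in the step where the pullback measure is produced. Both arguments reduce to constructing a continuous surjection from a compact interval onto $\bbS^{p-1}$ and then finding a distribution on the interval whose pushforward is uniform on the sphere. For the surjection, the paper builds it very explicitly — a Hilbert/Peano curve onto $[0,1]^{p-1}$, followed by a map onto $\bbB^{p-1}$, then onto the upper hemisphere $\bbS_+^{p-1}$, and finally a path-and-reflection argument to cover the whole sphere — whereas you invoke Hahn–Mazurkiewicz, which is cleaner but less constructive. (Your parenthetical Peano-curve-plus-radial-projection alternative needs some care, since $[0,1]^p$ meets the origin where the projection is discontinuous; using the boundary of a cube centered at the origin, or just sticking with Hahn–Mazurkiewicz, avoids this.) The more substantive divergence is in the measure-theoretic step: the paper invokes a general result (Theorem 9.1.5 in Bogachev, requiring the Souslin-space machinery) to assert the existence of a measure $P_X$ with $\phi_*P_X = \calU(\bbS^{p-1})$, while you construct a Borel section $s(\bfomega) = \min\gamma^{-1}(\{\bfomega\})$ explicitly and verify its measurability by the clean identity $\{\bfomega : s(\bfomega)\leq t\} = \gamma([0,t])$, a compact hence Borel set. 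Taking $P_{X_0} = s_*\calU(\bbS^{p-1})$ then gives the pullback immediately and elementarily. Your approach buys a more self-contained argument that avoids heavy descriptive-set-theoretic input, at the cost of being slightly less uniform across settings than the general Bogachev theorem.
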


\begin{proof}
For $p = 1$, the result is trivial, we will therefore assume $p \geq 2$. We will prove the result for any $d$ by a reduction to the case $d=1$, although substantially simpler constructions are possible for $d \geq p-1$. First, introduce the spaces
\begin{IEEEeqnarray*}{+rCl+x*}
\bbS_+^{p-1} & \equalDef & \{\bfz \in \bbS^{p-1} \mid z_p \geq 0\} \subseteq \bbR^p \\
\bbB^{p-1} & \equalDef & \{\bfz \in \bbR^{p-1} \mid \|\bfz\|_2 \leq 1\} \\
\calX & \equalDef & [0, 3] \times \{0\}^{d-1} \subseteq \bbR^d~.
\end{IEEEeqnarray*}

\textbf{Step 1: Space-filling curve on the sphere.} In this step, we show that there exists a continuous surjective map $\phi: \calX \to \bbS^{p-1}$. First of all, let $f_1: [0, 1] \to [0, 1]^{p-1}$ be continuous and surjective, e.g.\ a Hilbert or Peano curve \citep[see e.g.][]{sagan_space-filling_2012}. We define the following maps:
\begin{IEEEeqnarray*}{+rCl+x*}
f_2 &:& [0, 1]^{p-1} \to \bbB^{p-1}, \bfu \mapsto \begin{cases}
\bfzero &\text{if } \bfu = \bfzero \\
\frac{\|\bfu\|_\infty}{\|\bfu\|_2} \bfu &\text{if } \bfu \neq \bfzero~,
\end{cases} \\
f_3 &:& \bbB^{p-1} \to \bbS_+^{p-1}, \bfv \mapsto \left(\bfv, \sqrt{1-\|\bfv\|_2^2}\right)~.
\end{IEEEeqnarray*}
It is not hard to verify that $f_2$ and $f_3$ are continuous and surjective as well. For example, $f_2$ is continuous in $\bfzero$ since $\|\bfu\|_\infty \leq \|\bfu\|_2$ for all $\bfu \in \bbR^{p-1}$. Thus, the map $f \equalDef f_3 \circ f_2 \circ f_1: [0, 1] \to \bbS_+^{p-1}$ is continuous and surjective. Define the map
\begin{IEEEeqnarray*}{+rCl+x*}
\tau: \bbR^p \to \bbR^p, \bfz \mapsto (z_1, \hdots, z_{p-1}, -z_p)~.
\end{IEEEeqnarray*}
Since we assumed $p \geq 2$ at the beginning of the proof, the sphere $\bbS^{p-1}$ is path-connected, hence there exists a path $h: [0, 1] \to \bbS^{p-1}$ with $h(0) = f(1), h(1) = \tau(f(1))$. By the previous considerations, it is not hard to verify that the map
\begin{IEEEeqnarray*}{+rCl+x*}
g: [0, 3] \to \bbS^{p-1}, x \mapsto \begin{cases}
f(x) &\text{if } x \in [0, 1] \\
h(x-1) &\text{if } x \in [1, 2] \\
\tau(f(3-x)) &\text{if } x \in [2, 3]
\end{cases}
\end{IEEEeqnarray*}
is continuous and surjective. We can therefore define the continuous and surjective map $\phi: \calX \to \bbS^{p-1}, \bfx \mapsto g(x_1)$.

\textbf{Step 2: Existence of a pull-back measure.} We consider the Borel $\sigma$-algebras $\calB(\calX), \calB(\bbS^{p-1})$ on $\calX$ and $\bbS^{p-1}$. The uniform distribution $P_Z = \calU(\bbS^{p-1})$ on the sphere is defined with respect to $\calB(\bbS^{p-1})$ and is therefore a Borel measure. Since $\phi$ is continuous, it is Borel measurable. Moreover, since $\calX$ and $\bbS^{p-1}$ are complete separable metric spaces, they are also Souslin spaces, cf.\ Section 6.6 in \cite{bogachev_measure_2007}. Since $\phi$ is surjective, Theorem 9.1.5 in \cite{bogachev_measure_2007} guarantees the existence of a measure $P_X$ such that if $\bfx \sim P_X$, then $\phi(\bfx) \sim \calU(\bbS^{p-1})$. Since $P_X(\calX) = P_Z(\bbS^{p-1}) = 1$, $P_X$ is a probability measure.

\textbf{Step 3: Continuation.} We can arbitrarily extend the mapping $\phi: \calX \to \bbS^{p-1}$ to a continuous mapping $\phi: \bbR^d \to \bbR^p$. Moreover, the domain $\calX$ of $P_X$ can be extended to $\bbR^d$ via $P_X(A) \equalDef P_X(A \cap \calX)$, the support of $P_X$ is still bounded, and we still have $\phi(\bfx) \sim \calU(\bbS^{p-1})$ if $\bfx \sim P_X$.
\end{proof}

\begin{remark}
The proof of \Cref{prop:sfc} could be slightly shorter if we required $\phi(\bfx) \sim \calU(\bbS_+^{p-1})$ instead of $\phi(\bfx) \sim \calU(\bbS^{p-1})$. This would be of similar interest since the uniform distribution $\calU(\bbS_+^{p-1})$ on the \quot{half-sphere} leads to the same $\bbE_{\bfZ} \tr((\bfZ^+)^\top \bfSigma (\bfZ^+))$ as the uniform distribution $\calU(\bbS^{p-1})$ on the full sphere: If $\bfz_i \sim \calU(\bbS_+^{p-1})$ and $\varepsilon_i \sim \calU(\{-1, 1\})$ are stochastically independent, then $\tilde{\bfz}_i \equalDef \varepsilon_i \bfz_i \sim \calU(\bbS^{p-1})$. Therefore, $\bfSigma = \tilde{\bfSigma}$, $\tilde{\bfZ} = \diag(\varepsilon_1, \hdots, \varepsilon_n) \bfZ$, and if $\bfU \bfD \bfV^\top$ is a SVD of $\bfZ$, then $(\diag(\varepsilon_1, \hdots, \varepsilon_n) \bfU) \bfD \bfV^\top$ is a SVD of $\tilde{\bfZ}$. Therefore, $\bfZ$ and $\tilde{\bfZ}$ have the same singular values, hence $\bfW$ and $\tilde{\bfW}$ have the same singular values, hence $\tr((\bfW \bfW^\top)^{-1}) = \tr((\tilde{\bfW} \tilde{\bfW}^\top)^{-1})$ for $p \geq n$ and $\tr((\bfW^\top \bfW)^{-1}) = \tr((\tilde{\bfW}^\top \tilde{\bfW})^{-1})$ for $p \leq n$.
\end{remark}

\begin{remark}
One might ask whether it is possible in \Cref{prop:sfc} to choose $P_X$ as a \quot{nice} distribution, like a uniform distribution on a cube or a Gaussian distribution. The answer to this question is affirmative if there exists an area-preserving space-filling curve $\phi: [0, \mathrm{volume}(\bbS^{p-1})] \to \bbS^{p-1}$. For $p=3$, such a construction is informally described by \cite{purser_construction_2009} and it seems plausible that such a construction is possible for all $p$.
\end{remark}

\section{Relation to Ridgeless Kernel Regression} \label{sec:ridgeless_kernel_regression}

In this section, we want to discuss the relation between this paper and recent work on ridgeless kernel regression. To this end, we need to introduce some terminology on representations of kernels with finite-dimensional feature maps.

\begin{definition}
Let $k: \bbR^d \times \bbR^d \to \bbR$ be a kernel, let $p$ be an integer with $1 \leq p < \infty$ and let $\phi: \bbR^d \to \bbR^p$ be a (measurable) function. Then, $(p, \phi)$ is called a $P_X$-representation of $k$ if
\begin{itemize}
\item $k(\bfx, \tilde{\bfx}) = \phi(\bfx)^\top \phi(\tilde{\bfx})$ almost surely for independent $\bfx, \tilde{\bfx} \sim P_X$, and
\item $k(\bfx, \bfx) = \phi(\bfx)^\top \phi(\bfx)$ almost surely for $\bfx \sim P_X$.
\end{itemize}

If $k$ has a $P_X$-representation, then we define
\begin{IEEEeqnarray*}{+rCl+x*}
p_k \equalDef \min_{(p, \phi) \text{ is a $P_X$-representation of $k$}} p~,
\end{IEEEeqnarray*}
i.e.\ $p_k$ is the smallest $p$ for which a $P_X$-representation exists.
\end{definition}

Usually, $p_k$ corresponds to the dimension of the RKHS associated with the restriction of $k$ to the support of $P_X$, but since the feature map $\phi$ only needs to represent the kernel $P_X$-almost surely, $p_k$ may be smaller for pathological kernels. The following lemma states that \Cref{thm:double_descent}, if applicable, should be applied to ridgeless kernel regression with $p = p_k$:

\begin{lemma} \label{lemma:effective_kernel_p}
Let $k$ be a kernel on $\bbR^d$ with $P_X(k(\bfx, \bfx) \neq 0) > 0$. Let $(p, \phi)$ be a $P_X$-representation of $k$.
\begin{enumerate}[(a)]
\item Then, (COV) in \Cref{thm:double_descent} is satisfied iff $p = p_k$.
\item The assumptions of \Cref{thm:double_descent} are satisfied for a $P_X$-representation $(p_k, \phi)$ of $k$ if and only if they are satisfied for all such representations.
\item If the assumptions of \Cref{thm:double_descent} are satisfied for any $P_X$-representation of $k$, then the lower bound from \Cref{thm:double_descent} also holds for ridgeless kernel regression with $p = p_k$.
\end{enumerate}

\begin{proof}
In the notation of \Cref{sec:lin_reg}, the definition of $P_X$-representation implies that $k(\bfX, \bfX) = \phi(\bfX) \phi(\bfX)^\top$ and $k(\bfx, \bfX) = \phi(\bfx)^\top \phi(\bfX)^\top$ almost surely.
\begin{enumerate}[(a)]
\item If (COV) is not satisfied and $p \geq 2$, it is possible to construct a $P_X$-representation with smaller $p$ using the construction from \Cref{rem:fm_dimension_reduction}, hence $p > p_k$. If $p = 1$, (COV) is satisfied due to the assumption on $k$.

Conversely, assume $p > p_k$ and let $(p_k, \tilde{\phi})$ be another $P_X$-representation of $k$. Set $n = p$. Then, we almost surely have
\begin{IEEEeqnarray*}{+rCl+x*}
\rank \phi(\bfX) & = & \rank(\phi(\bfX) \phi(\bfX)^\top) = \rank(k(\bfX, \bfX)) = \rank(\tilde{\phi}(\bfX) \tilde{\phi}(\bfX)^\top) \leq p_k \\
& < & p~,
\end{IEEEeqnarray*}
hence $\phi(\bfX)$ has full rank with probability zero. Since the rows $\phi(\bfx_i)$ of $\phi(\bfX)$ are i.i.d., this means that there must be a proper linear subspace $U$ of $\bbR^p$ such that $\phi(\bfx_i) \in U$ with probability one. But then, according to \Cref{prop:cov_frk}, (COV) is not satisfied.
\item Let $(p_k, \phi)$ and $(p_k, \tilde{\phi})$ be two $P_X$-representations of $k$ such that $\bfz = \phi(\bfx)$ satisfies the assumptions of \Cref{thm:double_descent}. We need to show that $\tilde{\bfz} = \tilde{\phi}(\bfx)$ also satisfies the assumptions of \Cref{thm:double_descent}: First of all, (INT) and (NOI) hold since they are independent of the feature map. Moreover, (COV) holds by (a). We find that (MOM) holds due to
\begin{IEEEeqnarray*}{+rCl+x*}
\bbE \|\tilde{\bfz}\|_2^2 = \bbE \tilde{\phi}(\bfx)^\top \tilde{\phi}(\bfx) = \bbE k(\bfx, \bfx) = \bbE \phi(\bfx)^\top \phi(\bfx) = \bbE \|\bfz\|_2^2 < \infty
\end{IEEEeqnarray*}
and (FRK) holds since, almost surely,
\begin{IEEEeqnarray*}{+rCl+x*}
\rank \tilde{\phi}(\bfX) & = & \rank(\tilde{\phi}(\bfX) \tilde{\phi}(\bfX)^\top) = \rank(k(\bfX, \bfX)) = \rank(\phi(\bfX) \phi(\bfX)^\top) \\
& = & \rank \phi(\bfX) = \min\{n, p\}~.
\end{IEEEeqnarray*}

\item Assume that there exists a $P_X$-representation $(p, \phi)$ of $k$ that satisfies the assumptions of \Cref{thm:double_descent}. By (a), we have $p = p_k$. By Eq.~\eqref{eq:kernel_trick} in \Cref{sec:lin_reg}, the ridgeless kernel regression estimator and the linear regression estimator with the feature map $\phi$ are almost surely equivalent, hence they have the same $\Enoise$. \qedhere
\end{enumerate}
\end{proof}
\end{lemma}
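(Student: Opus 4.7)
The plan is to handle the three parts in order, with (a) doing the real work and (b), (c) following mostly mechanically. The unifying observation that makes everything go through is the Gram-matrix identity: since $\phi$ is a $P_X$-representation, we have $\phi(\bfX)\phi(\bfX)^\top = k(\bfX,\bfX)$ almost surely, and therefore $\rank \phi(\bfX) = \rank k(\bfX,\bfX)$ is a function of $k$ alone, independent of the particular representation.

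For part (a), I would argue both directions contrapositively. To show (COV) implies $p = p_k$, suppose $p > p_k$ and pick any $P_X$-representation $(p_k,\tilde\phi)$. Taking $n = p$, the Gram identity applied to both representations gives
\begin{IEEEeqnarray*}{+rCl+x*}
\rank \phi(\bfX) \;=\; \rank k(\bfX,\bfX) \;=\; \rank \tilde\phi(\bfX) \;\leq\; p_k \;<\; p \quad \text{a.s.,}
\end{IEEEeqnarray*}
so $\phi(\bfX) \in \bbR^{p \times p}$ never has full rank. Since the rows of $\phi(\bfX)$ are i.i.d., \Cref{prop:cov_frk}(i) then forces $\phi(\bfx)$ to lie in a common $(p-1)$-dimensional subspace a.s., and \Cref{prop:cov_frk}(ii) rules out (COV). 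Conversely, if (COV) fails, I would dispose of $p = 1$ by noting that the hypothesis $P_X(k(\bfx,\bfx)\neq 0) > 0$ gives $\bfSigma = \bbE \phi(\bfx)^2 > 0$, contradicting the failure of (COV); so $p \geq 2$. Then \Cref{prop:cov_frk}(ii) yields a proper subspace $U \subsetneq \bbR^p$ containing $\phi(\bfx)$ a.s., and composing $\phi$ with an isometric isomorphism $U \to \bbR^{\dim U}$ exactly as in \Cref{rem:fm_dimension_reduction} produces a $P_X$-representation of strictly smaller dimension, witnessing $p_k < p$.

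For part (b), I would walk through each of the five assumptions for two representations $(p_k,\phi),(p_k,\tilde\phi)$. Assumptions (INT) and (NOI) are properties of $P$ alone and thus transfer trivially; (COV) transfers by part (a); (MOM) transfers because $\|\phi(\bfx)\|_2^2 = k(\bfx,\bfx) = \|\tilde\phi(\bfx)\|_2^2$ a.s., so the two second-moment conditions coincide; and (FRK) transfers because the Gram identity yields $\rank \phi(\bfX) = \rank \tilde\phi(\bfX)$ a.s. for every $n$. Part (c) is then immediate: any representation satisfying the assumptions has $p = p_k$ by (a), and the kernel-trick identity \eqref{eq:kernel_trick} in \Cref{sec:lin_reg} equates the ridgeless linear-regression estimator $f_{\bfX,\bfy}$ with the ridgeless kernel regression estimator $k(\cdot,\bfX) k(\bfX,\bfX)^+ \bfy$ almost surely, so both have identical $\Enoise$, and the lower bound of \Cref{thm:double_descent} transfers verbatim with $p = p_k$.

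The main obstacle is a minor one: remembering to invoke the hypothesis $P_X(k(\bfx,\bfx) \neq 0) > 0$ in the $p = 1$ case of the "if" direction of (a), since the dimension-reduction trick of \Cref{rem:fm_dimension_reduction} needs a nontrivial proper subspace and hence $p \geq 2$. Beyond that edge case, the entire proof is really just a bookkeeping exercise built on the one observation that Gram matrices determine ranks, so that all the quantities appearing in the assumptions of \Cref{thm:double_descent} are intrinsic to $k$ once the minimal representation dimension has been fixed.
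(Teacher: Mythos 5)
Your proof is correct and follows essentially the same route as the paper's: the Gram-matrix rank identity drives part (a) (you simply present the two directions in the opposite order), (b) checks the five assumptions one by one in exactly the same way, and (c) appeals to Eq.~\eqref{eq:kernel_trick} and part (a) just as the paper does. The $p=1$ edge case via $P_X(k(\bfx,\bfx)\neq 0)>0$ matches the paper's reasoning as well.
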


For kernels that cannot be represented with a finite-dimensional feature space, \Cref{thm:double_descent} cannot be applied. In fact, any distribution-independent lower bound for ridgeless kernel regression must be zero in this case: For example, the Kronecker delta kernel given by
\begin{IEEEeqnarray*}{+rCl+x*}
k(\bfx, \tilde{\bfx}) & = & \begin{cases}
1 &\text{if $\bfx = \tilde{\bfx}$} \\
0 &\text{otherwise}
\end{cases}
\end{IEEEeqnarray*}
yields $\Enoise = 0$ for any nonatomic input distribution $P_X$. Of course, this kernel is not well-suited for learning since the learned functions are zero almost everywhere. However, there exist results for ridgeless kernel regression with specific classes of kernels. For example, \cite{rakhlin_consistency_2019} show that in certain settings, ridgeless kernel regression with Laplace kernels is inconsistent because $\Enoise = \Omega(1)$ as $n \to \infty$. Note that Laplace kernels in general do not allow for finite-dimensional feature map representations.

\cite{liang_just_2020} derive upper bounds for a certain class of kernels and input distributions with (linearly transformed) i.i.d. components. Their analysis focuses on the high-dimensional limit $d, n \to \infty$ with $0 < c \leq d/n \leq C < \infty$ and ignores the \quot{effective dimension} $p_k$ of the feature space. It appears that their analysis is not impacted by Double Descent w.r.t.\ $p_k$ since their assumptions on the kernel imply either $p_k = \infty$ or $p_k/n \to \infty$ as $n, d \to \infty$: In particular, they consider kernels of the form
\begin{IEEEeqnarray*}{+rCl+x*}
k(\bfx, \tilde{\bfx}) & = & h\left(\frac{1}{d} \langle \bfx, \tilde{\bfx} \rangle\right)
\end{IEEEeqnarray*}
for a suitable smooth function $h$ that is independent of $d$. Due to the factor $\frac{1}{d}$ and the limit $d \to \infty$, the kernel behaves essentially like a quadratic kernel
\begin{IEEEeqnarray*}{+rCl+x*}
k(\bfx, \tilde{\bfx}) & \approx & a_0 + a_1 \frac{1}{d} \langle \bfx, \tilde{\bfx} \rangle + a_2 \left(\frac{1}{d} \langle \bfx, \tilde{\bfx} \rangle\right)^2 \defEqual k_{\mathrm{quad}}(\bfx, \tilde{\bfx})~,
\end{IEEEeqnarray*}
where the curvature $a_2$ should be positive in order to obtain good upper bounds on $\Enoise$ (the variance term). For $a_2, a_1, a_0 > 0$, it is possible to represent this quadratic kernel with a feature map analogous to that of the polynomial kernel in \Cref{prop:poly_kernel} with feature space dimension $p = 1 + d + \frac{d(d+1)}{2}$. An argument similar to the proof of \Cref{prop:poly_kernel} shows that this feature map satisfies $\mathrm{FRK}(p)$ if $\bfx$ has a Lebesgue density, hence (COV) is satisfied. By \Cref{lemma:effective_kernel_p} (a), we have $p_{k_{\mathrm{quad}}} = p = \Theta(d^2) = \Theta(n^2)$, which shows $p_{k_{\mathrm{quad}}}/n \to \infty$ as $n, d \to \infty$.

\cite{liang_multiple_2020} consider a similar setting with $d \sim n^\alpha, \alpha \in (0, 1)$, and find that $\Enoise$ converges to zero under suitable assumptions as $d, n \to \infty$. Again, it appears that their assumptions on the kernel imply at least a strongly overparameterized regime with $p_k=\infty$ or $p_k/n \to \infty$ for $d, n \to \infty$, where our lower bound is vacuous.

\section{Novelty of the Overparameterized Bound} \label{sec:muthukumar}

In their Corollary 1, \cite{muthukumar_harmless_2020} provide a lower bound in the case $p \geq n$ holding with high probability for $\bfvarepsilon^\top (\bfW \bfW^\top)^{-1} \bfvarepsilon$, where $\bfvarepsilon \sim \calN(0, \bfI_p)$ is a noise vector independent of $\bfW$. Since $\bbE \bfvarepsilon^\top (\bfW \bfW^\top)^{-1} \bfvarepsilon = \bbE_{\bfZ} \tr((\bfW \bfW^\top)^{-1} \bbE_{\bfvarepsilon} \bfvarepsilon \bfvarepsilon^\top) = \bbE_{\bfZ} \tr((\bfW \bfW^\top)^{-1})$, their lower bound yields a lower bound for $\bbE \tr((\bfW \bfW^\top)^{-1})$. However, the resulting lower bound is weaker than ours and requires stronger assumptions:
\begin{enumerate}[(1)]
\item Assuming that the subgaussian norm $\|\bfw_i\|_{\psi_2} \equalDef \sup_{\bfv \in \bbS^{p-1}} \sup_{q \in \bbN_+} q^{-1/2} (\bbE |\bfv^\top \bfw_i|^q)^{1/q}$ \citep[cf.][]{vershynin_introduction_2010} is bounded by a constant $K < \infty$, they obtain a lower bound of the form $c_K \sigma^2 \frac{n}{p}$ with a constant $c_K > 0$ that depends on $K$ and is only explicitly specified for the case of centered Gaussian $P_Z$. They note that $\|\bfw_i\|_{\psi_2} \leq K$ holds, for example, if the components $w_{i, j}$ of $\bfw_i$ are independent and all satisfy $\|w_{i, j}\|_{\psi_2} \leq K$. However, as discussed in \Cref{rem:fm_distributions}, such independence assumptions are not realistic. In contrast, our lower bound is explicit, independent of constants like $K$ and is larger: For example, at $n=p$, our lower bound is $\sigma^2 n$ and theirs is $\sigma^2 c_K$. 
\item Assuming $\|\bfw_i\|_2^2 \leq p$ almost surely, they obtain a lower bound of the form $c \sigma^2 \frac{n}{p\log(n)}$. First of all, this lower bound converges to zero as $n=p \to \infty$. Moreover, since we always have $\bbE \|\bfw_i\|_2^2 = \bbE \tr(\bfw_i \bfw_i^\top) = \bbE \tr(\bfI_p) = p$, the assumption implies $\|\bfw_i\|_2^2 = p$ almost surely. Although we can sometimes guarantee constant $\|\bfz_i\|_2^2$, e.g.\ for certain random Fourier features, we cannot guarantee the same for $\bfw_i = \bfSigma^{-1/2} \bfz_i$ since $\bfSigma$ depends on the unknown input distribution $P_X$. %
\end{enumerate}

By inspecting the proof behind (1), one finds that $c_K \to 0$ as $K \to \infty$. Hence, lower bound of \cite{muthukumar_harmless_2020} might raise hope that it is possible to achieve low $\Enoise$ by choosing features with a large (or even infinite) subgaussian norm.
Our result shows that this is not possible: Essentially the only possibility to avoid a large $\Enoise$ for ridgeless linear regression around $n \approx p$ is to violate the property (FRK) that guarantees the ability to interpolate the data in the overparameterized case, see \Cref{sec:assumptions}. Otherwise, in order to achieve $\Enoise < \varepsilon \sigma^2$, $\varepsilon \ll 1$, it is necessary to make the model either strongly underparameterized $(p < \varepsilon n)$ or strongly overparameterized $(p > n/\varepsilon)$.
\end{appendixenv}

\end{document}